\pgfplotsset{compat=newest}
\pgfplotsset{scaled y ticks=false}
\theoremstyle{plain}
\newtheorem{theorem}{Theorem}[section]
\newtheorem{lemma}[theorem]{Lemma}
\newtheorem{corollary}[theorem]{Corollary}
\newtheorem{conjecture}[theorem]{Conjecture}
\newtheorem{claim}[theorem]{Claim}
\theoremstyle{definition}
\theoremstyle{remarwk}
\newcommand{\extr}{\textrm{\textbf{extr}}}
\DeclareMathOperator{\variance}{Var}
\def\dd{\text{d}}
\def\bo{{\rm bo}}
\def\erm{{\rm erm}}
\def\amp{{\rm amp}}
\def\empbayes{{\rm eb}}
\def\lap{{\rm Lap}}
\def\prob{\nu}
\def\sign{{\rm sign}}
\def\noisestr{\tau_{0}}
\def\noisevar{\noisestr^2}
\def\addnoisestr{\tau_{{\rm add}}}
\def\addnoisevar{\addnoisestr^2}
\def\lambdaerror{\lambda_{{\rm error}}}
\def\lambdaloss{\lambda_{{\rm loss}}}
\def\lambdaevidence{\lambda_{{\rm evidence}}}
\def\mat#1{\text{#1}}
\renewcommand{\vec}[1]{\bm{#1}}
\DeclareMathOperator{\Tr}{Tr}
\DeclareMathOperator{\ECE}{ECE}
\DeclareMathOperator{\prox}{prox}
\DeclareMathOperator*{\argmin}{arg\,min}
\newcommand{\projectedteacher}{\vec{w}_{\star}}
\newcommand{\studentweight}{\vec{\theta}}
\newcommand{\eigfunction}{\hat{\pi}}
\begin{document}

\title{On double-descent in uncertainty quantification in overparametrized models}

\author[1]{Lucas Clart\'e}
\author[2,3]{Bruno Loureiro}
\author[3]{\\Florent Krzakala}
\author[1]{Lenka Zdeborov\'a}

\affil[1]{
\'Ecole Polytechnique F\'ed\'erale de Lausanne (EPFL)\\
Statistical Physics of Computation lab.\\
CH-1015 Lausanne, Switzerland
}
\affil[2]{
D\'epartement d'Informatique, \'Ecole Normale Sup\'erieure - PSL \& CNRS, 45 rue d’Ulm, F-75230 Paris cedex 05, France
}
\affil[3]{
\'Ecole Polytechnique F\'ed\'erale de Lausanne (EPFL)\\
Information, Learning and Physics lab.\\
CH-1015 Lausanne, Switzerland
}

\date{}

\maketitle

\begin{abstract}
Uncertainty quantification is a central challenge in reliable and trustworthy machine learning. Naive measures such as last-layer scores are well-known to yield overconfident estimates in the context of overparametrized neural networks. Several methods, ranging from temperature scaling to different Bayesian treatments of neural networks, have been proposed to mitigate overconfidence, most often supported by the numerical observation that they yield better calibrated uncertainty measures. In this work, we provide a sharp comparison between popular uncertainty measures for binary classification in a mathematically tractable model for overparametrized neural networks: the random features model.  We discuss a trade-off between classification accuracy and calibration, unveiling a double descent like behavior in the calibration curve of optimally regularized estimators as a function of overparametrization. This is in contrast with the empirical Bayes method, which we show to be well calibrated in our setting despite the higher generalization error and overparametrization.
\end{abstract}

\section{Introduction}
Uncertainty estimation is the cornerstone of reliable data processing. A large body of literature in classical statistical theory is dedicated to providing solid mathematical guarantees on a model's uncertainty, such as confidence scores for classification and confidence intervals for regression \cite{wasserman2013all}. Yet, when it comes to modern machine learning methods such as deep neural networks our mathematical understanding of the uncertainty associated with prediction falls short.
A key aspect in current machine learning practice is that, in contrast to classical wisdom,  models often operate in a regime where the complexity of the hypothesis class (e.g. as measured by the number of parameters in the model) is comparable or larger than the quantity of data available for training. This modern, overparametrized regime defies the common intuition rooted on classical statistics, therefore posing interesting challenges to their mathematical treatments. sol
For example, deep neural networks are able to achieve optimal generalization performance even when the training data are perfectly interpolated \cite{geman1992neural, PhysRevE.100.012115, Nakkiran2020}, a behaviour at odds with the  bias-variance intuition. This \emph{benign overfitting} property was recently shown to be common among  overparametrized convex methods, such as linear regression \cite{Bartlett2020, 10.1214/21-AOS2133}, random features regression \cite{mei_generalization_2022} and classification \cite{gerace_generalisation_2020}. 

While much of the theoretical effort has focused on the generalization properties of point estimates from overparametrized models, less is understood about their confidence. Indeed, a popular method to estimate uncertainty in neural networks consists of interpreting the last layer pre-activations as class probabilities. Numerical experiments suggest that deep neural networks tend to suffer from \emph{overconfidence} with respect this notion \cite{guo_calibration_2017}, a problem which has motivated many empirical calibration methods in the literature \cite{Hein_2019_CVPR, kristiadi_being_2020, NEURIPS2020_aeb7b30e, NEURIPS2020_543e8374}. Recently, it has been shown that actually overconfidence is a common problem in high-dimensional classification \cite{bai_dont_2021}, although it can be considerably mitigated by properly regularising the risk \cite{clarte_theoretical_2022}. An alternative to the pre-activation scores consists in applying a Bayesian treatment to neural networks, for instance by averaging the last layer weights over the measure induced by the empirical risk. In some contexts, these techniques were shown to provide better calibrated uncertainty measures than pre-activation score. A priori, Bayesian techniques require sampling from a high-dimensional measure, and therefore can be computationally demanding \cite{alexos_structured_2022}. Despite the success and widespread use of these uncertainty measures, mathematical guarantees relating these notions to intrinsic uncertainty measures such as the true class probabilities or the best uncertainty estimation given the available data (i.e. the true posterior uncertainty given the features) are scarce. In this work, we provide a sharp mathematical comparison between these different uncertainty notions in the context of a simple, solvable model for binary classification on structured features - such as the ones given by the first layers of neural networks. To the best of our knowledge, our work is the first to provide a sharp asymptotic analysis of uncertainty in overparametrized high-dimensional models. 

\paragraph{Related work --} Uncertainty quantification in deep learning is an active and rapidly evolving field, with many coexisting metrics and methods in the literature, see e.g. \cite{ABDAR2021243, gawlikowski_survey_2022} for two recent reviews. \cite{7298640, guo_calibration_2017} empirically observed that different from "small" networks \cite{ 10.1145/1102351.1102430}, modern deep neural networks tend to give overconfident predictions. \cite{guo_calibration_2017} proposed \emph{temperature scaling}, a simple post-processing variant of Platt scaling \cite{platt_2000} consisting of rescaling \& cross-validating the norm of the last-layer weights, and showed it can effectively calibrate them. Alternatively, \cite{kristiadi_being_2020} has argued that a Bayesian treatment of the last layer of deep networks fixes overconfidence. Bayesian methods typically involve sampling from a high-dimensional posterior \cite{Mattei2019}, and different methods have been proposed to compute them efficiently \cite{NIPS2011_7eb3c8be,gal_dropout_2016, NIPS2017_9ef2ed4b,maddox_simple_2019}. Of particular interest to our work is the Laplace approximation introduced in \cite{10.1162/neco.1992.4.3.415} for Gaussian process classification and adapted to  Bayesian deep learning in \cite{ritter_scalable_2018, kristiadi_being_2020, daxberger_laplace_2021}. An asymptotic discussion of evidence maximization in Bayesian ridge regression appeared in \cite{marion_hyperparameters_1994, Bruce_1994, Marion_1995}. \cite{bai_dont_2021} has shown that the logit model is overconfident in high-dimensions, and \cite{clarte_theoretical_2022} discussed how to mitigate it by properly regularizing. An exact asymptotic characterization of the empirical risk minimizer for random features model has been derived and discussed in \cite{mei_generalization_2022, gerace_generalisation_2020, goldt_gaussian_2021, Hu2020, Dhifallah2020,  loureiro_learning_2021}. Particularly relevant to our technical results is the recent progress in approximate message-passing schemes for structured matrices \cite{Gerbelot21, loureiro_fluctuations_2022}. Finally, exact asymptotics for Bayes-optimal estimation has been discussed in the context of generalized linear models in \cite{barbier_optimal_2019, gabrie2018entropy}.

\paragraph{Notation --} We denote vectors with bold letters, and matrices with capital letters. For $n\in\mathbb{N}$, we let $[n]\coloneqq \{1,\cdots, n\}$. $\mathcal{N}(\vec{\mu},\Sigma)$ denotes the Gaussian density, $\sigma(t):=(1+e^{-t})^{-1}$ denotes the sigmoid function. We define 
\begin{equation}
    \sigma_v(x) := \int \sigma(z) \mathcal{N}( z | x, v) {\rm d}z
\end{equation}
the averaged sigmoid with a Gaussian noise of variance $v$. 

\section{Setting}

\subsection{Probabilistic classifiers and uncertainty}
\label{sec:classifiers}
Consider a supervised binary classification task given by $n$ independent samples $\mathcal{D} = (\vec{x}^{\mu},y^{\mu})_{\mu\in [n]}\in\mathcal{X}\times \{-1,+1\}$ from a joint distribution $\nu$, and denote by $f_{\star}(\vec{x}) = \prob(y=1|\vec{x})$ the oracle class probability obtained by conditioning~$\prob$ over an input. In this work we are interested in studying the uncertainty associated to probabilistic classifiers $\hat{f}(\vec{x}) = \mathbb{P}(y=1|\vec{x}) \in [0,1]$ obtained by fitting the data\footnote{In the following, we consistently denote with a hat classifiers which are a function of the training data.}, and how they compare with the true class probability $f_{\star}$. A key motivation is the recent stream of works on uncertainty quantification for neural networks, and in particular the line of works proposing uncertainty measures based on classifiers defined by sampling over the last layer of neural networks \cite{brosse_last-layer_2020, kristiadi_being_2020}. To set notation, let $\vec{\varphi}:\mathcal{X}\to\mathbb{R}^{p}$ denote a \emph{feature map}, for instance the features learned by the first layers of a trained neural network.
We shall be interested in the following classifiers:
\paragraph{Empirical risk classifier --} The empirical risk classifier is the one obtained by naively interpreting the scores in the last layer as probability distributions. Mathematically, it is defined as $\hat{f}_{\erm}(\vec{x}) = \sigma(\hat{\vec{\theta}}_{\erm}^{\top}\varphi(\vec{x}))$, where $\sigma:\mathbb{R}\to(0,1)$ is a non-linearity. For concreteness, we will focus on the popular case where $\sigma(z) = (1+e^{-z})^{-1}$ is the sigmoid function, and $\hat{\vec{\theta}}_{\erm}\in\mathbb{R}^{p}$ is the minimizer of the associated (regularized) logistic or cross-entropy risk:
\begin{align}
    \hat{\mathcal{R}}_{n}(\vec{\theta}) = \frac{1}{n}\sum\limits_{\mu=1}^{n}\log\left(1+e^{-y^{\mu}\vec{\theta}^{\top}\vec{\varphi}(\vec{x}^{\mu})}\right)+\frac{\lambda}{2}||\vec{\theta}||_{2}^{2} \, . 
    \label{def:risk}
\end{align}
This is also commonly referred to as the \emph{logit classifier}.
\paragraph{Bayes-optimal classifier --} Denoting the training features $\mathcal{D}_{\varphi} := \{(\vec{\varphi}(\vec{x}^{\mu}), y^{\mu})\}_{\mu\in[n]}$, the optimal Bayesian classifier for the last layer is given by:
\begin{align}
\hat{f}_{\bo}( \vec x ) \!=\! \int\dd\vec{\theta} ~p\left(y=1|\vec{\theta},\{\vec{\varphi}\left(\vec{x}^{\mu}\right)\}_{\mu\in[n]}\right)p\left(\vec{\theta}|\mathcal{D}_{\varphi}\right)
\end{align}
\noindent where $p\left(y=1|\vec{\theta},\{\vec{\varphi}(\vec{x}^{\mu})\}_{\mu\in[n]}\right)$ is the likelihood over the labels and $p(\vec{\theta}|\mathcal{D}_{\varphi})$ is the posterior distribution over the weights given the training features and labels. In practice, the Bayes-optimal classifier $f_{\bo}$ is not accessible to the statistician, since she doesn't have access to the distribution~$\nu$ that has generated the data - and even if she had, sampling from the high-dimensional posterior distribution would be computationally cumbersome. However, as we will discuss in Sec.~\ref{sec:techres}, for the data generative model considered here, the Bayes-optimal classifier can be asymptotically characterized, and its marginals can be computed by a polynomial-time message passing algorithm. 

\paragraph{Bayesian classifiers --} Since the optimal Bayesian classifier is not accessible in practice, different classifiers inspired by Bayesian methods have been proposed in the literature. In this manuscript, we will consider two popular choices. 

The first is the \textit{empirical Bayes} classifier $\hat{f}_{\empbayes}$ \cite{marion_hyperparameters_1994, jospin_hands_2022}. In full generality, the empirical Bayes method consists of postulating a class of plausible likelihoods and priors and doing model selection from the training data via evidence maximization. In the context of Bayesian neural networks, the likelihood and priors are defined by the network architecture and regularization, which are normalized to define proper probability distributions. 

In our setting, the empirical Bayes classifier is explicitly given by:
\begin{align}
    \label{eq:def_empbayes}
    \hat{f}_{\empbayes}(\vec{x}) = \int_{\mathbb{R}^{p}} \dd\vec{\theta}~\sigma(\beta\vec{\theta}^{\top}\vec{\varphi}(\vec{x}))p_{\empbayes}(\vec{\theta}|\mathcal{D}, \beta,\lambda),\\
    p_{\empbayes}(\vec{\theta}|\mathcal{D},\beta,\lambda) = \frac{\prod\limits_{\mu} \sigma(\beta y^{\mu}\vec{\theta}^{\top}\vec{\varphi}(\vec{x}^{\mu})) \mathcal{N}(\vec{\theta} | \sfrac{\mat{I}_{p}}{\beta\lambda} )}{p(\mathcal{D} | \beta, \lambda)}\notag
\end{align}
The normalisation constant $p(\mathcal{D} | \beta, \lambda)$ is known as the \emph{marginal likelihood} or the \textit{evidence}. In the empirical Bayes method the evidence is maximized in order to select the most likely hyperparameters $(\beta,\lambda)$ explaining the training data \cite{MacKay1996}. In our specific model, we note that the evidence is actually only a function of the ratio $\sfrac{\lambda}{\beta}$ (this can be seen from the change of variables $\vec{\theta}\leftarrow\beta\vec{\theta}$). Therefore, without loss of generality we take $\beta=1$ and optimize only over $\lambda$. It is important to stress that the postulated prior and likelihood in $\hat{f}_{\empbayes}$ may not correspond to the ones that generated the data in general. \looseness=-1

Note that, differently from the Bayes-optimal estimator, the empirical Bayes classifier can be a priori computed using only the training data. However, it can be computationally demanding to sample from the posterior distribution above, specially in large dimensions $p, n \gg 1$. To avoid this computational bottleneck, a common approximation consists of expanding the posterior around the $\hat{\vec{\theta}}_{\erm}$ to second order, known as the \emph{Laplace approximation} \cite{kristiadi_being_2020, ritter_scalable_2018, daxberger_laplace_2021}:
\begin{align}
\label{eq:laplace}
    \hat{f}_{\lap}\left(\vec{x}\right) = \int\dd\vec{\theta}~\sigma\left(\hat{\vec{\theta}}_{\erm}^{\top}\vec{\varphi}(\vec{x})\right)\mathcal{N}(\vec{\theta}|\hat{\vec{\theta}}_{\erm}, \mathcal{H}^{-1})
\end{align}
\noindent where $\mathcal{H} \coloneqq \nabla^{2}_{\vec{\theta}}\hat{\mathcal{R}}_{n}(\hat{\vec{\theta}}_{\erm})$ is the Hessian of the empirical risk evaluated at the minimum. Therefore, in the Laplace approximation the posterior is effectively approximated by a Gaussian distribution centred at $\hat{\vec{\theta}}_{\erm}$ and with covariance given by the inverse curvature around the minimum. The "sharper" the minimum, the lower the variance and the more confident the Laplace classifier is. Note that the generalization errors associated to the Laplace classifier coincide exactly with the empirical risk classifier. Finally, in the model considered here, the Laplace approximation $\hat{f}_{\lap}$ will always be less confident than the ERM estimator using $\hat{\vec{\theta}}_{\erm}$. This is due to the concavity of the logit function $\sigma$ on $[0, \infty)$.

\paragraph{Performance and uncertainty -- } Given a probabilistic classifier $\hat{f}$, the most common measure for the generalization performance is the \emph{misclassification test error} (also known as \emph{0/1 error}) :
\begin{equation}
    \mathcal{E}_{\rm{gen.}}(\hat{f}) = \mathbb{E}_{(\vec{x}, y)\sim \prob} \mathbb{P}\left(\sign(\hat{f}(\vec{x}))\neq y\right)\, . \label{eq:generror}
\end{equation}
For $\hat{f}_{\erm}$, another commonly used metric is the \emph{test loss} : 
\begin{equation}
    \mathcal{L}_{\rm{gen.}}(\hat{f}) = - \mathbb{E}_{(\vec{x}, y)\sim \prob} \log(\sigma(y \hat{f}(\vec x)))\, .\label{eq:genloss}
\end{equation}
However, our key goal in this manuscript is to mathematically characterize the uncertainty associated to the prediction of the different classifiers above, and in particular how they correlate with the true class uncertainty as measured by $f_{\star}$. Mathematically, this can be measured by the following joint density:
\begin{align}
    \rho_{\star, t}(a,b) \coloneqq \mathbb{E}_{\mathcal{D}}\mathbb{P}_{\vec{x}}\left(f_{\star}(\vec{x}) = a, \hat{f}_{t}(\vec{x})=b\right)
    \label{eq:definition_rho}
\end{align}
\noindent where $(a,b)\in [0,1]^2$ and  $\hat{f}_{t}$, $t\in\{\bo,\erm,\lap, \empbayes\}$ can be any of the classifiers defined above, and the expectation is taken both over the training data $\mathcal{D} = \{(\vec{x}^{\mu}, y^{\mu})\}_{\mu\in[n]}$. In particular, this joint density gives access to different notions used in the literature to quantify uncertainty. For instance, a widely studied notion is the \emph{calibration at level $\ell \in [0,1]$} of a classifier $\hat{f}$:
\begin{align}
    \Delta_{\ell}(\hat{f}) \coloneqq \ell - \mathbb{E}_{\vec{x},\mathcal{D}}\left[f_{\star}(\vec{x})|\hat{f}(\vec{x}) = \ell \right] \, .
\end{align} 

A related metric is the \emph{Expected Calibration Error} (ECE):
\begin{equation}
\label{eq:ece}
    \ECE(\hat{f}) \coloneqq \mathbb{E}_{\mathbf{x}} \left[ | \Delta_{\hat{f}(\mathbf{x})}|  \right] \, .
\end{equation}

Note that in this work we focus on the calibration. Other uncertainty quantification metrics exist in the literature, e.g. the \textit{Brier Score} and the \textit{Maximum Calibration Error}, and although the theoretical methods presented here can be readily adapted to characterize their asymptotics, this is outside of the scope of this work.

\subsection{The random features model}
\label{sec:rf}
Following our aim to investigate the interplay between overparametrization and uncertainty, we will focus on one of the simplest settings of feature maps defined by two-layer neural networks $\vec{\varphi}: \vec{x}\in\mathcal{X}\subset\mathbb{R}^{p} \mapsto \phi(F\vec{x}) / \sqrt{p}$ with weights $F\in\mathbb{R}^{p\times d}$ and component-wise activation~$\phi$. We will consider \emph{random features} \cite{Rahimi2007}, where the first layer weights $F\in\mathbb{R}^{p\times d}$ are fixed at initialization, typically taken to be i.i.d. standard Gaussian. Random features have been widely studied as a convex proxy for investigating the impact of overparametrization in generalization, since they were shown to display the characteristic non-monotonic \emph{double descent} behaviour of the generalization error \cite{Belkin2019, Spigler_2019}, with optimal generalization achieved beyond interpolation of the data \cite{mei_generalization_2022, gerace_generalisation_2020, Ascoli2020}, also known as \emph{benign overfitting} \cite{Bartlett2020}. 

We will assume Gaussian input data $\vec{x}^{\mu}\sim\mathcal{N}(\vec{0},\sfrac{1}{d}\mat{I}_{d})$ with labels drawn from a logit model:
\begin{align}
\label{eq:def_datamodel}
    f_{\star}(\vec{x}) &= \int_{\mathbb{R}} \sigma\left(\vec{\theta}_{\star}^{\top}\vec{x} + \noisestr z \right) \mathcal{N}(z | 0, 1) \dd z
\end{align}
\noindent with random weights $\vec{\theta}_{\star} \sim \mathcal{N}(\vec{0},\mat{I}_{d})$ and $\noisestr\geq 0$ defines a tunable label noise level. This completely specifies the data distribution $\nu$. In the following, we will be interested in the \emph{proportional high-dimensional limit} defined by $n,p,d \to\infty$ with fixed ratios $\alpha \coloneqq \sfrac{n}{p}$ and $\gamma\coloneqq \sfrac{p}{d}$. While this \textit{teacher-student} setup is quite common in high-dimensional statistics, we could make it more realistic by assuming a general covariance $\Psi$ for the input $\vec{x} \sim \mathcal{N}(\vec{0}, \sfrac{1}{d} \Psi )$. 

An asymptotic characterization of the generalization and training errors of empirical risk minimization for the random features model in the proportional limit was derived for ridge regression in \cite{mei_generalization_2022} and generalized to convex losses in \cite{gerace_generalisation_2020}. A key ingredient in this analysis is a \emph{Gaussian equivalence principle} \cite{Goldt2020, goldt_gaussian_2021} proven in \cite{Hu2020,Montanari2022} stating that the statistics of the empirical risk minimizer is asymptotically equal to the one of an equivalent Gaussian problem with matching moments. More recently, Gaussian equivalence has been proven for two-layer neural tangent features in \cite{Montanari2022} and features coming from mixture models in \cite{gerace2022gaussian}, and was conjectured to hold for a broader class including features coming from trained neural networks \cite{loureiro_learning_2021}. Although the discussion in this manuscript focus in the random features case, our analysis can be readily extended to all cases in which Gaussian equivalence holds. We provide in Appendix~\ref{sec:app:derivation} an extension of our main theoretical result to a general Gaussian covariate model with convex loss encompassing all these cases.

\section{Results}

\subsection{Technical results}
\label{sec:techres}
Let $\hat{\mu}_{p}$ denote the empirical spectral distribution of the matrix $\mat{F}\mat{F}^{\top}\in\mathbb{R}^{p\times p}$. In the following, we assume that in the proportional high-dimensional limit defined above, $\hat{\mu}_{p}$ weakly converges to an asymptotic spectral distribution $\mu$ on $\mathbb{R}_{+}$ with normalized second moment $\int\mu(\dd x)x^2 = 1$. Further, assume $\kappa_{0} = \mathbb{E}[\phi(z)]$, $\kappa_{1} =\mathbb{E}[z\phi(z)]$ and $\kappa_{\star}^2 = \mathbb{E}[\phi(z)^2]-\kappa_{1}^2-\kappa_{0}^2$ are all finite for $z\sim\mathcal{N}(0,1)$. Note that this assumption simply implies some mild regularity in the activation (e.g. that is does not grow too fast). All the commonly considered activations functions, e.g. ReLU, tanh, sigmoid, etc., satisfy these assumptions. Also, for simplicity of exposition, in the following we assume $\kappa_{0}=0$, which can always be obtained by letting $\phi\to \phi-\kappa_{0}$. 

We also define the effective noise $\tau^2_{\rm add} = 1 - \mathbb{E}_{x\sim\mu}\left[\frac{\kappa_1^2 x}{\kappa_1^2 x + \kappa_{\star}^2}\right]$.

The first step is to characterize the density $\rho_{\star, t}$ with $t \in \{ \bo, \erm, \lap, \empbayes \}$ defined in eq.~\eqref{eq:definition_rho}. All relevant quantities depend on this density. In the asymptotic regime, the estimator $\hat{f}(\vec{x})$ is characterized by six quantities $(m, q, v, \hat{m}, \hat{q}, \hat{v})$ that are solutions of self-consistent equations.

\begin{table*}[ht!]
    \begin{center}
    \begin{tabular}{cccc}
    \textbf{Classifier}  & $g_{t}(y, \omega, v)$ & $\eigfunction_{t}(x)$ & $\hat{\tau}_t$ \\
    \hline \\
    $\hat{f}_{\erm}$        & $\prox_{\log \sigma(y \times \cdot)}(\omega)$ & $\lambda $  & 0 \\
    $\hat{f}_{\lap}$        & $\prox_{\log \sigma(y \times \cdot)}(\omega)$ & $\lambda$  & $\mathbb{E}_{x\sim\mu}\left[\frac{\kappa_1^2 x + \kappa_{\star}}{\lambda + \hat{v}_{\star}(\kappa_1^2 x + \kappa_{\star})}\right]$ \\
    $\hat{f}_{\empbayes}$   & $\partial \omega \log \int \sigma(\beta y \times z) \mathcal{N}(z | \omega, v) {\rm d}z$ & $\lambda$ & $v^{\star}$ \\
    $\hat{f}_{\bo}$         & $\partial \omega \log \int \sigma_{\noisevar + \addnoisestr^2}(y \times z)  \mathcal{N}(z | \omega, v) {\rm d}z$ & $\frac{\kappa_1^2 x}{(\kappa_1^2 x + \kappa_{\star}^2)^2}$ & $v^{\star} + \noisevar + \addnoisestr^2$ 
    \end{tabular}
    \end{center}
    \caption{Auxiliary functions and value of $\hat{\tau}_t$ for the different classifiers defined in Sec.~\ref{sec:classifiers}.}  
\label{table:channel_denoising_functions}
\end{table*}

\begin{theorem}[Joint density]
\label{thm:jointstats}
    Let $\mathcal{D}=\{(\vec{x}^{\mu},y^{\mu})\}_{\mu=1}^{n}$ denote data independently drawn from the model defined in Equation \eqref{eq:def_datamodel}. Consider $\hat{f}_t$, $t \in \{ \bo, \erm, \lap, \empbayes \}$ one of the classifiers defined in Sec.~\ref{sec:classifiers}. Then, in the proportional high-dimensional limit where $n, d, p \to \infty$ with fixed $\alpha = \sfrac{n}{p}, \gamma = \sfrac{p}{d}$, the asymptotic joint density $\rho_{\star, t}$ defined in Equation~\eqref{eq:definition_rho} is given by  $\rho^{\rm lim}_{\star, t}(a, b)=\lim\limits_{p\to\infty}\rho_{\star, t}(a, b)$:
    \begin{align}
        \rho^{\rm lim}_{\star, t}(a, b) = \frac{\mathcal{N}\left( \begin{bmatrix} \sigma_{\noisestr^2 + \addnoisestr^2}^{-1}(a) \\ \sigma_{\hat{\tau}^2_t}^{-1}(b) \end{bmatrix}  \Big| \mathbf{0}_2, \Sigma_{t} \right)}{ | \sigma_{\noisestr^2 + \addnoisestr^2}' ( \sigma_{\noisestr^2 + \addnoisestr^2}^{-1}(a)) | 
        | \sigma_{\hat{\tau}^2_t}' ( \sigma_{\hat{\tau}^2_t}^{-1}(b)) |  }  \label{eq:res:jointdensity}
    \end{align}
    \noindent where 
\begin{equation}
    \Sigma_{t} = \begin{bmatrix} 1 & m_{t}^{\star} \\ m_{t}^{\star} & q_{t}^{\star} \end{bmatrix}
    \label{eq:def_sigma}
\end{equation}
and the sufficient statistics $(m_{t}^{\star}, q_{t}^{\star}, v_{t}^{\star})\in\mathbb{R}^{3}$ are the unique fixed points of the following system of equations:
\begin{align}
    \begin{cases}
        v = 2 \times \partial_{\hat{q}} \Psi_w(\hat{m}, \hat{q}, \hat{v} ;  \eigfunction_{t}) \\
        q = 2 \times (\partial_{\hat{q}} \Psi_w - \partial_{\hat{v}} \Psi_w)(\hat{m}, \hat{q}, \hat{v} ;  \eigfunction_{t}) \\
        m = \sqrt{\gamma} \partial_{\hat{m}} \Psi_w(\hat{m}, \hat{q}, \hat{v} ;  \eigfunction_{t}) 
    \end{cases}
    \label{eq:state_evolution_erm}
\end{align}
\begin{align*}
    \begin{cases}
        \hat{v}\!\!\!\!&=\!\! - \alpha \mathbb{E}_{\xi \sim \mathcal{N}(0, q)}\!\left[ \sum_y \mathcal{Z}_0 \left( y, \sfrac{m}{q}\xi, v_{\star} \right)  \partial_{\omega} g_{t} \left( y, \xi, v \right) \right] \\
        \hat{q} \!\!\!\!&=\!\! \alpha \mathbb{E}_{\xi \sim \mathcal{N}(0, q)} \! \left[ \sum_y \mathcal{Z}_0 \left( y, \sfrac{m}{q}\xi, v_{\star} \right) g_{t}\left( y, \xi, v \right)^2 \right] \\
        \hat{m}\!\!\!\! &=\!\! \sqrt{\gamma} \alpha \mathbb{E}_{\xi \sim \mathcal{N}(0, q)} \! \left[ \sum_y \partial_{\omega} \mathcal{Z}_0 \left( y, \sfrac{m}{q}\xi, v_{\star} \right) g_{t}\left( y, \xi, v \right) \right]    
    \end{cases}
\end{align*}
where $\mathcal{Z}_0(y, \omega, v) = \sigma_{v + \noisestr^2 + \tau^2_{\rm add}}(y\omega)$, $v_{\star} = 1 - \sfrac{m^2}{q} - \tau^2_{\rm add}$. The functions $g_{t}$ and $\eigfunction_{t}$ and the scalar $\hat{\tau}_t$ depend on the estimator and the sufficient statistics, and are given in Table~\ref{table:channel_denoising_functions}. Also : 
\begin{align}
    \Psi_w(\hat{m}, \hat{q}, \hat{v}; \eigfunction) &= \frac{1}{2} \mathbb{E}_{x\sim\mu}\left[\frac{\hat{m} \kappa_1^2 x + \hat{q} (\kappa_1^2 x + \kappa_{\star}^2)}{\eigfunction(x) + \hat{v}(\kappa_1^2 x + \kappa_{\star}^2)}\right] \notag \\
    &\quad- \frac{1}{2}\log (\eigfunction(x) + \hat{v} (\kappa_1^2 x + \kappa_{\star}^2)) \, .
    \label{eq:def_psi_w}
\end{align}
\end{theorem}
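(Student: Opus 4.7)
My plan is to prove the theorem in three stages, following the now-standard strategy for sharp asymptotics of high-dimensional GLMs: reduce the random-features problem to a Gaussian covariate model, derive the replica/state-evolution fixed-point equations for each estimator, and finally pass to the joint distribution at a fresh test point by a CLT plus change-of-variables.

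\textbf{Stage 1 (Gaussian equivalence).} First I would invoke the Gaussian equivalence principle of \cite{Hu2020, Montanari2022, goldt_gaussian_2021} to replace the nonlinear feature $\vec{\varphi}(\vec{x}) = \phi(F\vec{x})/\sqrt{p}$ by its Gaussian surrogate $\vec{z} = \kappa_1 F\vec{x}/\sqrt{p} + \kappa_\star \vec{\xi}$ with $\vec{\xi} \sim \mathcal{N}(\vec{0}, \mat{I}_p)$ independent of $\vec{x}$. At the level of sufficient statistics this reduces all four estimation problems to Gaussian covariate GLMs with feature covariance $\kappa_1^2 F F^\top / p + \kappa_\star^2 \mat{I}_p$, in which the teacher $\vec{\theta}_\star$ couples only to the $F$-direction component of $\vec{x}$ and the orthogonal part contributes an effective label noise of variance exactly $\addnoisestr^2$ as defined in the statement. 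Consequently the true class probability can be written as $\sigma_{\noisevar + \addnoisestr^2}(\eta_\star)$ with $\eta_\star = \vec{\theta}_\star^\top \vec{x}$ at the level of joint distributions.

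\textbf{Stage 2 (fixed-point equations).} Next I would derive the self-consistent equations via approximate message passing for structured spectra \cite{Gerbelot21, loureiro_fluctuations_2022}, whose state evolution matches the replica-symmetric prediction. The free entropy decomposes as $\Psi_y + \Psi_w$, with $\Psi_w$ of \eqref{eq:def_psi_w} produced by the spectral decomposition of $FF^\top$ against the postulated prior and $\Psi_y$ produced by the channel-side denoiser $g_t$. For $\hat{f}_\erm$ and $\hat{f}_\lap$, $g_t$ is the proximal operator of $\log\sigma(y\cdot)$ and $\hat{\pi}_t(x) = \lambda$ is the flat spectrum of the quadratic ridge; for $\hat{f}_\bo$, Bayesian AMP yields the posterior-mean denoiser of the effective channel $\sigma_{\noisevar + \addnoisestr^2}$, while $\hat{\pi}_\bo(x) = \kappa_1^2 x /(\kappa_1^2 x + \kappa_\star^2)^2$ is the spectral density of the image of the true prior on $\vec{\theta}_\star$ in the feature basis; for $\hat{f}_\empbayes$ the denoiser uses the postulated (possibly mismatched) sigmoid with inverse temperature $\beta$, and the outer evidence maximization in $\lambda$ collapses to a stationarity condition equivalent to $\hat{\tau}_\empbayes = v^\star$. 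Differentiating $\Psi_w$ in $(\hat{m},\hat{q},\hat{v})$ reproduces the first block of \eqref{eq:state_evolution_erm}, and the ``hat'' block comes from the channel-side stationarity with the four $g_t$ of Table~\ref{table:channel_denoising_functions}.

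\textbf{Stage 3 (joint test-time law).} Finally I would fix the resulting $(m_t^\star, q_t^\star, v_t^\star)$ and consider a fresh test point $\vec{x}_{\text{new}} \sim \mathcal{N}(\vec{0}, \mat{I}_d/d)$ independent of training. Using joint concentration of the overlaps $\vec{\theta}_\star^\top F^\top \hat{\vec{\theta}}_t /\sqrt{pd}$ and $\|\hat{\vec{\theta}}_t\|$ delivered by the AMP analysis, a Lindeberg/CLT argument shows that the pair
\[
\eta_\star \;=\; \vec{\theta}_\star^\top \vec{x}_{\text{new}}, \qquad \eta_t \;=\; \hat{\vec{\theta}}_t^\top \vec{\varphi}(\vec{x}_{\text{new}})
\]
converges jointly to the centred Gaussian with covariance $\Sigma_t$ of \eqref{eq:def_sigma}; the scalar $v_t^\star$ controls the additional conditional variance injected by the orthogonal $\kappa_\star \vec{\xi}$ component and determines the Gaussian smoothing variance $\hat{\tau}_t^2$ at the output of $\hat{f}_t$. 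Writing $a = \sigma_{\noisevar + \addnoisestr^2}(\eta_\star)$ and $b = \sigma_{\hat{\tau}_t^2}(\eta_t)$ and applying the inverse function theorem produces the stated density with the product of Jacobians $|\sigma'_{\cdot}|$ in the denominator of \eqref{eq:res:jointdensity}. The entries of Table~\ref{table:channel_denoising_functions} are then recovered by specialization: $\hat{\tau}_\lap^2 = \vec{\varphi}^\top \mathcal{H}^{-1}\vec{\varphi}$ concentrates on the stated spectral expectation by a deterministic-equivalent / free-probability argument, $\hat{\tau}_\empbayes^2 = v^\star$ is the asymptotic posterior variance per sample, and $\hat{\tau}_\bo^2 = v^\star + \noisevar + \addnoisestr^2$ absorbs both the posterior variance and the true label noise into the output smoothing.

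\textbf{Main obstacle.} The central technical burden is lifting the AMP state-evolution guarantees, which originally target training-time overlaps, to a sharp statement on the joint test-time distribution of $(f_\star(\vec{x}_{\text{new}}), \hat{f}_t(\vec{x}_{\text{new}}))$. For $\hat{f}_\erm$ and $\hat{f}_\lap$ this is routine via the proximal representation of the fixed point as in \cite{gerace_generalisation_2020}, but for $\hat{f}_\bo$ and $\hat{f}_\empbayes$ the estimator is itself a nonlinear posterior mean and one must invoke the structured-matrix Bayesian AMP of \cite{loureiro_fluctuations_2022}, then argue that its iterates evaluated at a fresh independent sample still act as Gaussian denoisers of an asymptotically Gaussian input. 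A second delicate point is identifying the correct prior spectral function $\hat{\pi}_\bo(x) = \kappa_1^2 x/(\kappa_1^2 x + \kappa_\star^2)^2$ from how the true Gaussian prior on $\vec{\theta}_\star$ projects into the Gaussian-equivalent feature basis, which requires carefully tracking the spectrum of $FF^\top$ against the orthogonal $\kappa_\star \vec{\xi}$ direction.
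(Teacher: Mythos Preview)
Your three-stage plan---Gaussian equivalence, AMP/replica state evolution for the overlaps, then a test-time CLT plus change of variables to obtain the joint density---is exactly the route the paper takes, with the same division of labour and the same cited machinery (\cite{Hu2020,Montanari2022} for Stage~1, \cite{Gerbelot21,loureiro_fluctuations_2022,barbier_optimal_2019} for Stage~2, and the covariance $\Sigma_t$ computation for Stage~3). Two small misattributions that do not affect the structure: $\hat{\tau}_{\empbayes}^2 = v^\star$ is simply the asymptotic posterior variance that appears when one averages $\sigma(\vec{\theta}^\top\vec{\varphi}(\vec{x}))$ over the Gibbs posterior---it has nothing to do with evidence maximization, which is a separate outer step selecting $\lambda$; and $v_t^\star$ is a state-evolution order parameter (posterior variance or its ERM analogue), not the variance ``injected by the orthogonal $\kappa_\star\vec{\xi}$ component''---that feature-noise contribution lives in $\tau_{\rm add}^2$ on the \emph{teacher} side, whereas $\hat{\tau}_t^2$ on the \emph{student} side is zero for plain ERM, the inverse-Hessian trace for Laplace, and the posterior variance for the Bayesian estimators.
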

\noindent \paragraph{Proof idea:} Let $\vec{x}\!\sim\!\mathcal{N}(\vec{0},\sfrac{1}{d}\mat{I}_{d})$. For any of the classifiers $t\! \in\! \{ \bo, \erm, \lap, \empbayes \}$ from Sec.~\ref{sec:classifiers}, the 2-dimensional vector $(f_{\star}(\vec{x}), \hat{f}_{t}(\vec{x}))$ is asymptotically distributed as $(\sigma(z), \sigma_{\Tilde{v}}(z_{t}'))$ for some $\Tilde{v}$ that depends on the estimator, where $(z, z_{t}') \sim \mathcal{N}(\mathbf{0}_2, \Sigma_{t})$, and

\begin{align*}
    \Sigma_{t} = \frac{1}{d}\begin{pmatrix}
        ||\vec{\theta}_{\star}||_{2}^{2} & \hat{\vec{\theta}}_{t}^{\top}\Phi\vec{\theta}_{\star} \\
        \vec{\theta}_{\star}^{\top}\Phi^{\top}\hat{\vec{\theta}}_{t} & \hat{\vec{\theta}}_{t}^{\top}\Omega\hat{\vec{\theta}}_{t}
    \end{pmatrix}
\end{align*}
\noindent where we defined the shorthand $\Phi = \kappa_{1}\mat{F}\in\mathbb{R}^{p\times d}$ and $\Omega = \kappa^2_{1}\mat{F}\mat{F}^{\top}+\kappa^{2}_{\star}\mat{I}_{p}$ and $\hat{\vec{\theta}}_{t}$ is either the unique minimizer of the empirical risk in eq.~\eqref{def:risk} for $t\in\{\erm, \lap\}$ or the mean over the respective posterior distribution for $t\in\{\bo,\empbayes\}$. The computation of $\rho_{\star, t}$ thus boils down to computing the sufficient statistics $(m^{\star}, q^{\star}):=(\hat{\vec{\theta}}_{t}^{\top}\Phi\vec{\theta}_{\star}, \hat{\vec{\theta}}_{t}^{\top}\Omega\hat{\vec{\theta}}_{t})$. For $\hat{f}_{\erm}$ on the random features model, the theorem can be proven using recent work in high-dimensional statistics \cite{mei_generalization_2022, Dhifallah2020,loureiro_learning_2021}, where $(m^{\star}, q^{\star})$ is proven to asymptotically obey a set of self-consistent "state-evolution" equations  \cite{Gerbelot21,bayati2011dynamics,donoho_high_2013}, mathematically equivalent to eqs.~\eqref{eq:state_evolution_erm}. A similar strategy was used in \cite{clarte_theoretical_2022} for the simpler vanilla logistic model. This is discussed in Appendix \ref{sec:app:replicas} where we show how to derive analogous results for $t \in \{ \bo, \lap, \empbayes \}$.\looseness=-1

\begin{corollary}[Test error and calibration] Under the conditions of Theorem \ref{thm:jointstats}, the asymptotic generalization error and calibration are given by: 
\begin{align}
\mathcal{E}^{\rm lim}_{{\rm gen.}} &= \iint_{b < 0.5, a} a \times \rho^{\rm lim}_{\star, t}(a, b) {\rm d} a {\rm d} b \notag \\
                         &+ \iint_{b > 0.5, a} (1 - a) \times \rho^{\rm lim}_{\star, t}(a, b) {\rm d} a {\rm d} b \\
\Delta^{\rm lim}_{p} &= p - \frac{\int  a \times \rho^{\rm lim}_{\star, t}(a, p) {\rm d} a}{\int \rho^{\rm lim}_{\star, t}(a, p){\rm d} a}\, .
\label{eq:asympcal}
\end{align}
\end{corollary}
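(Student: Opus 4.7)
The plan is to derive both formulas by rewriting the defining expectations of the generalization error and the calibration as integrals against the joint density $\rho_{\star,t}(a,b)$, and then invoking Theorem~\ref{thm:jointstats} to pass to the asymptotic limit.

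For the generalization error, I would first condition on a fresh test input $\vec{x}$. Under the data model of equation~\eqref{eq:def_datamodel} the label satisfies $\mathbb{P}(y=1\,|\,\vec{x}) = f_{\star}(\vec{x})$, and the predicted class is $+1$ if $\hat{f}_t(\vec{x}) > 0.5$ and $-1$ otherwise. A single-line computation then gives
\begin{equation*}
\mathbb{P}\left(\sign(\hat{f}_t(\vec{x})-0.5) \neq y \,\Big|\, \vec{x}\right) = f_{\star}(\vec{x})\,\mathbb{1}[\hat{f}_t(\vec{x}) < 0.5] + (1-f_{\star}(\vec{x}))\,\mathbb{1}[\hat{f}_t(\vec{x}) > 0.5].
\end{equation*}
Taking expectation over $(\vec{x},\mathcal{D})$ and inserting the identity $1 = \int \mathbb{1}[f_{\star}(\vec{x})=a,\hat{f}_t(\vec{x})=b]\,\dd a\,\dd b$ turns the two terms into integrals against $\rho_{\star,t}(a,b)$, yielding the pre-limit version of the first formula. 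Passing to the limit $p\to\infty$ is then justified by dominated convergence: the integrands $a$ and $1-a$ are bounded by $1$ on $[0,1]^2$ and the weak convergence $\rho_{\star,t}\rightharpoonup \rho^{\rm lim}_{\star,t}$ ensured by Theorem~\ref{thm:jointstats} suffices to interchange limit and integration, giving the asymptotic generalization error as claimed.

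For the calibration, the starting point is the definition $\Delta_{\ell}(\hat{f}_t) = \ell - \mathbb{E}_{\vec{x},\mathcal{D}}[f_{\star}(\vec{x})\,|\,\hat{f}_t(\vec{x})=\ell]$. The conditional expectation is expressed directly via the joint density as
\begin{equation*}
\mathbb{E}\left[f_{\star}(\vec{x}) \,\Big|\, \hat{f}_t(\vec{x})=\ell\right] = \frac{\int a\,\rho_{\star,t}(a,\ell)\,\dd a}{\int \rho_{\star,t}(a,\ell)\,\dd a},
\end{equation*}
and substituting $\ell=p$ together with the limit $\rho_{\star,t}\to\rho^{\rm lim}_{\star,t}$ gives the formula for $\Delta^{\rm lim}_p$.

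The main technical point, though a mild one here, is justifying the two interchanges of limits. For the generalization error dominated convergence applies immediately because the integrands are bounded. For the calibration the situation is slightly more delicate since it involves a ratio: one must ensure that the denominator $\int \rho^{\rm lim}_{\star,t}(a,p)\,\dd a$ is strictly positive at the level $p$ of interest. This follows from the explicit form \eqref{eq:res:jointdensity} of $\rho^{\rm lim}_{\star,t}$, whose marginal in $b$ is obtained from a non-degenerate Gaussian through the smooth monotone transformation $\sigma_{\hat{\tau}_t^2}$, hence is strictly positive and continuous on $(0,1)$. Continuity of the numerator and denominator in $p$ then delivers the claimed identity pointwise on $(0,1)$.
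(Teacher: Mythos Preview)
Your proposal is correct and follows the natural route: rewrite the defining expectations for $\mathcal{E}_{\rm gen.}$ and $\Delta_\ell$ as integrals against the joint law of $(f_\star(\vec{x}),\hat f_t(\vec{x}))$, then replace that law by its limit from Theorem~\ref{thm:jointstats}. The paper does not give a separate proof of this corollary; it is stated as an immediate consequence of having the asymptotic joint density, and your derivation is exactly the unpacking one would expect. One small wording point: your justification for the limit interchange mixes ``dominated convergence'' with ``weak convergence'', but what you actually use is that $a\,\mathbb{1}[b<0.5]$ and $(1-a)\,\mathbb{1}[b>0.5]$ are bounded and $\rho^{\rm lim}_{\star,t}$ assigns zero mass to $\{b=0.5\}$ (clear from the explicit Gaussian form in \eqref{eq:res:jointdensity}), so integration against the converging densities passes to the limit; this is fine and more explicit than anything in the paper.
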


\begin{figure*}[t!]
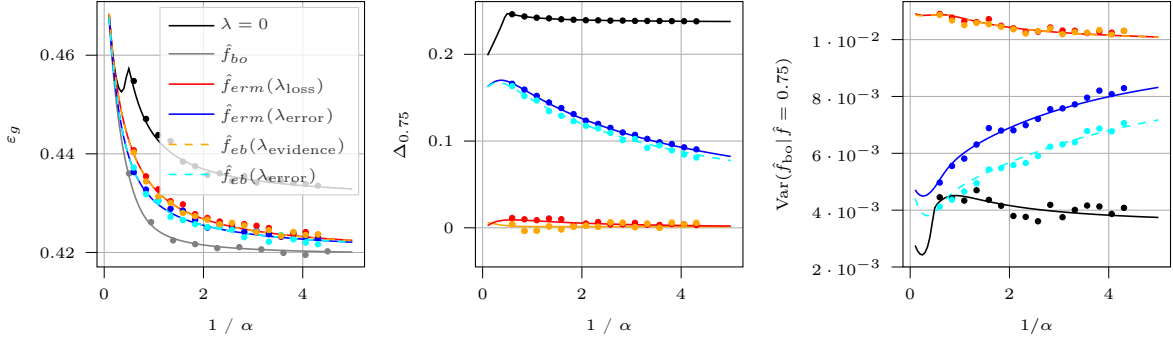

    \centering
    \def\figwidth{0.3\columnwidth}
    \def\figheight{0.3\columnwidth}
  
    \input{Figures/exp_test_errors_n_over_p=2.0}
    \input{Figures/exp_calibration_p=0.75}
    \input{Figures/conditional_variance/exp_conditional_variance_bo_p=0.75_n_over_p=2.0.tex}

    \caption{(\textbf{Left}) Test errors of the different methods as a function of the number of parameter per sample $\sfrac{p}{n}$.  ERM, and Empirical Bayes (EB) are used with different penalizations. Here we use a logit teacher with $\sfrac{n}{d} = 2.0, \noisestr = \sfrac{1}{2}$ and \texttt{erf} activation.  The curves $\hat{f}_{\empbayes}(\lambdaerror)$ and $\hat{f}_{\erm}(\lambdaerror)$ are very close and indistinguishable on the plot, as well as the curves $\hat{f}_{\empbayes}(\lambdaevidence)$ and $\hat{f}_{\erm}(\lambdaloss)$. Due to the intrinsic noise in the model the oracle error is $\mathcal{E}^{\star}_{gen.} \simeq 0.332$. (\textbf{Center}) Calibration at a level $\ell = 0.75$. (\textbf{Right}) Variance of $\hat{f}_{\bo}$ conditioned on the different other estimators. Points are experimental values obtained on Gaussian data at $d = 200$, averaged over 30 trials.}
    \label{fig:test_errors_n_over_p=2.0}    
\end{figure*}

\paragraph{Intuition of the technical results --} The key intuition behind Theorem \ref{thm:jointstats} is the fact that for the models considered here all the statistics of interest depend only on low-dimensional projections of the estimators and the features, i.e. the joint distribution of $\hat{\vec{\theta}}_{t}^{\top} \varphi (\vec{x})$ and $\vec{\theta}_{\star}^{\top} \vec{x}$. Even if the input data $\vec{x}$ is assumed Gaussian, the distribution of the features $\varphi(\vec{x})$ can be complicated. However, thanks to recent universality results in the high-dimensional statistics literature \cite{goldt_gaussian_2021, Hu2020, Montanari2022, dandi_universality_2023}, in the high-dimensional limit of interest here the joint distribution of these projections are asymptotically captured by a Gaussian model with matching second moments $(m_{t}^{\star},q_{t}^{\star})$, see Appendix \ref{app:eq_model} for a detailed discussion. Moreover, these moments (which are the sufficient statistics for the quantities of interest) can be explicitly computed from the state evolution \eqref{eq:state_evolution_erm} of a tailored message passing scheme for each of the estimators $\hat{\vec{\theta}}_{t}$, see Appendix \ref{sec:app:derivation} for the technical details. This allow us to fully characterize all the quantities of interest asymptotically.

\subsection{Trade-off between performance and uncertainty}
\label{sec:tradeoff}
In sensitive applications of machine learning having a reliable estimation of the model's uncertainty can be as important as having accurate predictions. Therefore, a key question is \emph{"can my model achieve good generalization while being calibrated?"}.

\paragraph{Comparing the performances:} In Figure \ref{fig:test_errors_n_over_p=2.0} (left) we compare the misclassification test error eq.~\eqref{eq:generror} of the different classifiers defined in Sec.~\ref{sec:classifiers}\footnote{Note that by construction $\mathcal{E}_{\rm{gen.}}(\hat{f}_{\lap}) = \mathcal{E}_{\rm{gen.}}(\hat{f}_{\erm})$.} as a function of the overparametrization ratio $p/n$ at fixed sample complexity $n/d = 2$ for different choices of the hyperparameters $(\beta,\lambda)$. First, note the characteristic double descent behaviour of the empirical risk minimizer with $\lambda \to 0^{+}$, with the peak at the interpolating threshold corresponding in our setting to the existence of linear separator \cite{NIPS2003_0fe47339}. As discussed in e.g. \cite{nakkiran2021optimal} for neural networks and shown in e.g. \cite{gerace_generalisation_2020} for random features classification, this peak is mitigated by cross-validation on the $\ell_2$ regularization $\lambda>0$, which is shown in Fig.~\ref{fig:test_errors_n_over_p=2.0} with the blue and red full lines, corresponding to optimally tuning $\lambda$ to minimize the misclassification error eq.~\eqref{eq:generror} and the test loss respectively eq.~\eqref{eq:genloss}. 

It is interesting to contrast these ERM estimators to the empirical Bayes classifier, which averages over different classifiers. We see that, evaluating the empirical Bayes with a Gaussian prior of variance given by the cross-validated $\lambda_{\rm{error}}$ achieves almost identical performance to the ERM estimator, with a difference of the order of $10^{-5}$.

An often quoted strength of the Bayesian approach is that model selection can be performed directly on the training data by evidence maximization over the model hyperparameters \cite{MacKay1996}. Curiously, in our setting this yields a very close performance to ERM cross-validated with respect to the test loss, as shown in Fig.~\ref{fig:test_errors_n_over_p=2.0} (left) in dashed yellow line. Despite achieving similar performances in our setting, it is important to stress that these two classifiers are computationally radically different, as the empirical Bayes classifier requires sampling from a high-dimensional distribution which can be prohibitive in practice. These should be contrasted with the Bayes-optimal classifier, shown in solid grey, which by definition gives the best achievable performance at fixed data availability. 

To summarise, from the point-of-view of the performance we observe no significant difference between Bayesian and ERM estimators, with (not surprisingly) best performance achieved by cross-validating over the misclassification error. 

\paragraph{Calibration:} Despite the relatively small difference in performance, the discussed classifiers are rather different in terms of calibration. Figure~\ref{fig:test_errors_n_over_p=2.0} (center) shows the calibration at fixed level $\ell=0.75$ for the same classifiers. Note that the max-margin interpolator $\lambda\to 0^{+}$ produces consistently overconfident predictions. Indeed, we observe a maximum in the calibration curve around the interpolation threshold reminiscent of the double descent behaviour, with worst possible calibration $\Delta_{\ell} = \ell-\sfrac{1}{2}$ corresponding to a confidence completely uncorrelated with the true class probabilities achieved at the interpolation transition. 
As noted in \cite{bai_dont_2021}, overconfidence is inherent for unregularized logistic regression in high-dimensions, as it is present even when data is abundant with respect to the number of parameters. However, in their simpler setting of matched linear classifiers the number of parameters is equal to the input dimension $p=d$, and therefore overparametrization cannot be distinguished from high-dimensionality. Indeed, they observe an asymptotic scaling of the calibration $\Delta_{\ell} \sim d/n$, which suggests that overconfidence increases with the number of parameters. Our setting allow us to decouple the number of parameters $p$ from the data dimension $d$, suggesting instead that overparametrization can improve calibration at fixed number of samples.

More strikingly, we observe that optimal regularization does not mitigate this double descent-like behaviour in the calibration, which is in contrast with what happens with the error itself that becomes monotonic when optimally regularized. Indeed, while cross-validating with respect to the misclassification error achieves the best accuracy, it produces consistently overconfident predictions for both the empirical risk minimizer and the empirical Bayes classifiers. On the other hand, cross-validation with respect to the loss produces better calibrated estimates, with an interesting non-monotonic behaviour crossing from over- to underconfidence as a function of overparametrization. In contrast, maximising the evidence yields better calibrated estimation with a monotonic calibration curve very close to zero.

To summarise, we observe a fundamental trade-off between optimising the accuracy of classification and obtaining calibrated classifiers. A similar discussion holds for other calibration levels and for the expected calibration error eq.~\eqref{eq:ece}, as shown in Appendix~\ref{app:other_settings}. 

\paragraph{Conditional variance:} Theorem \ref{thm:jointstats} gives us access to a rich set of uncertainty measures, of which the calibration is a particular example. For instance, we have access to the full distribution of the Bayes-optimal classifier $\hat{f}_{\bo}$ conditioned on the predictors defined in Sec.~\ref{sec:classifiers}. Note that since $\mathbb{E}(\hat{f}_{\bo} | \hat{f} = \ell) = \mathbb{E}(f_{\star} | \hat{f}=\ell) = \ell-\Delta_{\ell}(\hat{f})$, the mean of this conditional distribution is equal to the calibration up to a constant. A natural measure of uncertainty beyond the calibration is the variance of this conditional distribution $\variance(\hat{f}_{\bo} | \hat{f}=\ell)$, which quantifies how much the prediction $\hat{f}(\vec{x})=\ell$ inform us on $\hat{f}_{\bo}(\vec{x})$, which is by definition the best achievable classifier at finite availability of data. An explicit expression for this variance can be derived from Theorem \ref{thm:jointstats} for any of the classifiers $t\in\{\erm,\bo,\lap,\empbayes\}$:
\begin{align*}
&\variance(\hat{f}_{\bo}(\vec{x}) | \hat{f}_{t}(\vec{x})=\ell) =  \int {\rm d}a~\sigma_{\hat{v}^{\star}_{\bo} + \noisestr^2 + \tau_{\rm add}^2}(a)^2 \times\notag\\
&\quad\times\mathcal{N} \left( a | \sfrac{m^{\star}_{t}}{q^{\star}_{t}} \sigma_{\hat{\tau}_t}^{-1}(\ell), q^{\star}_{\bo} - \sfrac{{m^{\star}_{t}}^2}{q^{\star}_{t}} \right)-\left(\ell-\Delta_{\ell}\right)^2
\end{align*}
\noindent where $(m^{\star}_{t}, q_{t}^{\star}, q^{\star}_{\bo})$ are solutions to the self-consistent equations eq.~\eqref{eq:state_evolution_erm}  and $\Delta_{\ell}$ is the asymptotic calibration eq.~\eqref{eq:asympcal}. The detailed derivations are shown in Appendix~\ref{app:cond_variance}.

Figure~\ref{fig:test_errors_n_over_p=2.0} (right) shows this conditional variance as a function of the overparametrization. Note that in this setting worse calibration is correlated with a lower conditional variance, and we observe a trade-off between these two metrics. We also observe a behaviour reminiscent of double descent in the value of the conditional variance that does not go away with optimal regularization.

\begin{figure*}[t!]
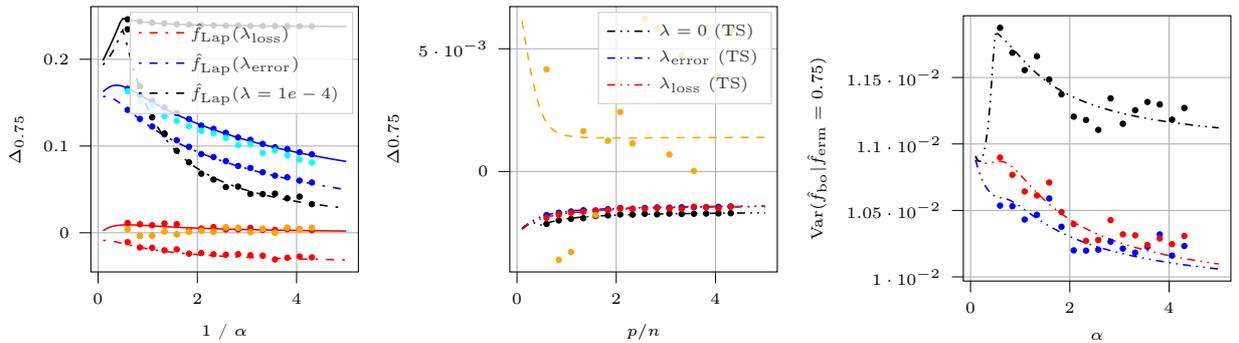

    \centering
    \def\figwidth{0.3\columnwidth}
    \def\figheight{0.3\columnwidth}
    
    \input{Figures/exp_calibration_p=0.75_laplace}
    \input{Figures/temperature_scaling/exp_calibration_temp_scaling_p=0.75_n_over_p=2.0}
    \input{Figures/conditional_variance/exp_conditional_variance_bo_temp_scaling_n_over_p=2.0}

    \caption{(\textbf{Left}) Calibration at level $\ell=0.75$ of $\hat{f}_{\erm}$ (solid lines, refer to Figure~\ref{fig:test_errors_n_over_p=2.0} for the legend) and $\hat{f}_{\lap}$ with the three different regularizations. (\textbf{Center}) Calibration at level $\ell=0.75$ of $\hat{f}_{\erm}$ after temperature scaling (TS), compared to $\hat{f}_{\empbayes}$ (dashed yellow) and $\hat{f}_{\rm loss}$ (full red) for reference. (\textbf{Right}) Variance of $\hat{f}_{\bo}$ conditioned on $\hat{f}_{\erm} = 0.75$ after temperature scaling, compared to variance at $\hat{f}_{\rm loss}$ (full red) and $\hat{f}_{\empbayes}$ (dashed yellow). Points are experimental values obtained on Gaussian data at $d = 200$, averaged over 30 trials.} 
    \label{fig:calibration_laplace_n_over_p=2.0}
\end{figure*}
\subsection{Temperature scaling} 
Temperature scaling is a calibration method introduced in \cite{guo_calibration_2017} to mitigate overconfidence in trained neural networks. It is applied after training, and consists in introducing a "temperature" scaling parameter on the last layer pre-activations $\hat{f}_{\erm}(\vec{x}) \!=\! \sigma(\hat{\vec{\theta}}_{\erm}^{\top} \vec{\varphi}(\vec{x}) / T)$. It is then tuned to minimize the validation loss. In our analysis, this corresponds to simply re-scaling the predictor $\hat{\vec{\theta}}_{\erm}\!\to \!\hat{\vec{\theta}}_{\erm}/T$, \&  Thm. \ref{thm:jointstats} thus applies mutatis mutandis. 

Figure \ref{fig:calibration_laplace_n_over_p=2.0} (center) compares the calibration at level $\ell = 0.75$ of the regularized empirical risk minimizers with $\lambdaloss$ and $\lambdaerror$ after temperature scaling with the empirical Bayes classifier with $\lambdaevidence$ and ERM classifier at $\lambda_{\rm loss}$, the best calibrated in our setting so far. We observe that the temperature scaling yields very similar calibrations for $\lambdaloss$ and $\lambdaerror$. While empirical Bayes remains the best calibrated estimator, temperature scaling has a calibration around 0.1\%, which would be satisfying in most practical scenarios. We also observe that the maximum around the interpolation threshold is not present in the calibration curves after temperature scaling. 

Looking at the variance of $\hat{f}_{\bo}$ conditioned on $\hat{f}_{\erm}$ after temperature scaling we see that it is lower for $\lambdaerror$ than for $\lambdaloss$, see Fig.~ \ref{fig:calibration_laplace_n_over_p=2.0} (right). 
We see that again the variance has an increase in the vicinity of the interpolation threshold, reminiscent of the double descent behaviour. As discussed in the previous section, we aim to have the lowest variance possible to ensure that the uncertainty estimation is accurate not only on average but also point-wise.
It appears that cross-validating the empirical risk minimizer on the misclassification error and then applying temperature scaling gives an estimator that both has the best test error and is very well calibrated, both on average and point-wise.
\subsection{The calibration of the Laplace approximation}
Estimating any of the Bayesian classifiers in Sec.~\ref{sec:classifiers} is computationally demanding, since they involve a sampling over a high-dimensional  distribution. This has motivated practitioners to develop different approximations for making Bayesian methods more efficient. These include Bayesian dropout \cite{gal_dropout_2016}, deep ensembles \cite{NIPS2017_9ef2ed4b}, stochastic gradient Langevin dynamics \cite{welling_bayesian_2011} and the Laplace approximation \cite{ritter_scalable_2018, daxberger_laplace_2021}, among others. The Laplace approximation was introduced by \cite{10.1162/neco.1992.4.3.415} in the context of Gaussian processes, and consists of approximating the posterior distribution by a Gaussian density centred around the empirical risk minimizer - or equivalently to a low-temperature expansion of the posterior - see eq.~\eqref{eq:laplace}. By construction, the Laplace classifier has the same misclassification error as the empirical risk minimizer, and hence can be effectively seen as endowing this point-estimator with a covariance given by the inverse of the Hessian evaluated at the minimum. Although computing the Hessian of the empirical risk for a deep neural network can be costly, an approximate scheme has been recently proposed \cite{ritter_scalable_2018, daxberger_laplace_2021}, making Laplace a viable uncertainty estimation technique for deep learning.\looseness=-1

On the theory side, sharp results have been limited to the Gaussian process and ridge regression setting, where the Laplace approximation is exact \cite{NIPS1998_5cbdfd0d, NIPS2001_d68a1827}. While exact asymptotic results characterizing the statistics of the logit estimator in high-dimensions abound \cite{sur_modern_2018, gerace_generalisation_2020,aubin_generalization_2020,deng2022model}, to our best knowledge the asymptotic spectral distribution of the Hessian at the minimum is missing. Recently, \cite{NEURIPS2021_a7d8ae45} has computed the asymptotic spectral distribution of the Hessian in a matched logit model under the assumption that the weights are uncorrelated with the input data. Hence, their results do not apply for the empirical risk minimizer, and cannot be used to characterize the uncertainty of the Laplace classifier. Characterizing the Hessian of the logistic risk eq.~\eqref{def:risk} at the minimizer is a challenging technical result that we believe is of independent interest to the scope of the discussion in this manuscript.\looseness=-1

\begin{claim}[Hessian of logit, informal]
\label{thm:hessian}
Let 
\begin{align*}
\mathcal{H}(\vec{\theta}) := \sum_{\mu\in[n]} (\sigma'(y^{\mu}\vec{\theta}^{\top} \vec{\varphi}(\vec{x}^{\mu}))-1)\vec{\varphi}(\vec{x}^{\mu}) \vec{\varphi}(\vec{x}^{\mu})^{\top} + \lambda \mat{I}_p
\end{align*}
denote the Hessian of the logistic empirical risk eq.~\eqref{def:risk}, and denote $\hat{\vec{\theta}}_{\erm}\!=\!\argmin_{\vec{\theta}}\hat{\mathcal{R}}_{n}(\vec{\theta})$ its minimizer. Then, under the same conditions of Thm.~\ref{thm:jointstats} and additional technical assumptions, the following asymptotic characterization holds in the sense of deterministic equivalent:
\begin{align}
\mathcal{H}^{-1}(\hat{\vec{\theta}}_{\erm}) \underset{p\to\infty}{\asymp} (\hat{v}^{\star} \left(\kappa_{1}^2\mat{F}\mat{F}^{\top}+\kappa_{\star}^2\mat{I}_{p}\right) + \lambda I_{p})^{-1}
\end{align}
\noindent where $\hat{v}^{\star}\in\mathbb{R}$ is the solution of the self-consistent  eq.~\eqref{eq:state_evolution_erm}. \looseness=-1
\end{claim}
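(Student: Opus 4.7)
The plan is to combine the AMP state evolution of Theorem~\ref{thm:jointstats} with deterministic-equivalent techniques from random matrix theory, replacing the random Hessian by an explicit deterministic matrix in the resolvent sense. Writing $\Phi \in \mathbb{R}^{n\times p}$ for the feature matrix with rows $\vec{\varphi}(\vec{x}^{\mu})^{\top}$, the Hessian has the sample-covariance form
\[
\mathcal{H}(\vec{\theta}) = \lambda I_p + \Phi^{\top} D(\vec{\theta})\, \Phi, \qquad D(\vec{\theta}) = \mathrm{diag}\bigl(h(y^\mu \vec{\theta}^{\top}\vec{\varphi}(\vec{x}^\mu))\bigr)_{\mu\in[n]},
\]
with $h$ the bounded smooth weight function derived from the logistic loss. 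Evaluating at $\vec{\theta}=\hat{\vec{\theta}}_{\erm}$, the task is to analyse the resolvent of $\Phi^\top D \Phi$ when the diagonal weights depend on the minimizer itself.

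The main step is a leave-one-out concentration of the weights. Let $\hat{\vec{\theta}}^{(\mu)}$ be the ERM computed without sample $\mu$. The $\lambda$-strong convexity of $\hat{\mathcal{R}}_n$ together with Sherman--Morrison applied to the Hessian should give $\|\hat{\vec{\theta}}_{\erm}-\hat{\vec{\theta}}^{(\mu)}\| = O_P(1/\sqrt{n})$ uniformly in $\mu$, extending the Sur--Cand\`es style arguments of \cite{sur_modern_2018} and the random-features analysis of \cite{gerace_generalisation_2020,loureiro_learning_2021}. Since $\hat{\vec{\theta}}^{(\mu)}$ is independent of $(\vec{\varphi}(\vec{x}^\mu),y^\mu)$, the leave-one-out weight $h(y^\mu(\hat{\vec{\theta}}^{(\mu)})^{\top} \vec{\varphi}(\vec{x}^\mu))$ depends on $\vec{\varphi}(\vec{x}^\mu)$ only through a single Gaussian-equivalent projection whose law is pinned down by the state-evolution fixed point $(m^\star,q^\star,v^\star)$. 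Taking the asymptotic expectation yields a deterministic scalar $\bar h$, and inspecting the state-evolution equation for $\hat v$ in \eqref{eq:state_evolution_erm}, the proximal identity $\partial_\omega \prox_{v f}(\omega)=1/(1+v f''(\prox_{v f}(\omega)))$ exactly matches the fixed-point condition $\bar h = \hat v^\star$ after the conventional rescaling. This is the step that ties the random-matrix-theoretic scalar $\bar h$ to the analytically available fixed point.

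Given this concentration, a one-line resolvent-interpolation argument swapping $h_\mu\leftrightarrow \bar h$ one index at a time and bounding each contribution by a rank-one resolvent perturbation (in the spirit of \cite{loureiro_fluctuations_2022}) upgrades the pointwise statement to
\[
\bigl(\lambda I_p + \Phi^\top D\, \Phi\bigr)^{-1} \asymp \bigl(\lambda I_p + \hat v^\star\, \Phi^\top \Phi\bigr)^{-1}.
\]
Applying the Gaussian equivalence principle of \cite{Hu2020, Montanari2022} to $\Phi^\top \Phi$ and then the classical anisotropic Marchenko--Pastur deterministic equivalent replaces $\Phi^\top \Phi$ by its effective population analogue $\kappa_1^2 F F^\top + \kappa_\star^2 I_p$, producing the announced deterministic equivalent for $\mathcal{H}^{-1}(\hat{\vec{\theta}}_{\erm})$.

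The genuine obstacle is making the leave-one-out concentration uniform in $\mu$ in a way that survives insertion into a resolvent against an arbitrary bounded test matrix: the weights $h_\mu$ are correlated through $\hat{\vec{\theta}}_{\erm}$, and the anisotropic covariance $\kappa_1^2 F F^\top + \kappa_\star^2 I_p$ of the random features prevents a direct transfer of the matched-design arguments of \cite{sur_modern_2018}. The most promising route is to combine the AMP-type concentration of \cite{Gerbelot21, loureiro_fluctuations_2022} with anisotropic RMT deterministic equivalents, but the bookkeeping required to pin down the normalization identifying $\bar h$ with $\hat v^\star$ across the proximal, AMP, and RMT formalisms is delicate and is the reason the statement is given informally.
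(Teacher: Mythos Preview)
Your approach is genuinely different from the paper's and, as written, contains a structural gap.

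The paper does not attack the Hessian via leave-one-out and random-matrix resolvent manipulations. Instead it uses a Legendre-transform trick from statistical mechanics: add a linear source $\vec{h}^{\top}\vec{\theta}$ to the empirical risk, observe the exact finite-$p$ identity $\mathcal{H}^{-1}(\hat{\vec{\theta}}_{\erm}) = -\nabla^{2}_{\vec{h}}\,\mathcal{L}^{L}(\vec{h})\big|_{\vec{h}=0}$, and then compute the right-hand side by differentiating the replica free energy (i.e.\ $\Psi_{w}$) twice in $\vec{h}$. Because the source enters $\Psi_{w}$ only through the quadratic form $(\hat{m}\Omega\vec{w}_{\star}+\vec{h})^{\top}(\lambda I+\hat{v}\Omega)^{-1}(\hat{m}\Omega\vec{w}_{\star}+\vec{h})$, the second derivative is immediately $(\lambda I+\hat{v}^{\star}\Omega)^{-1}$. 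No separate identification of a scalar is needed; $\hat{v}^{\star}$ appears directly because it is the parameter already sitting in the replica prior potential. The paper is explicit that this derivation is heuristic (the passage from the asymptotic free energy to its second derivative is not justified) and restates the result as a conjecture, validating it on the square-loss case and numerically.

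Your two-step decomposition (freeze the diagonal weights to a scalar $\bar h$, then apply a Marchenko--Pastur-type equivalent to $\Phi^{\top}\Phi$) does not close. First, the individual weights $h_\mu=h(y^{\mu}\hat{\vec{\theta}}_{\erm}^{\top}\vec{\varphi}(\vec{x}^{\mu}))$ do not concentrate to a common deterministic constant: leave-one-out gives them a non-degenerate limiting law, and for a weighted sample covariance the resolvent equivalent depends on that law through a self-consistent equation, not merely on its mean. Second, and more decisively, the anisotropic deterministic equivalent does \emph{not} replace $\Phi^{\top}\Phi$ by the population covariance $\Omega$; it replaces $(\lambda I+c\,\Phi^{\top}\Phi)^{-1}$ by $(\lambda I+m(\lambda)\Omega)^{-1}$ with a self-consistent scalar $m(\lambda)\neq c$. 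The square-loss case in the paper makes this explicit: there $h_\mu\equiv 1$, so $\bar h=1$, yet the equivalent is $(\lambda I+\hat{v}^{\star}\Omega)^{-1}$ with $\hat{v}^{\star}$ solving $\alpha/\hat{v}-1 = 1-\lambda\int\mu_{\Omega}(dt)/(\lambda+\hat{v}t)$, which is generically not $1$. Hence your identification $\bar h=\hat v^{\star}$ cannot be right, and the two scalars you are producing (one from averaging the weights, one from MP on the sample covariance) must instead interact through a single self-consistent equation. The Legendre/replica route in the paper captures this coupling automatically; a leave-one-out route can too, but only if the weight distribution and the resolvent are solved jointly rather than sequentially.
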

A heuristic derivation of this result is provided in App. \ref{app:laplace} in the general context of the Gaussian covariate model. With Claim.~\ref{thm:hessian} in hands, we can characterize the asymptotic calibration of the Laplace classifier for our model.
 
 Figure \ref{fig:calibration_laplace_n_over_p=2.0} (left) shows the calibration curve at level $\ell=0.75$, at sample complexity $\sfrac{n}{d}=2$ and noise variance $\tau_{0}=0.5$ as a function of the number of parameters. As mentioned in the introduction, we observe here that $\hat{f}_{\lap}$ is always less confident than $\hat{f}_{\erm}$, due to the concavity of $\sigma$. While this might seem desirable in the scenarios where ERM is very overconfident, e.g. for $\lambda\to 0^{+}$ or $\lambdaerror$, it hurts calibration when the classifier is well-calibrated as for $\lambdaloss$. Moreover, it highly depends on the sample complexity and noise variance, see Appendix~\ref{app:other_settings} in the supplementary material where we show a setting in which the Laplace approximation yields an underconfident classifier even in the $\lambda\to 0^{+}$ at mild overparametrization. Then, the Laplace approximation seems to be an unreliable way to control the calibration of the estimators, contrary to temperature scaling.


\section{Conclusion} 
In this paper, we studied the performance of different frequentist and Bayesian classifiers for random features classification. In the high-dimensional limit, the asymptotic behaviour of these algorithms can be precisely characterized. 
Our first contribution is the derivation of the Bayes-optimal estimator. By definition it is the estimator with the best possible performance, and although it is inaccessible in practice, it provides a baseline to compare the classifiers. Then, we compared the generalization error of frequentist and Bayesian approaches, showing they yield very similar test error.
We then focused on uncertainty quantification, and showed there is a trade-off between generalization and calibration in our model. Moreover, we observed a non-monotonic behaviour of the calibration curve for certain estimators, akin to the famous \textit{double-descent} phenomenon for the test error.
Finally, we compared two popular approaches for post-training calibration: temperature scaling and the Laplace approximation, benchmarking them against the baseline classifiers. In our model, we observe that temperature scaling on top of cross-validating the empirical risk classifier on the accuracy achieves the best result : it has both the lowest test error and best calibration. Moreover, despite requiring a validation set, in practice it is a computationally more efficient method than the Bayesian approach, which requires sampling from a high-dimensional distribution. The code used in this project will be made available at \url{github.com/SPOC-group/double_descent_uncertainty}.

\noindent \textbf{Limitations:} It is worth pointing some limitation of our results. The first resides in the (nevertheless classical) Gaussian assumption for the data. We note, however, that there are good reasons to believe that this can be a very good model in high-dimensions \cite{Hu2020, Montanari2022}.
A second limitation of is the lack of feature learning. While many of the uncertainty quantification methods discussed here apply directly to the last layer of trained neural networks, other methods considered in the literature apply to the full architecture \cite{ABDAR2021243}. Since the performance of deep neural networks can be largely attributed to feature learning, it shall be important to take it into account in theoretical studies of uncertainty. We hope that our work can offer a starting point towards this more ambitious goal.

\section*{Acknowledgements}
We thank Pierre-Alexandre Mattei, Yevgeny Seldin, Anshuk Uppal, Kristoffer Stensbo-Smidt, Simon Bartels and Melih Kandemir for valuable discussions.
We acknowledge funding from the ERC under the European Union’s Horizon 2020 Research and Innovation Program Grant Agreement 714608-SMiLe, the Swiss National Science Foundation grant SNFS OperaGOST, $200021\_200390$ and the \textit{Choose France - CNRS AI Rising Talents} program. This research was supported by the NCCR MARVEL, a National Centre of Competence in Research, funded by the Swiss National Science Foundation (grant number 205602).

\newpage

\bibliography{refs}

\begin{thebibliography}{}

\bibitem[Abbasi et~al., 2019]{NEURIPS2019_dffbb6ef}
Abbasi, E., Salehi, F., and Hassibi, B. (2019).
\newblock Universality in learning from linear measurements.
\newblock In Wallach, H., Larochelle, H., Beygelzimer, A., d\textquotesingle
  Alch\'{e}-Buc, F., Fox, E., and Garnett, R., editors, {\em Advances in Neural
  Information Processing Systems}, volume~32. Curran Associates, Inc.

\bibitem[Abdar et~al., 2021]{ABDAR2021243}
Abdar, M., Pourpanah, F., Hussain, S., Rezazadegan, D., Liu, L., Ghavamzadeh,
  M., Fieguth, P., Cao, X., Khosravi, A., Acharya, U.~R., Makarenkov, V., and
  Nahavandi, S. (2021).
\newblock A review of uncertainty quantification in deep learning: Techniques,
  applications and challenges.
\newblock {\em Information Fusion}, 76:243--297.

\bibitem[Aizenman et~al., 2006]{aizenman2006mean}
Aizenman, M., Sims, R., and Starr, S.~L. (2006).
\newblock Mean-field spin glass models from the cavity--rost perspective.
\newblock {\em arXiv preprint math-ph/0607060}.

\bibitem[Alexos et~al., 2022]{alexos_structured_2022}
Alexos, A., Boyd, A.~J., and Mandt, S. (2022).
\newblock Structured stochastic gradient {MCMC}.
\newblock In Chaudhuri, K., Jegelka, S., Song, L., Szepesvari, C., Niu, G., and
  Sabato, S., editors, {\em Proceedings of the 39th International Conference on
  Machine Learning}, volume 162 of {\em Proceedings of Machine Learning
  Research}, pages 414--434. PMLR.

\bibitem[Aubin et~al., 2020]{aubin_generalization_2020}
Aubin, B., Krzakala, F., Lu, Y., and Zdeborov\'{a}, L. (2020).
\newblock Generalization error in high-dimensional perceptrons: Approaching
  bayes error with convex optimization.
\newblock In Larochelle, H., Ranzato, M., Hadsell, R., Balcan, M., and Lin, H.,
  editors, {\em Advances in Neural Information Processing Systems}, volume~33,
  pages 12199--12210. Curran Associates, Inc.

\bibitem[Aubin et~al., 2021]{9240945}
Aubin, B., Loureiro, B., Maillard, A., Krzakala, F., and Zdeborová, L. (2021).
\newblock The spiked matrix model with generative priors.
\newblock {\em IEEE Transactions on Information Theory}, 67(2):1156--1181.

\bibitem[Aubin et~al., 2018]{NEURIPS2018_84f0f204}
Aubin, B., Maillard, A., barbier, j., Krzakala, F., Macris, N., and
  Zdeborov\'{a}, L. (2018).
\newblock The committee machine: Computational to statistical gaps in learning
  a two-layers neural network.
\newblock In Bengio, S., Wallach, H., Larochelle, H., Grauman, K.,
  Cesa-Bianchi, N., and Garnett, R., editors, {\em Advances in Neural
  Information Processing Systems}, volume~31. Curran Associates, Inc.

\bibitem[Bai et~al., 2021]{bai_dont_2021}
Bai, Y., Mei, S., Wang, H., and Xiong, C. (2021).
\newblock Don’t just blame over-parametrization for over-confidence:
  Theoretical analysis of calibration in binary classification.
\newblock In Meila, M. and Zhang, T., editors, {\em Proceedings of the 38th
  International Conference on Machine Learning}, volume 139 of {\em Proceedings
  of Machine Learning Research}, pages 566--576. PMLR.

\bibitem[Bai and Zhou, 2008]{10.2307/24308489}
Bai, Z. and Zhou, W. (2008).
\newblock Large sample covariance matrices without independence structures in
  columns.
\newblock {\em Statist. Sinica}, 18(2):425--442.

\bibitem[Barbier et~al., 2021a]{barbier2021performance}
Barbier, J., Chen, W.-K., Panchenko, D., and S{\'a}enz, M. (2021a).
\newblock Performance of bayesian linear regression in a model with mismatch.
\newblock {\em arXiv preprint arXiv:2107.06936}.

\bibitem[Barbier et~al., 2019]{barbier_optimal_2019}
Barbier, J., Krzakala, F., Macris, N., Miolane, L., and Zdeborová, L. (2019).
\newblock Optimal errors and phase transitions in high-dimensional generalized
  linear models.
\newblock 116(12):5451--5460.

\bibitem[Barbier et~al., 2018]{barbier2018mutual}
Barbier, J., Macris, N., Maillard, A., and Krzakala, F. (2018).
\newblock The mutual information in random linear estimation beyond iid
  matrices.
\newblock In {\em 2018 IEEE International Symposium on Information Theory
  (ISIT)}, pages 1390--1394. IEEE.

\bibitem[Barbier et~al., 2021b]{JeanOverlap}
Barbier, J., Panchenko, D., and Sáenz, M. (2021b).
\newblock {Strong replica symmetry for high-dimensional disordered log-concave
  Gibbs measures}.
\newblock {\em Information and Inference: A Journal of the IMA},
  11(3):1079--1108.

\bibitem[Bartlett et~al., 2020]{Bartlett2020}
Bartlett, P.~L., Long, P.~M., Lugosi, G., and Tsigler, A. (2020).
\newblock Benign overfitting in linear regression.
\newblock {\em Proceedings of the National Academy of Sciences},
  117(48):30063--30070.

\bibitem[Bayati and Montanari, 2011]{bayati2011dynamics}
Bayati, M. and Montanari, A. (2011).
\newblock The dynamics of message passing on dense graphs, with applications to
  compressed sensing.
\newblock {\em IEEE Transactions on Information Theory}, 57(2):764--785.

\bibitem[Bayati and Montanari, 2012]{6069859}
Bayati, M. and Montanari, A. (2012).
\newblock The lasso risk for gaussian matrices.
\newblock {\em IEEE Transactions on Information Theory}, 58(4):1997--2017.

\bibitem[Belkin et~al., 2019]{Belkin2019}
Belkin, M., Hsu, D., Ma, S., and Mandal, S. (2019).
\newblock Reconciling modern machine-learning practice and the classical
  bias-variance trade-off.
\newblock {\em Proceedings of the National Academy of Sciences},
  116(32):15849--15854.

\bibitem[Benigni and P{\'e}ch{\'e}, 2021]{Benigni2021}
Benigni, L. and P{\'e}ch{\'e}, S. (2021).
\newblock {Eigenvalue distribution of some nonlinear models of random
  matrices}.
\newblock {\em Electronic Journal of Probability}, 26(none):1 -- 37.

\bibitem[Berthier et~al., 2019]{Berthier2019}
Berthier, R., Montanari, A., and Nguyen, P.-M. (2019).
\newblock {State evolution for approximate message passing with non-separable
  functions}.
\newblock {\em Information and Inference: A Journal of the IMA}, 9(1):33--79.

\bibitem[Brosse et~al., 2020]{brosse_last-layer_2020}
Brosse, N., Riquelme, C., Martin, A., Gelly, S., and Moulines, E. (2020).
\newblock On last-layer algorithms for classification: Decoupling
  representation from uncertainty estimation.

\bibitem[Bruce and Saad, 1994]{Bruce_1994}
Bruce, A.~D. and Saad, D. (1994).
\newblock Statistical mechanics of hypothesis evaluation.
\newblock {\em Journal of Physics A: Mathematical and General},
  27(10):3355--3363.

\bibitem[Candes and Sur, 2018]{candes_phase_2018}
Candes, E.~J. and Sur, P. (2018).
\newblock The phase transition for the existence of the maximum likelihood
  estimate in high-dimensional logistic regression.

\bibitem[Carmona and Hu, 2004]{Carmona2004}
Carmona, P. and Hu, Y. (2004).
\newblock Universality in sherrington-kirkpatrick's spin glass model.

\bibitem[Celentano et~al., 2020]{celentano2020estimation}
Celentano, M., Montanari, A., and Wu, Y. (2020).
\newblock The estimation error of general first order methods.
\newblock In {\em Conference on Learning Theory}, pages 1078--1141. PMLR.

\bibitem[Chatterjee, 2005]{Chatterjee2005}
Chatterjee, S. (2005).
\newblock A simple invariance theorem.

\bibitem[Chouard, 2022]{chouard2022quantitative}
Chouard, C. (2022).
\newblock Quantitative deterministic equivalent of sample covariance matrices
  with a general dependence structure.
\newblock {\em arXiv:2211.13044}.

\bibitem[Clart{\'{e}} et~al., 2022]{clarte_theoretical_2022}
Clart{\'{e}}, L., Loureiro, B., Krzakala, F., and Zdeborov{\'{a}}, L. (2022).
\newblock Theoretical characterization of uncertainty in high-dimensional
  linear classification.

\bibitem[Cornacchia et~al., 2022]{Cornacchia2022}
Cornacchia, E., Mignacco, F., Veiga, R., Gerbelot, C., Loureiro, B., and
  Zdeborová, L. (2022).
\newblock Learning curves for the multi-class teacher-student perceptron.

\bibitem[Dandi et~al., 2023]{dandi_universality_2023}
Dandi, Y., Stephan, L., Krzakala, F., Loureiro, B., and Zdeborová, L. (2023).
\newblock Universality laws for gaussian mixtures in generalized linear models.

\bibitem[D'Ascoli et~al., 2020]{Ascoli2020}
D'Ascoli, S., Refinetti, M., Biroli, G., and Krzakala, F. (2020).
\newblock Double trouble in double descent: Bias and variance(s) in the lazy
  regime.
\newblock In III, H.~D. and Singh, A., editors, {\em Proceedings of the 37th
  International Conference on Machine Learning}, volume 119 of {\em Proceedings
  of Machine Learning Research}, pages 2280--2290. PMLR.

\bibitem[Daxberger et~al., 2021]{daxberger_laplace_2021}
Daxberger, E., Kristiadi, A., Immer, A., Eschenhagen, R., Bauer, M., and
  Hennig, P. (2021).
\newblock Laplace {Redux} - {Effortless} {Bayesian} {Deep} {Learning}.
\newblock In Ranzato, M., Beygelzimer, A., Dauphin, Y., Liang, P.~S., and
  Vaughan, J.~W., editors, {\em Advances in {Neural} {Information} {Processing}
  {Systems}}, volume~34, pages 20089--20103. Curran Associates, Inc.

\bibitem[Deng et~al., 2022]{deng2022model}
Deng, Z., Kammoun, A., and Thrampoulidis, C. (2022).
\newblock A model of double descent for high-dimensional binary linear
  classification.
\newblock {\em Information and Inference: A Journal of the IMA},
  11(2):435--495.

\bibitem[Dhifallah and Lu, 2020]{Dhifallah2020}
Dhifallah, O. and Lu, Y.~M. (2020).
\newblock A precise performance analysis of learning with random features.

\bibitem[Dobriban and Wager, 2018]{Dobriban2018}
Dobriban, E. and Wager, S. (2018).
\newblock {High-dimensional asymptotics of prediction: Ridge regression and
  classification}.
\newblock {\em The Annals of Statistics}, 46(1):247 -- 279.

\bibitem[Donoho and Montanari, 2016]{donoho_high_2013}
Donoho, D. and Montanari, A. (2016).
\newblock High dimensional robust m-estimation: Asymptotic variance via
  approximate message passing.
\newblock {\em Probability Theory and Related Fields}, 166(3):935--969.

\bibitem[Donoho and Tanner, 2009]{Donoho_2009}
Donoho, D. and Tanner, J. (2009).
\newblock Observed universality of phase transitions in high-dimensional
  geometry, with implications for modern data analysis and signal processing.
\newblock {\em Philosophical Transactions of the Royal Society A: Mathematical,
  Physical and Engineering Sciences}, 367(1906):4273--4293.

\bibitem[Erdos et~al., 2009]{Erdos2009}
Erdos, L., Schlein, B., and Yau, H.-T. (2009).
\newblock Universality of random matrices and local relaxation flow.

\bibitem[Erdos et~al., 2010]{Erdos2010}
Erdos, L., Yau, H.-T., and Yin, J. (2010).
\newblock Bulk universality for generalized wigner matrices.

\bibitem[Gabri{\'e} et~al., 2018]{gabrie2018entropy}
Gabri{\'e}, M., Manoel, A., Luneau, C., Macris, N., Krzakala, F.,
  Zdeborov{\'a}, L., et~al. (2018).
\newblock Entropy and mutual information in models of deep neural networks.
\newblock {\em Advances in Neural Information Processing Systems}, 31.

\bibitem[Gal and Ghahramani, 2016]{gal_dropout_2016}
Gal, Y. and Ghahramani, Z. (2016).
\newblock Dropout as a bayesian approximation: Representing model uncertainty
  in deep learning.

\bibitem[Gawlikowski et~al., 2022]{gawlikowski_survey_2022}
Gawlikowski, J., Tassi, C. R.~N., Ali, M., Lee, J., Humt, M., Feng, J., Kruspe,
  A., Triebel, R., Jung, P., Roscher, R., Shahzad, M., Yang, W., Bamler, R.,
  and Zhu, X.~X. (2022).
\newblock A survey of uncertainty in deep neural networks.

\bibitem[Geiger et~al., 2019]{PhysRevE.100.012115}
Geiger, M., Spigler, S., d'Ascoli, S., Sagun, L., Baity-Jesi, M., Biroli, G.,
  and Wyart, M. (2019).
\newblock Jamming transition as a paradigm to understand the loss landscape of
  deep neural networks.
\newblock {\em Phys. Rev. E}, 100:012115.

\bibitem[Geman et~al., 1992]{geman1992neural}
Geman, S., Bienenstock, E., and Doursat, R. (1992).
\newblock Neural networks and the bias/variance dilemma.
\newblock {\em Neural computation}, 4(1):1--58.

\bibitem[Gerace et~al., 2022]{gerace2022gaussian}
Gerace, F., Krzakala, F., Loureiro, B., Stephan, L., and Zdeborová, L. (2022).
\newblock Gaussian universality of linear classifiers with random labels in
  high-dimension.

\bibitem[Gerace et~al., 2020]{gerace_generalisation_2020}
Gerace, F., Loureiro, B., Krzakala, F., Mezard, M., and Zdeborova, L. (2020).
\newblock Generalisation error in learning with random features and the hidden
  manifold model.
\newblock In III, H.~D. and Singh, A., editors, {\em Proceedings of the 37th
  International Conference on Machine Learning}, volume 119 of {\em Proceedings
  of Machine Learning Research}, pages 3452--3462. PMLR.

\bibitem[Gerbelot and Berthier, 2021]{Gerbelot21}
Gerbelot, C. and Berthier, R. (2021).
\newblock Graph-based approximate message passing iterations.

\bibitem[Goldt et~al., 2022]{goldt_gaussian_2021}
Goldt, S., Loureiro, B., Reeves, G., Krzakala, F., Mezard, M., and Zdeborova,
  L. (2022).
\newblock The gaussian equivalence of generative models for learning with
  shallow neural networks.
\newblock In Bruna, J., Hesthaven, J., and Zdeborova, L., editors, {\em
  Proceedings of the 2nd Mathematical and Scientific Machine Learning
  Conference}, volume 145 of {\em Proceedings of Machine Learning Research},
  pages 426--471. PMLR.

\bibitem[Goldt et~al., 2020]{Goldt2020}
Goldt, S., M\'ezard, M., Krzakala, F., and Zdeborov\'a, L. (2020).
\newblock Modeling the influence of data structure on learning in neural
  networks: The hidden manifold model.
\newblock {\em Phys. Rev. X}, 10:041044.

\bibitem[Graves, 2011]{NIPS2011_7eb3c8be}
Graves, A. (2011).
\newblock Practical variational inference for neural networks.
\newblock In Shawe-Taylor, J., Zemel, R., Bartlett, P., Pereira, F., and
  Weinberger, K., editors, {\em Advances in Neural Information Processing
  Systems}, volume~24. Curran Associates, Inc.

\bibitem[Guo et~al., 2017]{guo_calibration_2017}
Guo, C., Pleiss, G., Sun, Y., and Weinberger, K.~Q. (2017).
\newblock On calibration of modern neural networks.
\newblock In Precup, D. and Teh, Y.~W., editors, {\em Proceedings of the 34th
  International Conference on Machine Learning}, volume~70 of {\em Proceedings
  of Machine Learning Research}, pages 1321--1330. PMLR.

\bibitem[Hachem et~al., 2007]{hachem2007deterministic}
Hachem, W., Loubaton, P., and Najim, J. (2007).
\newblock Deterministic equivalents for certain functionals of large random
  matrices.

\bibitem[Hastie et~al., 2022]{10.1214/21-AOS2133}
Hastie, T., Montanari, A., Rosset, S., and Tibshirani, R.~J. (2022).
\newblock {Surprises in high-dimensional ridgeless least squares
  interpolation}.
\newblock {\em The Annals of Statistics}, 50(2):949 -- 986.

\bibitem[Hein et~al., 2019]{Hein_2019_CVPR}
Hein, M., Andriushchenko, M., and Bitterwolf, J. (2019).
\newblock Why relu networks yield high-confidence predictions far away from the
  training data and how to mitigate the problem.
\newblock In {\em Proceedings of the IEEE/CVF Conference on Computer Vision and
  Pattern Recognition (CVPR)}.

\bibitem[Hu and Lu, 2020]{Hu2020}
Hu, H. and Lu, Y.~M. (2020).
\newblock Universality laws for high-dimensional learning with random features.

\bibitem[Jospin et~al., 2022]{jospin_hands_2022}
Jospin, L.~V., Laga, H., Boussaid, F., Buntine, W., and Bennamoun, M. (2022).
\newblock Hands-on bayesian neural networks—a tutorial for deep learning
  users.
\newblock {\em IEEE Computational Intelligence Magazine}, 17(2):29--48.

\bibitem[Karoui, 2009]{Karoui2009}
Karoui, N.~E. (2009).
\newblock {Concentration of measure and spectra of random matrices:
  Applications to correlation matrices, elliptical distributions and beyond}.
\newblock {\em The Annals of Applied Probability}, 19(6):2362 -- 2405.

\bibitem[Karoui, 2010]{10.1214/08-AOS648}
Karoui, N.~E. (2010).
\newblock {The spectrum of kernel random matrices}.
\newblock {\em The Annals of Statistics}, 38(1):1 -- 50.

\bibitem[Karoui et~al., 2013]{Karoui2013}
Karoui, N.~E., Bean, D., Bickel, P.~J., Lim, C., and Yu, B. (2013).
\newblock On robust regression with high-dimensional predictors.
\newblock {\em Proceedings of the National Academy of Sciences},
  110(36):14557--14562.

\bibitem[Knowles and Yin, 2017]{Knowles2017}
Knowles, A. and Yin, J. (2017).
\newblock Anisotropic local laws for random matrices.
\newblock {\em Probability Theory and Related Fields}, 169(1):257--352.

\bibitem[Korada and Montanari, 2010]{Montanari2010}
Korada, S.~B. and Montanari, A. (2010).
\newblock Applications of lindeberg principle in communications and statistical
  learning.

\bibitem[Kristiadi et~al., 2020]{kristiadi_being_2020}
Kristiadi, A., Hein, M., and Hennig, P. (2020).
\newblock Being bayesian, even just a bit, fixes overconfidence in {R}e{LU}
  networks.
\newblock In III, H.~D. and Singh, A., editors, {\em Proceedings of the 37th
  International Conference on Machine Learning}, volume 119 of {\em Proceedings
  of Machine Learning Research}, pages 5436--5446. PMLR.

\bibitem[Krzakala et~al., 2012]{Krzakala_2012}
Krzakala, F., M{\'{e}}zard, M., Sausset, F., Sun, Y., and Zdeborov{\'{a}}, L.
  (2012).
\newblock Probabilistic reconstruction in compressed sensing: algorithms, phase
  diagrams, and threshold achieving matrices.
\newblock {\em Journal of Statistical Mechanics: Theory and Experiment},
  2012(08):P08009.

\bibitem[Lakshminarayanan et~al., 2017]{NIPS2017_9ef2ed4b}
Lakshminarayanan, B., Pritzel, A., and Blundell, C. (2017).
\newblock Simple and scalable predictive uncertainty estimation using deep
  ensembles.
\newblock In Guyon, I., Luxburg, U.~V., Bengio, S., Wallach, H., Fergus, R.,
  Vishwanathan, S., and Garnett, R., editors, {\em Advances in Neural
  Information Processing Systems}, volume~30. Curran Associates, Inc.

\bibitem[Liao and Couillet, 2018]{Liao2018}
Liao, Z. and Couillet, R. (2018).
\newblock On the spectrum of random features maps of high dimensional data.
\newblock In Dy, J. and Krause, A., editors, {\em Proceedings of the 35th
  International Conference on Machine Learning}, volume~80 of {\em Proceedings
  of Machine Learning Research}, pages 3063--3071. PMLR.

\bibitem[Liao and Mahoney, 2021]{NEURIPS2021_a7d8ae45}
Liao, Z. and Mahoney, M.~W. (2021).
\newblock Hessian eigenspectra of more realistic nonlinear models.
\newblock In Ranzato, M., Beygelzimer, A., Dauphin, Y., Liang, P., and Vaughan,
  J.~W., editors, {\em Advances in Neural Information Processing Systems},
  volume~34, pages 20104--20117. Curran Associates, Inc.

\bibitem[Liu et~al., 2020]{NEURIPS2020_543e8374}
Liu, J., Lin, Z., Padhy, S., Tran, D., Bedrax~Weiss, T., and Lakshminarayanan,
  B. (2020).
\newblock Simple and principled uncertainty estimation with deterministic deep
  learning via distance awareness.
\newblock In Larochelle, H., Ranzato, M., Hadsell, R., Balcan, M., and Lin, H.,
  editors, {\em Advances in Neural Information Processing Systems}, volume~33,
  pages 7498--7512. Curran Associates, Inc.

\bibitem[Louart et~al., 2018]{louart2018random}
Louart, C., Liao, Z., and Couillet, R. (2018).
\newblock {A} random matrix approach to neural networks.
\newblock {\em Ann. Appl. Probab.}, 28(2):1190--1248.

\bibitem[Loureiro et~al., 2021a]{loureiro_learning_2021}
Loureiro, B., Gerbelot, C., Cui, H., Goldt, S., Krzakala, F., Mezard, M., and
  Zdeborov\'{a}, L. (2021a).
\newblock Learning curves of generic features maps for realistic datasets with
  a teacher-student model.
\newblock In Ranzato, M., Beygelzimer, A., Dauphin, Y., Liang, P., and Vaughan,
  J.~W., editors, {\em Advances in Neural Information Processing Systems},
  volume~34, pages 18137--18151. Curran Associates, Inc.

\bibitem[Loureiro et~al., 2022]{loureiro_fluctuations_2022}
Loureiro, B., Gerbelot, C., Refinetti, M., Sicuro, G., and Krzakala, F. (2022).
\newblock Fluctuations, bias, variance \& ensemble of learners: Exact
  asymptotics for convex losses in high-dimension.
\newblock In Chaudhuri, K., Jegelka, S., Song, L., Szepesvari, C., Niu, G., and
  Sabato, S., editors, {\em Proceedings of the 39th International Conference on
  Machine Learning}, volume 162 of {\em Proceedings of Machine Learning
  Research}, pages 14283--14314. PMLR.

\bibitem[Loureiro et~al., 2021b]{NEURIPS2021_543e8374}
Loureiro, B., Sicuro, G., Gerbelot, C., Pacco, A., Krzakala, F., and
  Zdeborov\'{a}, L. (2021b).
\newblock Learning gaussian mixtures with generalized linear models: Precise
  asymptotics in high-dimensions.
\newblock In Ranzato, M., Beygelzimer, A., Dauphin, Y., Liang, P., and Vaughan,
  J.~W., editors, {\em Advances in Neural Information Processing Systems},
  volume~34, pages 10144--10157. Curran Associates, Inc.

\bibitem[MacKay, 1992]{10.1162/neco.1992.4.3.415}
MacKay, D. J.~C. (1992).
\newblock {Bayesian Interpolation}.
\newblock {\em Neural Computation}, 4(3):415--447.

\bibitem[MacKay, 1996]{MacKay1996}
MacKay, D. J.~C. (1996).
\newblock {\em Hyperparameters: Optimize, or Integrate Out?}, pages 43--59.
\newblock Springer Netherlands, Dordrecht.

\bibitem[Maddox et~al., 2019]{maddox_simple_2019}
Maddox, W.~J., Garipov, T., Izmailov, P., Vetrov, D.~P., and Wilson, A.~G.
  (2019).
\newblock A simple baseline for bayesian uncertainty in deep learning.
\newblock In {\em NeurIPS}.

\bibitem[Mai et~al., 2019]{8683376}
Mai, X., Liao, Z., and Couillet, R. (2019).
\newblock A large scale analysis of logistic regression: Asymptotic performance
  and new insights.
\newblock In {\em ICASSP 2019 - 2019 IEEE International Conference on
  Acoustics, Speech and Signal Processing (ICASSP)}, pages 3357--3361.

\bibitem[Marion and Saad, 1994]{marion_hyperparameters_1994}
Marion, G. and Saad, D. (1994).
\newblock Hyperparameters evidence and generalisation for an unrealisable rule.
\newblock In Tesauro, G., Touretzky, D., and Leen, T., editors, {\em Advances
  in Neural Information Processing Systems}, volume~7. MIT Press.

\bibitem[Marion and Saad, 1995]{Marion_1995}
Marion, G. and Saad, D. (1995).
\newblock A statistical mechanical analysis of a bayesian inference scheme for
  an unrealizable rule.
\newblock {\em Journal of Physics A: Mathematical and General},
  28(8):2159--2171.

\bibitem[Mattei, 2019]{Mattei2019}
Mattei, P.-A. (2019).
\newblock A parsimonious tour of bayesian model uncertainty.

\bibitem[Mei and Montanari, 2022]{mei_generalization_2022}
Mei, S. and Montanari, A. (2022).
\newblock The generalization error of random features regression: Precise
  asymptotics and the double descent curve.
\newblock {\em Communications on Pure and Applied Mathematics}, 75(4):667--766.

\bibitem[Mezard and Montanari, 2009]{mezard2009information}
Mezard, M. and Montanari, A. (2009).
\newblock {\em Information, physics, and computation}.
\newblock Oxford University Press.

\bibitem[M{\'e}zard et~al., 1987]{mezard1987spin}
M{\'e}zard, M., Parisi, G., and Virasoro, M.~A. (1987).
\newblock {\em Spin glass theory and beyond: An Introduction to the Replica
  Method and Its Applications}, volume~9.
\newblock World Scientific Publishing Company.

\bibitem[Montanari et~al., 2020]{montanari2020generalization}
Montanari, A., Ruan, F., Sohn, Y., and Yan, J. (2020).
\newblock The generalization error of max-margin linear classifiers:
  High-dimensional asymptotics in the overparametrized regime.
\newblock {\em arXiv:1911.01544 [math.ST]}.

\bibitem[Montanari and Saeed, 2022]{Montanari2022}
Montanari, A. and Saeed, B.~N. (2022).
\newblock Universality of empirical risk minimization.
\newblock In Loh, P.-L. and Raginsky, M., editors, {\em Proceedings of Thirty
  Fifth Conference on Learning Theory}, volume 178 of {\em Proceedings of
  Machine Learning Research}, pages 4310--4312. PMLR.

\bibitem[Mukhoti et~al., 2020]{NEURIPS2020_aeb7b30e}
Mukhoti, J., Kulharia, V., Sanyal, A., Golodetz, S., Torr, P., and Dokania, P.
  (2020).
\newblock Calibrating deep neural networks using focal loss.
\newblock In Larochelle, H., Ranzato, M., Hadsell, R., Balcan, M., and Lin, H.,
  editors, {\em Advances in Neural Information Processing Systems}, volume~33,
  pages 15288--15299. Curran Associates, Inc.

\bibitem[Nakkiran et~al., 2020]{Nakkiran2020}
Nakkiran, P., Kaplun, G., Bansal, Y., Yang, T., Barak, B., and Sutskever, I.
  (2020).
\newblock Deep double descent: Where bigger models and more data hurt.
\newblock In {\em 8th International Conference on Learning Representations,
  {ICLR} 2020, Addis Ababa, Ethiopia, April 26-30, 2020}. OpenReview.net.

\bibitem[Nakkiran et~al., 2021]{nakkiran2021optimal}
Nakkiran, P., Venkat, P., Kakade, S.~M., and Ma, T. (2021).
\newblock Optimal regularization can mitigate double descent.
\newblock In {\em International Conference on Learning Representations}.

\bibitem[Nguyen et~al., 2015]{7298640}
Nguyen, A., Yosinski, J., and Clune, J. (2015).
\newblock Deep neural networks are easily fooled: High confidence predictions
  for unrecognizable images.
\newblock In {\em 2015 IEEE Conference on Computer Vision and Pattern
  Recognition (CVPR)}, pages 427--436.

\bibitem[Niculescu-Mizil and Caruana, 2005]{10.1145/1102351.1102430}
Niculescu-Mizil, A. and Caruana, R. (2005).
\newblock Predicting good probabilities with supervised learning.
\newblock ICML '05, page 625–632, New York, NY, USA. Association for
  Computing Machinery.

\bibitem[Panahi and Hassibi, 2017]{NIPS2017_136f9513}
Panahi, A. and Hassibi, B. (2017).
\newblock A universal analysis of large-scale regularized least squares
  solutions.
\newblock In Guyon, I., Luxburg, U.~V., Bengio, S., Wallach, H., Fergus, R.,
  Vishwanathan, S., and Garnett, R., editors, {\em Advances in Neural
  Information Processing Systems}, volume~30. Curran Associates, Inc.

\bibitem[Pennington and Worah, 2017]{NIPS2017_0f3d014e}
Pennington, J. and Worah, P. (2017).
\newblock Nonlinear random matrix theory for deep learning.
\newblock In Guyon, I., Luxburg, U.~V., Bengio, S., Wallach, H., Fergus, R.,
  Vishwanathan, S., and Garnett, R., editors, {\em Advances in Neural
  Information Processing Systems}, volume~30. Curran Associates, Inc.

\bibitem[Platt, 2000]{platt_2000}
Platt, J. (2000).
\newblock Probabilistic outputs for support vector machines and comparisons to
  regularized likelihood methods.
\newblock {\em Adv. Large Margin Classif.}, 10.

\bibitem[Rahimi and Recht, 2007]{Rahimi2007}
Rahimi, A. and Recht, B. (2007).
\newblock Random features for large-scale kernel machines.
\newblock In Platt, J., Koller, D., Singer, Y., and Roweis, S., editors, {\em
  Advances in Neural Information Processing Systems}, volume~20. Curran
  Associates, Inc.

\bibitem[Rangan, 2011]{rangan2011generalized}
Rangan, S. (2011).
\newblock Generalized approximate message passing for estimation with random
  linear mixing.
\newblock In {\em 2011 IEEE International Symposium on Information Theory
  Proceedings}. IEEE.

\bibitem[Ritter et~al., 2018]{ritter_scalable_2018}
Ritter, H., Botev, A., and Barber, D. (2018).
\newblock A scalable laplace approximation for neural networks.
\newblock In {\em International Conference on Learning Representations}.

\bibitem[Rosset et~al., 2003]{NIPS2003_0fe47339}
Rosset, S., Zhu, J., and Hastie, T. (2003).
\newblock Margin maximizing loss functions.
\newblock In Thrun, S., Saul, L., and Sch\"{o}lkopf, B., editors, {\em Advances
  in Neural Information Processing Systems}, volume~16. MIT Press.

\bibitem[Schröder et~al., 2023]{Schroder2023}
Schröder, D., Cui, H., Dmitriev, D., and Loureiro, B. (2023).
\newblock Deterministic equivalent and error universality of deep random
  features learning.

\bibitem[Sollich, 1998]{NIPS1998_5cbdfd0d}
Sollich, P. (1998).
\newblock Learning curves for gaussian processes.
\newblock In Kearns, M., Solla, S., and Cohn, D., editors, {\em Advances in
  Neural Information Processing Systems}, volume~11. MIT Press.

\bibitem[Sollich, 2001]{NIPS2001_d68a1827}
Sollich, P. (2001).
\newblock Gaussian process regression with mismatched models.
\newblock In Dietterich, T., Becker, S., and Ghahramani, Z., editors, {\em
  Advances in Neural Information Processing Systems}, volume~14. MIT Press.

\bibitem[Spigler et~al., 2019]{Spigler_2019}
Spigler, S., Geiger, M., d'Ascoli, S., Sagun, L., Biroli, G., and Wyart, M.
  (2019).
\newblock A jamming transition from under- to over-parametrization affects
  generalization in deep learning.
\newblock {\em Journal of Physics A: Mathematical and Theoretical},
  52(47):474001.

\bibitem[Stojnic, 2013]{Stojnic2013}
Stojnic, M. (2013).
\newblock A framework to characterize performance of lasso algorithms.

\bibitem[Sur and Cand\`es, 2019]{sur_modern_2018}
Sur, P. and Cand\`es, E.~J. (2019).
\newblock A modern maximum-likelihood theory for high-dimensional logistic
  regression.
\newblock {\em Proceedings of the National Academy of Sciences},
  116(29):14516--14525.

\bibitem[Tao and Vu, 2012]{Tao2012}
Tao, T. and Vu, V. (2012).
\newblock Random matrices: universal properties of eigenvectors.
\newblock {\em Random Matrices: Theory and Applications}, 01(01):1150001.

\bibitem[Thrampoulidis et~al., 2018]{8365826}
Thrampoulidis, C., Abbasi, E., and Hassibi, B. (2018).
\newblock Precise error analysis of regularized $m$ -estimators in high
  dimensions.
\newblock {\em IEEE Transactions on Information Theory}, 64(8):5592--5628.

\bibitem[Thrampoulidis et~al., 2015]{Thrampoulidis15}
Thrampoulidis, C., Oymak, S., and Hassibi, B. (2015).
\newblock Regularized linear regression: A precise analysis of the estimation
  error.
\newblock In Grünwald, P., Hazan, E., and Kale, S., editors, {\em Proceedings
  of The 28th Conference on Learning Theory}, volume~40 of {\em Proceedings of
  Machine Learning Research}, pages 1683--1709, Paris, France. PMLR.

\bibitem[Wasserman, 2013]{wasserman2013all}
Wasserman, L. (2013).
\newblock {\em All of Statistics: A Concise Course in Statistical Inference}.
\newblock Springer Texts in Statistics. Springer New York.

\bibitem[Welling and Teh, 2011]{welling_bayesian_2011}
Welling, M. and Teh, Y.~W. (2011).
\newblock Bayesian learning via stochastic gradient langevin dynamics.
\newblock In {\em Proceedings of the 28th International Conference on
  International Conference on Machine Learning}, ICML'11, page 681–688,
  Madison, WI, USA. Omnipress.

\bibitem[Zdeborov{\'a} and Krzakala, 2016]{zdeborova2016statistical}
Zdeborov{\'a}, L. and Krzakala, F. (2016).
\newblock Statistical physics of inference: Thresholds and algorithms.
\newblock {\em Advances in Physics}.

\end{thebibliography}

\vfill

\renewcommand\thesection{\Alph{section}}
\setcounter{section}{0}
\renewcommand*{\theHsection}{chX.\the\value{section}}

\appendix

\newpage
\section*{Appendix}
\section{Gaussian equivalence}
\label{app:eq_model}
As discussed in the main, our analysis of the random features model introduced in Sec.~\ref{sec:rf} relies on a recent progress in high-dimensional statistics known as the \emph{Gaussian equivalence theorem} (GET). In this Appendix, we recall the reader of the main results in this line of work.

\subsection{Informal discussion and key idea}
\label{sec:app:informal}
For convenience let's first recall the model of interest. Consider data $(\vec{x}^{\mu},y^{\mu})_{\mu\in[n]}\in\mathbb{R}^{d}\times \mathcal{Y}$ which we assume was independently drawn from the following model:
\begin{align}
y^{\mu} = f_{\star}(\vec{\theta}_{\star}^{\top}\vec{x}^{\mu}), && \vec{x}^{\mu}\sim\mathcal{N}(\vec{0},\sfrac{1}{d}\mat{I}_{d}), && \vec{\theta}_{\star}\sim\mathcal{N}(\vec{0},\mat{I}_{d})
\end{align}
\noindent where $f_{\star}:\mathbb{R}\to\mathcal{Y}$ is an activation function, which we assume can be potentially stochastic (as in the logit model studied in the main, Sec.~\ref{sec:rf}). For convenience, define the matrix $\mat{X}\in\mathbb{R}^{n\times d}$ and the vector $\vec{y}\in\mathcal{Y}^{n}$ obtained by stacking together $\vec{x}^{\mu}$ and $\vec{y}^{\mu}$ row-wise. We are interested in studying the following generalized linear predictor:
\begin{align}
\hat{y} = f(\hat{\vec{\theta}}^{\top}\vec{\varphi}(\vec{x}))
\end{align}
\noindent where $\vec{\varphi}:\mathbb{R}^{d}\to\mathbb{R}^{p}$ is a feature map, and $\hat{\vec{\theta}}\in\mathbb{R}^{p}$ are weights, which generally depend on the training data $\hat{\vec{\theta}} = \hat{\vec{\theta}}(\varphi(\mat{X}),\vec{\theta}_{\star})$, where for convenience we defined the feature matrix $\varphi(\mat{X})\in\mathbb{R}^{n\times p}$. The \emph{random features model} correspond to the specific feature map:
\begin{align}
\vec{\varphi}(\vec{x})  = \frac{1}{\sqrt{p}}\phi(\mat{F}\vec{x})
\end{align}
\noindent where $\mat{F}\in\mathbb{R}^{p\times d}$ is a random matrix and $\phi:\mathbb{R}\to\mathbb{R}$ is a component-wise activation function. Our key goal is to characterize the statistics of the model, i.e. to compute expectations over functions of the test and training predictions:
\begin{align}
\label{eq:app:expectations}
\mathbb{E}_{\mat{X},\vec{x},\vec{\theta}_{\star}}\left[\psi\left(f_{\star}(\vec{\theta}_{\star}^{\top}\vec{x}),f(\hat{\vec{\theta}}(\varphi(\mat{X}),\vec{\theta}_{\star})^{\top}\vec{\varphi}(\vec{x}))\right)\right], && \mathbb{E}_{\mat{X},\vec{\theta}_{\star}}\left[\tilde{\psi}\left(f_{\star}(\mat{X}\vec{\theta}_{\star}), f(\vec{\varphi}(\mat{X})\hat{\vec{\theta}}(\varphi(\mat{X}),\vec{\theta}_{\star})\right)\right]
\end{align}
\noindent where $\psi:\mathcal{Y}^{2}\to\mathbb{R}$ and $\tilde{\psi}:\mathcal{Y}^{2n}\to\mathbb{R}$ are test functions. Note in particular that the generalization errors eqs.~\eqref{eq:generror}, \eqref{eq:genloss} and the density eq.~\eqref{eq:definition_rho} are examples of the above. 

Different tools from high-dimensional statistics have been designed to compute such expectations in the limit where $n,p,d\to\infty$ at fixed rates $\alpha = \sfrac{n}{p}$ and $\gamma=\sfrac{d}{p}$, both rigorously and heuristically, e.g. the replica method \cite{mezard1987spin, zdeborova2016statistical}, CGMT \cite{Stojnic2013, Thrampoulidis15, 8365826}, approximate message passing \cite{bayati2011dynamics,donoho_high_2013, Krzakala_2012, Gerbelot21}, cavity / leave-one-out method \cite{mezard2009information, Karoui2013, 8683376}, tools from random matrix theory \cite{Karoui2009, Dobriban2018}, among others. A shortcoming of all the aforementioned methods is that they typically rely on the Gaussianity of the input data, and therefore are not directly applicable to the random features model (note that even if $\mat{F}\vec{x}\in\mathbb{R}^{p}$ is a Gaussian vector, the features $\phi(\mat{F}\vec{x})$ are \emph{not} Gaussian).

Gaussian equivalence provides a surprising answer to this hurdle. Assuming for simplicity that the features are centred $\mathbb{E}_{\vec{x}}[\vec{\varphi}(\vec{x})]=0$ and defining the covariances:
\begin{align}
\Phi = \mathbb{E}_{\vec{x}}[\vec{\varphi}(\vec{x})\vec{x}^{\top}]\in\mathbb{R}^{p\times d}, && \Omega = \mathbb{E}_{\vec{x}}[\vec{\varphi}(\vec{x})\vec{\varphi}(\vec{x})^{\top}]\in\mathbb{R}^{p\times p}
\end{align}
\noindent Gaussian equivalence states that in the high-dimensional limit, the expectations in eq.~\eqref{eq:app:expectations} can be computed for an \emph{equivalent Gaussian model} with matching second moments:
\begin{align}
\label{eq:app:ge}
\mathbb{E}_{\mat{X},\vec{x},\vec{\theta}_{\star}}\left[\psi\left(f_{\star}(\vec{\theta}_{\star}^{\top}\vec{x}),f(\hat{\vec{\theta}}(\varphi(\mat{X}),\vec{\theta}_{\star})^{\top}\vec{\varphi}(\vec{x}))\right)\right]&\xrightarrow[p\to\infty]{}\mathbb{E}_{\mat{X},\mat{V}, \vec{x}, \vec{v},\vec{\theta}_{\star}}\left[\psi\left(f_{\star}(\vec{\theta}_{\star}^{\top}\vec{x}),f(\hat{\vec{\theta}}(\mat{V},\vec{\theta}_{\star})^{\top}\vec{v})\right)\right]\notag\\
\mathbb{E}_{\mat{X},\vec{\theta}_{\star}}\left[\tilde{\psi}\left(f_{\star}(\mat{X}\vec{\theta}_{\star}), f(\vec{\varphi}(\mat{X})\hat{\vec{\theta}}(\varphi(\mat{X}),\vec{\theta}_{\star})\right)\right]&\xrightarrow[p\to\infty]{}\mathbb{E}_{\mat{X},\mat{V},\vec{\theta}_{\star}}\left[\tilde{\psi}\left(f_{\star}(\mat{X}\vec{\theta}_{\star}), f(\mat{V}\hat{\vec{\theta}}(\mat{V},\vec{\theta}_{\star})\right)\right]
\end{align}
\noindent where $(\vec{x}^{\mu},\vec{v}^{\mu})_{\mu\in[n]}$ are $n$ independent samples of jointly Gaussian random variables:
\begin{align}
(\vec{x},\vec{v}) \sim\mathcal{N}\left(\vec{0}_{d+p},\begin{bmatrix} \sfrac{\mat{I}_{d}}{d} & \sfrac{\Phi}{\sqrt{pd}}  \\ \sfrac{\Phi^{\top}}{\sqrt{pd}} & \sfrac{\Omega}{p} \end{bmatrix}\right)
\end{align}
\noindent and as before we defined the matrices $\mat{X}\in\mathbb{R}^{n\times d}$ and $\mat{V}\in\mathbb{R}^{n\times p}$ by stacking the samples row-wise. For the random features model $\vec{\varphi}(\vec{x}) = \sfrac{1}{\sqrt{p}}~\phi(\mat{F}\vec{x})$, the asymptotic covariances $\Phi, \Omega$ can be computed explicitly, and are given by:
\begin{align}
\Phi \underset{p\to\infty}{\asymp} \frac{\kappa_{1}}{\sqrt{p}}\mat{F}, && \Omega \underset{p\to\infty}{\asymp} \kappa_{0}^2 \mat{1}_{p}\mat{1}_{p}^{\top}+\frac{\kappa_{1}^2}{p}\mat{F}\mat{F}^{\top}+\kappa_{\star}^2\mat{I}_{p}
\end{align}
\noindent where the constants $(\kappa_{0},\kappa_{1},\kappa_{\star})\in\mathbb{R}^{3}$ are the Gaussian moments of the activation function $\phi$:
\begin{align}
\kappa_{0} = \mathbb{E}_{z\sim\mathcal{N}(0,1)}[\phi(z)], && \kappa_{1} = \mathbb{E}_{z\sim\mathcal{N}(0,1)}[\phi'(z)], && \kappa_{\star}=\sqrt{\mathbb{E}_{z\sim\mathcal{N}(0,1)}[\phi(z)^2]-\kappa_{1}^2-\kappa_{0}^2}.
\end{align}
Therefore, for the random features problem the Gaussian equivalent model can be written explicitly in terms of the input data $\vec{x}\sim\mathcal{N}(\vec{0},\sfrac{1}{d}\mat{I}_{d})$ and the weights $\mat{F}\in\mathbb{R}^{p\times d}$ as: 
\begin{align}
\vec{v} = \kappa_{0}\mat{1}_{p}+\frac{\kappa_{1}}{\sqrt{p}}\mat{F}\vec{x} + \kappa_{\star}\vec{z}
\label{app:get:eq}
\end{align}
\noindent where $\vec{z}\sim\mathcal{N}(\vec{0},\mat{I}_{p})$ is an effective noise vector which is independent from $\mat{F}$, $\vec{x}$ and $\vec{\theta}_{\star}$. In summary, in the high-dimensional limit the statistics of the random features model is equivalent to the statistics of a Gaussian equivalent model with noisy features. The later can be directly characterized by the methods mentioned in the last paragraph.

\subsection{Gaussian equivalence theorem}
\label{sec:app:get}
\paragraph{Related literature:} Gaussian universality has a long history, and appeared in many contexts ranging from random matrix theory \cite{Erdos2009, Erdos2010, Tao2012} to signal processing \cite{Donoho_2009}, statistical learning \cite{NEURIPS2019_dffbb6ef, NIPS2017_136f9513, Montanari2010} and physics \cite{Carmona2004, Chatterjee2005}. In the context of random features, a precursor of the result discussed here is the observation that for Gaussian data the high-dimensional limit of kernel spectra is linearly related the spectrum of the inputs \cite{10.1214/08-AOS648}. This result was generalized to random features kernels in \cite{NIPS2017_0f3d014e, Liao2018}, and leveraged by \cite{mei_generalization_2022} to derive exact asymptotic expressions for the generalization and training error of random features regression. For ridge regression, computing the performance boils down to computing traces of the feature matrix, and therefore Gaussian universality can be seen as an instance of spectral universality of random matrices \cite{Benigni2021}. In non-linear problems where a closed-form solution is not available (as in our classification setting), Gaussian universality for the random features model was shown to hold for the empirical risk minimizer in \cite{gerace_generalisation_2020, goldt_gaussian_2021}, and was later proven in \cite{Hu2020, Dhifallah2020}. More recently, \cite{Montanari2022} extended this proof to more general empirical risk minimization problems, and showed the universality of the free energy density associated to the empirical risk at finite temperature.
Finally, \cite{dandi_universality_2023} generalized this free energy universality result and proved universal weak convergence for both empirical risk minimizers and sampling from the empirical Bayes measure. In particular, this result covers the Bayesian classifier and the uncertainty quantification metrics (calibration, ECE, etc.) studied here. This version is better suited to our discussion, since it is closer to the Bayesian classifiers studied in the main.

\begin{theorem}[Lemma 1 from \cite{Montanari2022}]
Consider the random features model discussed in Sec.~\ref{sec:app:informal}. Assume that the activation $\phi$ is three times differentiable and has zero Gaussian mean $\kappa_{0}=\mathbb{E}_{z\sim\mathcal{N}(0,1)}[\phi(z)]=0$ and that the weight matrix $\mat{F}\in\mathbb{R}^{p\times d}$ has rows $\vec{f}_{i}\sim\mathcal{N}(\vec{0},\mat{I}_{d})$ for $i\in [p]$. Further, assume that the function $f_{\star}$ is Lipschitz with i.i.d. bounded sub-Gaussian noise.  Define the free energy density at inverse temperature $\beta>0$:
\begin{align}
f_{\beta}(\varphi({X})) = -\frac{1}{p\beta}\log\int_{\mathbb{R}^{p}}\dd\vec{\theta} ~\exp\left\{-\beta\left[-\sum\limits_{\mu=1}^{n}\log\sigma\left(f_{\star}(\vec{\theta}_{\star}^{\top}\vec{x}^{\mu})\times \vec{\theta}^{\top}\vec{\varphi}(\vec{x}^{\mu}))\right)+\frac{\lambda}{2}||\vec{\theta}||^{2}_{2}\right]\right\}.
\end{align}
Then for any bounded differentiable function $\psi$ with Lipschitz derivative we have:
\begin{align}
\lim\limits_{p\to\infty}\left|\mathbb{E}\left[\psi\left(f_{\beta}(\varphi({X})\right)\right] - \mathbb{E}\left[\psi\left(f_{\beta}(\mat{V})\right)\right]\right| = 0
\end{align}
\end{theorem}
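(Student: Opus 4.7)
The statement asserts a universality principle: the free energy associated with the true random features matrix $\varphi(\mat{X})$ has the same asymptotic distribution as the free energy of the matched-moment Gaussian surrogate $\mat{V}$ defined in eq.~\eqref{app:get:eq}. The natural strategy is a Lindeberg-type interpolation, swapping the feature rows one at a time and controlling the resulting telescoping sum by a Taylor expansion to third order. The zeroth-order terms cancel by construction, the first- and second-order terms cancel in expectation thanks to the matching first two moments of $\vec{\varphi}(\vec{x}^\mu)$ and $\vec{v}^\mu$, and the entire universality statement reduces to bounding the third-order remainder uniformly.

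\textbf{Step 1: Lindeberg interpolation.} View $f_\beta$ as a function $F(U_1,\dots,U_n)$ of the $n$ feature rows, where each $U_\mu \in \mathbb{R}^p$. Define intermediate matrices $W^{(\mu)}$ in which the first $\mu$ rows are $\vec{v}^{\nu}$ and the remaining $n-\mu$ rows are $\vec{\varphi}(\vec{x}^\nu)$. Then
\begin{equation*}
\mathbb{E}\psi(f_\beta(\varphi(X))) - \mathbb{E}\psi(f_\beta(\mat{V})) = \sum_{\mu=1}^{n} \bigl(\mathbb{E}\psi(F(W^{(\mu-1)})) - \mathbb{E}\psi(F(W^{(\mu)}))\bigr).
\end{equation*}
The plan is to bound each term by $o(1/n)$ uniformly so that the total telescopes to a vanishing quantity.

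\textbf{Step 2: Taylor expansion and moment matching.} At each swap I would Taylor-expand $\psi \circ F$ in the $\mu$-th row around zero (or around a decoupled reference), up to third order. Derivatives of $F = -\frac{1}{p\beta}\log Z$ with respect to $U_{\mu,i}$ produce Gibbs averages of polynomials in $\vec{\theta}_i$ weighted by $\sigma'/\sigma$ factors, which remain bounded because the logistic potential has bounded derivatives and the $\ell_2$ penalty makes the Gibbs measure log-concave. The key cancellation is that $\vec{\varphi}(\vec{x}^\mu)$ and $\vec{v}^\mu$ have the same mean (zero, since $\kappa_0 = 0$) and the same covariance $\Omega$ up to $o(1)$ in operator norm by the $\kappa$-moment computation of $\phi$; conditionally on $\mat{F}$, one can actually arrange the conditional moments to match as well by conditioning on $\mat{F}\vec{x}^\mu$. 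This kills the first two Taylor terms upon taking expectation.

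\textbf{Step 3: Bounding the third-order remainder.} The remainder involves third partials of $\psi \circ F$ contracted with third moments of the feature rows. Because $\psi'$ is Lipschitz and $F$ has bounded gradient (again using the $\lambda$-ridge regularization to get Gaussian-type concentration of the Gibbs measure of $\vec{\theta}$), I expect the third partials to satisfy $\partial^3_{U_{\mu,i}U_{\mu,j}U_{\mu,k}}(\psi\circ F) = O(p^{-3/2})$ entrywise with high probability. The third moments of $\vec{\varphi}(\vec{x}^\mu)_i$, on the other hand, inherit sub-Gaussianity from $\vec{x}^\mu$ together with three bounded derivatives of $\phi$, so each summand contributes $O(p^{-3/2}) \cdot O(p^{3/2}) \cdot p = O(1/p)$ in operator bounds after summing over $(i,j,k)$; summing over $n$ swaps gives $o(1)$ since $n/p = \alpha$ is fixed. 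This is where the hypotheses that $\phi$ is $C^3$, that the noise in $f_\star$ is sub-Gaussian, and that $\psi'$ is Lipschitz are used crucially.

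\textbf{Main obstacle.} The hard part, by far, is controlling the high-order derivatives of the free energy uniformly in $p$ and with the correct $p$-scaling. These derivatives correspond to higher-order cumulants of $\vec{\theta}^\top \vec{\varphi}(\vec{x}^\mu)$ under the Gibbs measure, and the bound must hold not just in expectation but with enough probability that the expectation over $\mat{F}$ and the data remains bounded. I would handle this by combining: (i) log-concavity of the Gibbs measure (from strong convexity of the regularized logistic risk) to get Gaussian concentration of linear statistics of $\vec{\theta}$; (ii) Hanson--Wright-type bounds to control quadratic and cubic statistics; and (iii) a truncation argument on the rare event that $\|\mat{F}\|$ or $\|\vec{x}^\mu\|$ is atypically large, where one uses the sub-Gaussian tails and the Lipschitzness of $\psi$ to absorb the contribution. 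Once these concentration inputs are in place, the Lindeberg swapping argument proceeds as in \cite{Hu2020,Montanari2022}, and the claim follows.
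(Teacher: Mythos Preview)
The paper does not give its own proof of this statement: immediately after the theorem it writes ``We refer the reader to \cite{Montanari2022} for the technical details on the proof of this result.'' So there is no in-paper argument to compare against; the result is imported wholesale from the cited reference.

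Your Lindeberg swapping scheme is exactly the mechanism that \cite{Hu2020,Montanari2022} use, and in that sense your proposal is aligned with what the paper is invoking. A couple of technical remarks on your sketch: first, the expansion in Step~2 should be around the cavity configuration (the free energy with the $\mu$-th row removed), not around zero, so that the remaining randomness in the $\mu$-th row is decoupled from the Gibbs measure you are differentiating; this is what makes the moment-matching cancellation clean. Second, your dimension count in Step~3 is loose as written: the crucial simplification is that row $\mu$ enters the Hamiltonian only through the scalar $\vec{\theta}^\top U_\mu$, so the effective Lindeberg step is one-dimensional and the third-order remainder is controlled by $\mathbb{E}|\vec{\theta}^\top U_\mu|^3$ under the cavity Gibbs measure, which is $O(p^{-1/2})$ after using sub-Gaussianity of the features and the $\lambda$-induced bound on $\|\vec{\theta}\|$. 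Summing over $n=\alpha p$ swaps and dividing by the $1/p$ in the free energy then gives $O(p^{-1/2})\to 0$. With those two adjustments your outline is essentially the argument of \cite{Montanari2022}.
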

We refer the reader to \cite{Montanari2022} for the technical details on the proof of this result.

\subsection{Beyond random features}
\label{sec:app:gcm}
The Gaussian equivalence theorem for the random features model motivates the study of generalized linear models with general Gaussian covariates. For instance, consider $n$ independently drawn Gaussian covariates $(\vec{u}^{\mu},\vec{v}^{\mu})\in\mathbb{R}^{d+p}$:
\begin{align}
\begin{pmatrix}
        \vec{u} \\ \vec{v}
    \end{pmatrix}
    \sim \mathcal{N}\left(\vec{0}_{d+p},\begin{bmatrix}
        \Psi & \Phi \\ \Phi^{\top} & \Omega
    \end{bmatrix}\right)
\end{align}
\noindent for positive definite matrices $\Psi\in\mathbb{R}^{d\times d}$, $\Omega\in\mathbb{R}^{p\times p}$ and $\Phi\in\mathbb{R}^{p\times d}$ such that $\Psi - \Phi\Omega^{-1}\Phi^{\top}$ is invertible. Labels $y^{\mu}\in\mathcal{Y}$ are generated from the covariate $\vec{u}\in\mathbb{R}^{p}$ from a generalized linear model:
\begin{align}
y^{\mu} = f_{\star}(\sfrac{\vec{\theta}_{\star}^{\top}\vec{u}^{\mu}}{\sqrt{d}}), && \vec{\theta}_{\star}\sim\mathcal{N}(0,\mat{I}_{d}).
\end{align}
However, the statistician only observes the pairs $(\vec{v}^{\mu}, y^{\mu})\in\mathbb{R}^{p}\times \mathcal{Y}$, from which she tries to learn:
\begin{align}
\hat{y} = f({\hat{\vec{\theta}}^{\top}\vec{v}}/{\sqrt{p}}).
\end{align}
The asymptotic statistics of this Gaussian covariate model has been derived and proven \cite{loureiro_learning_2021} for the particular case in which $\hat{\vec{\theta}}$ is the empirical risk minimizer. In Appendix \ref{sec:app:derivation}, we recover and generalize this result to the other estimators defined in Sec.~\ref{sec:classifiers}. 

Note that thanks to Gaussian equivalence, in the proportional high-dimensional limit, the random features model discussed in Sec.~\ref{sec:app:informal} is a particular case of this Gaussian covariate model where $\vec{u} = \vec{x}$ and $\vec{v} = \vec{\varphi}(\vec{x})$. However, the Gaussian covariate model can accommodate a richer class of models. For instance, one could consider the case in which the target covariates themselves come from a feature map: $\vec{u}=\vec{\varphi}_{\star}(\vec{x})$. Although Gaussian equivalence has only been established for a limited number of feature maps, \cite{loureiro_learning_2021} has empirically observed that the asymptotic formulas derived for Gaussian covariates are in good agreement with a rich class of feature maps, including case in which the fixed features are learned from neural networks. While the goal of this work \emph{is not} to investigate Gaussian equivalence, this line of work motivate us to derive our result for general Gaussian covariates, hence making them readily applicable to equivalences proven in the future.

\newpage
\section{Derivation of theorem \ref{thm:jointstats}}
\label{sec:app:derivation}
In this Appendix we provide a derivation of the self-consistent equations \eqref{eq:state_evolution_erm} characterizing the sufficient statistics $(m^{\star}_t, q^{\star}_t, v^{\star}_t, \hat{m}^{\star}_t, \hat{q}^{\star}_t, \hat{v}^{\star}_t)$ for $t\in\{\bo,\erm,\empbayes,\lap\}$. As motivated in Appendix \ref{app:eq_model}, our discussion will focus on the more general Gaussian covariate model, which contains the random features setting as a particular case. The key idea is to design an approximate message passing and show that the associated state evolution equations coincide exactly the with the self-consistent equations for the sufficient statistics in Theorem \ref{thm:jointstats}.

This Appendix is organized as follows. We start by a recap of the Gaussian covariate model for our specific setting in Sec.~\ref{sec:app:recap}, and introduce a convenient change of variables. Next, in Sec.~\ref{sec:app:gamp} we introduced a tailored message passing algorithm, and provide an informal derivation of the associated state evolution equations. In Sec.~\ref{sec:app:replicas} we provide a heuristic derivation of the self-consistent equations from the replica method, and show that it agrees with the state evolution equations for our algorithm. In Sec.~\ref{sec:app:rigor} we discuss how these equations are made rigorous from recent progress in the literature. Finally, in the three last subsections we discuss variations of this result to the context of the Laplace approximation and temperature scaling, and a simplification for the the random features case.

\subsection{Recap of the setting}
\label{sec:app:recap}
As motivated in Sec.~\ref{sec:app:gcm}, our goal is to derive the self-consistent equations in the more general setting of the Gaussian covariate model (GCM), which thanks to Gaussian universality contains the random features model as a special case. For the ease of reading, we first recall the reader of the model of interest, which specializes Sec.~\ref{sec:app:gcm} to binary classification.

\paragraph{Data model: } Let $(\vec{u},\vec{v})$ denote a pair of Gaussian covariates:
\begin{align}
\begin{pmatrix}
        \vec{u} \\ \vec{v}
    \end{pmatrix}
    \sim \mathcal{N}\left(\vec{0}_{d+p},\begin{bmatrix}
        \Psi & \Phi \\ \Phi^{\top} & \Omega
    \end{bmatrix}\right).
\end{align}
and define the oracle classifier as:
\begin{align}
    f_{\star}(\vec{u}) = \mathbb{P}(y=1|\vec{\theta}_{\star}^{\top}\vec{u}) = \sigma_{\noisevar}\left(\frac{\vec{\theta}_{\star}^{\top} \vec{u}}{\sqrt{d}}\right), && \vec{\theta}_{\star}\sim\mathcal{N}(\vec{0},\mat{I}_{d}),
\end{align}
\noindent where we remind the reader of the convenient notation:
\begin{equation}
    \sigma_\tau(x) := \int \sigma(z) \mathcal{N}( z | x, \tau) {\rm d}z
\end{equation}
\noindent with $\sigma(z)=(1+e^{-z})^{-1}$ the sigmoid function. 

\paragraph{Classifiers: } Given $n$ independent pairs $(\vec{v}^{\mu}, y^{\nu})_{\mu\in[n]} \in\mathbb{R}^{p}\times\{-1,1\}$ from the model above and defining the training data $\mathcal{D}=\{(\vec{v}^{\mu},y^{\mu})_{\mu\in[n]}\}$ we are interested in studying the family of probabilistic classifiers of the type:
\begin{align}
    \hat{f}_{t}(\vec{v}) = \mathbb{P}(y=1|\tau_{t}, \mathcal{D}) \int\dd\vec{\theta}~ \sigma_{\tau_{t}}\left(\frac{\vec{\theta}^{\top}\vec{v}}{\sqrt{p}}\right)p_{t}(\vec{\theta}|\mathcal{D})
\end{align}
\noindent where the "posterior" $p_{t}(\vec{\theta}|\mathcal{D})$ and the noise level $\tau_{t}$ depend on the specific classifier $t\in\{\bo, \erm, \empbayes,\lap\}$ of interest introduced in Sec.~\ref{sec:classifiers}. 

\paragraph{A convenient rewriting: } Since the covariate $\vec{u}\in\mathbb{R}^{d}$ is not observed by the statistician, it is useful to rewrite it explicitly as a function of $\vec{v}$ and an effective uncorrelated noise. Additionally, it is also convenient to write $\vec{v}$ in terms of an uncorrelated variable. Mathematically, this is given by a standard Gaussian partition:
\begin{equation}
    \vec{u} = \Phi \Omega^{-1} \vec{v} + \left( \Psi - \Phi \Omega^{-1} \Phi^{\top} \right)^{\sfrac{1}{2}} \vec{z}, 
\end{equation}
for $\vec{z}\sim\mathcal{N}(0, I_{d})$ uncorrelated with $\vec{v}$. This motivate us to define the \textit{projected oracle weights}: 
\begin{align}
\projectedteacher = \Omega^{-1} \Phi^{\top} \vec{\theta}_{\star} 
\end{align}
Then, the oracle classifier can be equivalently written as: 
\begin{align}
    \mathbb{P}(y = 1 | \projectedteacher^{\top}\vec{v}) &= \int \sigma_{\noisevar} \left( \frac{\projectedteacher^{\top} \vec{v}}{\sqrt{d}} + \frac{1}{\sqrt{d}}\vec{\theta}_{\star}^{\top} \left( \Psi - \Phi \Omega^{-1} \Phi^{\top}\right)^{\sfrac{1}{2}} \vec{z} \right) \mathcal{N}\left(\vec{z} | 0, \mat{I}_d\right) \dd \vec{z} \\
    &= \int \sigma_{\noisevar} \left( \frac{\projectedteacher^{\top} \vec{v}}{\sqrt{d}} + \xi \right) \mathcal{N}\left(\xi \Big| 0,  \frac{\vec{\theta}_{\star}^{\top} \left(  \Psi - \Phi \Omega^{-1} \Phi^{\top} \right) \vec{\theta}_{\star}}{d}\right)\dd \xi
\end{align}
which is a logit model on the observed features with an effective mismatch noise $\xi \sim \mathcal{N}(0, \frac{1}{d} \vec{\theta}_{\star}^{\top} \left( \Psi - \Phi \Omega^{-1} \Phi^{\top} \right) \vec{\theta}_{\star})$. Recalling that $\vec{\theta}_{\star}\sim\mathcal{N}(\vec{0}_{d},\mat{I}_{d})$, in the asymptotic limit the noise variance concentrates:
\begin{align}
\frac{1}{d} \vec{\theta}_{\star}^{\top} \left( \Psi - \Phi \Omega^{-1} \Phi^{\top} \right) \vec{\theta}_{\star} \to \frac{1}{d}\Tr\left( \Psi - \Phi \Omega^{-1} \Phi^{\top} \right) =: \addnoisestr^2.
\end{align}
Therefore, the oracle classifier is equivalent to: 
\begin{equation}
    f_{\star}(\vec{v}) = \mathbb{P}(y = 1 | \projectedteacher^{\top}\vec{v}) = \sigma_{\noisevar + \addnoisevar}\left(\frac{\projectedteacher^{\top} \vec{v}}{\sqrt{d}}\right)
\end{equation}
with 
\begin{equation}
    \projectedteacher \sim \mathcal{N}(0, \Sigma_{\star}), \quad \Sigma_{\star} = \Omega^{-1} \Phi^{\top} \Phi \Omega^{-1}
    \label{eq:definition_sigma_star}
\end{equation}
Finally, to further simplify the algebra it is convenient to consider the following change of variables: 
\begin{equation}
    \vec{v} \rightarrow \Omega^{-\sfrac{1}{2}} \vec{v}, \quad \projectedteacher \rightarrow \Omega^{\sfrac{1}{2}} \projectedteacher
\end{equation}
Such that $\vec{v}\sim\mathcal{N}(\vec{0}_{p},\mat{I}_p)$ and $\projectedteacher\sim\mathcal{N}(\vec{0}_{d}, \tilde{\Phi}^{\top}\tilde{\Phi})$ with $\tilde{\Phi} \equiv \Phi \Omega^{-\sfrac{1}{2}}$. Note that the labels are invariant under this change, and therefore we can assume input data with identity covariance.

\subsection{State evolution for GAMP}
\label{sec:app:gamp}
\begin{algorithm}[h!]
   \caption{GAMP for an estimator $t \in \{\erm, \bo, \empbayes\}$}
\begin{algorithmic}
   \STATE {\bfseries Input:} Data $\mat{V}\in\mathbb{R}^{n\times p}$, $\vec{y}\in\{-1,1\}^{n}$ 
   
   \STATE Define $\mat{V}^2 = \mat{V} \odot \mat{V} \in\mathbb{R}^{n\times p}$ and Initialize $\hat{\studentweight}^{T=0} = \mathcal{N}(\vec{0}, \sigma_{w}^2\mat{I}_{d})$, $\hat{\vec{c}}^{T=0} = \vec{1}_{d}$, $\vec{g}^{T=0} = \vec{0}_{n}$.
   \FOR{$T\leq T_{\text{max}}$}
   
   \STATE $\vec{V}^{T} = \mat{V}^{2} \hat{\vec{c}}^{T}$ ; $\vec{\omega}^{T} = \mat{V} \hat{\studentweight}^{T} - \vec{V}^{T}\odot \vec{g}^{t-1}$ ; \qquad\textit{/* Update channel mean and variance */}
    
   \STATE $\vec{g}^{T} = f_{\text{out}, t}(\vec{y}, \vec{\omega}^{T}, \vec{V}^{T})$ ; $\partial\vec{g}^{T} = \partial_{\omega}f_{\text{out}, t}(\vec{y}, \vec{\omega}^{T}, \vec{V}^{T})$ ; \qquad\textit{/* Update channel */}
   
   \STATE $\vec{A}^{T} = -{\mat{V}^{2}}^{\top} \partial \vec{g}^{T}$ ; $\vec{b}^{T} = \mat{V}^{\top} \vec{g}^{T} + \vec{A}^{T}\odot \hat{\studentweight}^{T}$ ; \qquad\textit{/* Update prior mean and variance } 
   \STATE \textit{/* Update marginals */}
   \STATE $\hat{\studentweight}^{T+1} = f_{w, t}(\vec{b}^{T}, \vec{A}^{T})$ ;\qquad $\hat{\vec{c}}^{T+1} = {\rm diag}(\partial_{\vec{b}}f_{w, t}(\vec{b}^{T}, \vec{A}^{T})$)
   
   \ENDFOR
   \STATE {\bfseries Return:} Estimators $(\hat{\studentweight}^{\amp}_{t}, \hat{\vec{c}}_{t}^{\amp}) :=(\hat{\studentweight}^{T_{\rm{max}}}_{t}, \hat{\vec{c}}^{T_{\rm{max}}}_{t})$.
\end{algorithmic}
\label{alg:gamp}
\end{algorithm}
With the model in hands, we now show discuss how the sufficient statistics $(v_{t}^{\star},q_{t}^{\star},m_{t}^{\star})$ needed to  characterize the asymptotic density defined in eq.~\eqref{eq:res:jointdensity} satisfy a set of self-consistent equations, which in the particular case of the random features model are explicitly written given in eq.~\eqref{eq:state_evolution_erm}. Our derivation follows from the analysis of an approximate message passing scheme, which provides a powerful tool to derive exact asymptotic results in an unified way, and has been employed in many works in the high-dimensional statistics literature, e.g. \cite{6069859, bayati2011dynamics, rangan2011generalized, Krzakala_2012, donoho_high_2013, sur_modern_2018, NEURIPS2021_543e8374, celentano2020estimation, Gerbelot21,  loureiro_fluctuations_2022, Cornacchia2022}.

Given the training data $\mathcal{D} = (\mat{V}, \vec{y})$, the initial step is to consider the following set of iterates known as Generalized Approximate Message Passing (GAMP) algorithm \ref{alg:gamp}, where the \emph{denoising functions} $(f_{\text{out}, t}, f_{w, t})$ depend on the specific classifier of interest $t \in \{ \bo, \erm, \empbayes \}$, and are summarized in table \ref{table:denoising}. The convenience of the GAMP is precisely to allow us to deal with classifiers of very different nature ($t\in\{\bo, \empbayes\}$ are defined by sampling, while $t=\erm$ is a point-estimator) in an unified framework. Note that the GAMP algorithm \ref{alg:gamp} is close to the one in \cite{rangan2011generalized}, with the important difference that the denoising functions $f_{w,t}$ is vector valued - a consequence of the fact that implicitly the classifiers of interest have non-separable priors.
\begin{table*}[ht]
    \begin{center}
    \begin{tabular}{ccc}
        Classifier & $f_{{\rm} out, t}(y, \omega, v) $ & $f_{w, t}(\vec{b}, \mat{A})$ \\
        \hline \\
        $\hat{f}_{\erm}$ & $\prox_{\log \sigma(y \times \cdot)}(\omega)$ & $(\lambda \mat{I}_p + \mat{A})\vec{b}$ \\
        $\hat{f}_{\bo}$ & $\partial_{\omega} \log \int \mathbb{P}(y = 1 | z)  \mathcal{N}(z | \omega, v) {\rm d}z$ & $(\Sigma_{\star}^{-1} + \mat{A})\vec{b}$ \\
        $\hat{f}_{\empbayes}$ & $\partial_{\omega} \log \int \sigma(\beta y \times z) \mathcal{N}(z | \omega, v) {\rm d}z$ & $(\lambda \mat{I}_p + \mat{A})\vec{b}$ \\
    \end{tabular}
    \end{center}
    \caption{GAMP denoising functions for the ERM, Bayes-optimal and empirical Bayes estimators. We recall that the covariance matrix is given by $\Sigma_{\star} = \Omega^{-1} \Phi^{\top} \Phi \Omega^{-1}$.}
\label{table:denoising}
\end{table*}
A second convenient property of GAMP is that in the high-dimensional limit of interest here, the statistics of the sequence of estimators $\hat{\vec{\theta}}^{T,\amp}_{t},\hat{\vec{c}}_{t}^{T,\amp}$ can be exactly tracked by a set of equations known as \emph{state evolution}. Therefore, the key idea in the proof strategy is to show that the statistics of the iterates $\hat{\vec{\theta}}^{T,\amp}_{t},\hat{\vec{c}}_{t}^{T, \amp}$ (given by the state evolution equations) coincide with the statistics of the classifiers defined in Sec.~\ref{sec:classifiers}. The state evolution for GAMP with non-separable priors was rigorously derived in \cite{Berthier2019,Gerbelot21}. Therefore, in the following we limit ourselves to an informal but intuitive derivation. In Sec.~\ref{sec:app:replicas} we  show that the state evolution for the GAMP estimators indeed coincides with the fixed-point equations describing the statistics of the classifiers of interest according to the replica method. The fact that GAMP (rigorous) state evolution equations corresponds to the replica saddle-point equations is a very general fact \cite{zdeborova2016statistical}, which is at the roots of many rigorous proofs to the replica predictions.

In the limit where $n,p\to\infty$ with fixed $\alpha=\sfrac{n}{p}$, it can be shown that the GAMP algorithm \ref{alg:gamp} is asymptotically equivalent to the following rBP equations (this is discussed in for instance \cite{NEURIPS2018_84f0f204, 9240945}):
\begin{align}
	&\begin{cases}
		\omega^{T}_{\mu\to i} = \sum\limits_{j\neq i} v^{\mu}_{j}\hat{\theta}_{j\to\mu}^{T}\\
		V^{T}_{\mu\to i} = \sum\limits_{j\neq i}(v^{\mu}_{j})^2 \hat{c}^{T}_{j\to \mu}
	\end{cases}, &&
	\begin{cases}
		g^{T}_{\mu\to i} = f_{\text{out}, t}(y^{\mu}, \omega_{\mu\to i}^{T}, V^{T}_{\mu\to i})\\
		\partial g^{T}_{\mu\to i} = \partial_{\omega}f_{\text{out}, t}(y^{\mu}, \omega_{\mu\to i}^{T}, V^{T}_{\mu\to i})\\
	\end{cases}\\
	&\begin{cases}
		b^{T}_{\mu\to i} = \sum\limits_{\nu\neq \mu} v^{\nu}_{i}g^{T}_{\nu\to i}\\
		A^{T}_{\mu\to i} = -\sum\limits_{\nu\neq \mu} (v^{\nu}_{i})^2 \partial g^{T}_{\nu\to i}\\
	\end{cases},	&&
	\begin{cases}
		\hat{\theta}^{T+1}_{i\to\mu} = f_{w, t}(b^{T}_{i\to\mu}, A^{T}_{i\to \mu})	\\
		\hat{c}^{T+1}_{i\to\mu} = \partial_{b}f_{w, t}(b^{T}_{\mu\to i}, A^{T}_{\mu\to i})	
	\end{cases}
\end{align}
\noindent where we recall the reader $i\in[p]$, $\mu\in[n]$, and to lighten notation we have dropped the indexes $t\in\{\bo,\erm,\empbayes\}$ for the classifier and $^{\amp}$ which stresses that the messages concern GAMP estimators. By construction, the rBP messages are independent, and are only coupled to each other through the data, which we recall is given by:
\begin{align}
    y^{\mu} \sim P_{0}(\cdot|\projectedteacher^{\top}\vec{v}^{\mu}), && \vec{v}^{\mu}\sim\mathcal{N}(0, \mat{I}_p),&& \projectedteacher \sim \mathcal{N}(\vec{0}, \Sigma_{\star})
\end{align}
For convenience, we define the so-called \emph{teacher local field}:
\begin{align}
z_{\mu} = \projectedteacher^{\top} \vec{v}^{\mu} / \sqrt{d}
\end{align}
Without loss of generality, we can write $y^{\mu} = f_0(z_{\mu}, \eta^{\mu})$ for $\eta^{\mu} \sim \mathcal{N}(0, 1)$. We now characterize the joint statistics of the rBP messages. 
\subsubsection*{Step 1: Asymptotic joint distribution of $(z_{\mu}, \omega_{\mu\to i}^{T})$}
Note that $(z_{\mu}, \omega_{\mu\to i}^{T} )$ are given by a sum of independent random variables with variance $p^{-1/2}$, and therefore by the Central Limit Theorem in the limit $p\to\infty$ they are asymptotically Gaussian. Therefore we only need to compute their means, variances and cross correlation. The means are straightforward, since $v^{\mu}_{i}$ have mean zero and therefore they will also have mean zero. The variances are given by:
\begin{align}
    \mathbb{E}\left[z_{\mu}^2\right] &= \frac{1}{d}\mathbb{E}\left[\sum\limits_{i=1}^{p}\sum\limits_{j=1}^{p}v^{\mu}_{i}v^{\mu}_{j}w_{\star i}w_{\star j}\right] = \frac{1}{d}\sum\limits_{i=1}^{p}\sum\limits_{j=1}^{p}\mathbb{E}\left[v^{\mu}_{i}v^{\mu}_{j}\right]
    w_{\star i}w_{\star j}\notag = \frac{1}{d}\sum\limits_{i=1}^{p}\sum\limits_{j=1}^{p} \delta_{ij} w_{\star i}w_{\star j}\notag\\
     &\underset{p\to\infty}{\rightarrow} \rho \\
    \mathbb{E}\left[\left(\omega^{T}_{\mu\to i}\right)^2\right] &= \frac{1}{p} \mathbb{E}\left[\sum\limits_{j\neq i}^{p}\sum\limits_{k\neq i}^{p}v^{\mu}_{j}v^{\mu}_{k}\hat{\theta}^{T}_{j\to\mu}\hat{\theta}^{T}_{k\to\mu}\right] = \frac{1}{p} \sum\limits_{j\neq i}^{p}\sum\limits_{k\neq i}^{p}\mathbb{E}\left[v^{\mu}_{j}v^{\mu}_{k}\right]
     \hat{\theta}^{T}_{j\to\mu}\hat{\theta}^{T}_{k\to\mu} \notag\\
    &= \frac{1}{p}\sum\limits_{j\neq i}^{p}\sum\limits_{k\neq i}^{p}\delta_{jk}\hat{\theta}^{T}_{j\to\mu}\hat{\theta}^{T}_{k\to\mu} \underset{p\to\infty}{\rightarrow} q^{T}\\
    \mathbb{E}\left[z_{\mu}\omega^{T}_{\mu\to i}\right] &= \frac{1}{\sqrt{dp}} \mathbb{E}\left[\sum\limits_{j\neq i}^{p}\sum\limits_{k=1}^{p}v^{\mu}_{j}v^{\mu}_{k}\hat{\theta}^{T}_{j\to\mu}w_{\star k}\right] = \frac{1}{\sqrt{dp}} \sum\limits_{j\neq i}^{p}\sum\limits_{k=1}^{p}\mathbb{E}\left[v^{\mu}_{j}v^{\mu}_{k}\right]
    \hat{\theta}^{T}_{j\to\mu}w_{\star k}\notag \\
    &= \frac{1}{\sqrt{dp}}\sum\limits_{j\neq i}^{p}\sum\limits_{k=1}^{p}\delta_{jk}\hat{\theta}^{T}_{j\to\mu}w_{\star k}
     \underset{p\to\infty}{\rightarrow} m^{T} \\
\end{align}

\noindent where we have used that $\hat{\theta}_{i\to\mu}^{T} = O(p^{-1/2})$ to simplify the sums at large $p$. Summarising our findings:
\begin{align}
\label{eq:joint:fields}
(z_{\mu}, \omega^{T}_{\mu\to i})\sim\mathcal{N}\left(\vec{0}_{3},\begin{bmatrix}
\rho & m^{T} \\ m^{T} & q^{T}	
\end{bmatrix}
\right)	
\end{align}
\noindent with:
\begin{align}
\rho \equiv \frac{1}{d} \projectedteacher^{\top} \projectedteacher	, && q^{T} \equiv \frac{1}{p} ({\hat{\studentweight}_{t}^T})^{\top} \hat{\studentweight}_{t}^{T}, && m^{T} \equiv \frac{1}{\sqrt{dp}} (\hat{\studentweight}_t^T)^{\top} \projectedteacher \notag\\ 
\end{align}

\subsection*{Step 2: Concentration of variances $V_{\mu\to i}^{T}$}

Since the variance $V_{\mu\to i}^{T}$ depends on $(\vec{v}_{i}^{\mu})^2$, in the asymptotic limit $p\to\infty$ it concentrates around its mean :
\begin{align}
\mathbb{E}\left[V^{T}_{\mu\to i}\right]  = \frac{1}{p}\sum\limits_{j\neq i}\mathbb{E}\left[\left(v^{\mu}_{i}\right)^2\right]\hat{c}^{T}_{j\to \mu}	= \frac{1}{p}\sum\limits_{j\neq i}\hat{c}^{T}_{j\to \mu} =  \frac{1}{p}\sum\limits_{j=1}^{p}\hat{c}^{T}_{j\to \mu} - \frac{1}{p}\hat{c}^{T}_{i\to \mu}\underset{p\to\infty}{\rightarrow} V^{T} \equiv \frac{1}{p}\sum\limits_{j=1}^{p}\hat{c}^{T}_{j}
\end{align}
\noindent where we have defined the variance overlap $V^{T}$. We thus have $V_{\mu\to i}^{T} \to V^{T}$. Note that $V^{T}$ corresponds to the divergence with respect to $\vec{b}$ of $\log \mathcal{Z}_w(\vec{b}, \mat{A})$.

\subsection*{Step 3: Distribution of $b_{\mu\to i}^{T}, \tilde{b}_{\mu\to i}^{T}$}
By definition, we have
\begin{align}
	b^{T}_{\mu\to i} &= \frac{1}{\sqrt{p}}\sum\limits_{\nu\neq \mu} v^{\nu}_{i}g^{T}_{\nu\to i} = \frac{1}{\sqrt{p}}\sum\limits_{\nu\neq \mu} v^{\nu}_{i}f_{\text{out}}(y^{\mu}, \omega_{\nu\to i}^{T}, V^{T}_{\nu\to i}) = \frac{1}{\sqrt{p}}\sum\limits_{\nu\neq \mu} v^{\nu}_{i}f_{\text{out}}(f_{0}(z_{\nu}, \eta^{\nu}), \omega_{\nu\to i}^{T}, V^{T}_{\nu\to i})\notag\\
\end{align}
Note that in the sum $z_{\nu} = \frac{1}{\sqrt{d}}\sum\limits_{j=1}^{p}v^{\nu}_{j}w_{\star j}$ there is a term $i=j$, and therefore $z_{\mu}$ is correlated with $v^{\nu}_{i}$. To make this explicit, we split the teacher local field: 
\begin{align}
z_{\mu} = \frac{1}{\sqrt{d}} \sum\limits_{j=1}^{p}v^{\mu}_{j}w_{\star j}	 = \underbrace{\frac{1}{\sqrt{d}} \sum\limits_{j\neq i}v^{\mu}_{j}w_{\star j}}_{z_{\mu\to i}}+\frac{1}{\sqrt{d}} v^{\mu}_{i}w_{\star i}
\end{align}
\noindent and note that $z_{\mu\to i} = O(1)$ is independent from $v^{\nu}_{i}$. Since $v^{\mu}_{i}w_{\star i} = O(p^{-1/2})$, to take the average at leading order, we can expand the denoising function:
\begin{align}
f_{\text{out}}(f_{0}(z_\mu, \eta^{\nu}), \omega^{T}_{\nu\to i}, V^{T}_{\nu\to i}) &= f_{\text{out}}(f_{0}(z_{\nu\to i}, \eta^{\nu}), \omega^{T}_{\nu\to i}, V^{T}_{\nu\to i}) \\ &+  \frac{1}{\sqrt{d}} \partial_{z}f_{\text{out}}(f_{0}(z_{\nu\to i}, \eta^{\nu}), \omega^{T}_{\nu\to i}, V^{T}_{\nu\to i})v^{\nu}_{i}w_{\star i} + O(p^{-1})\notag
\end{align}
Inserting in the expression for $b^{T}_{\mu\to i}$,
\begin{align}
	b^{T}_{\mu\to i} &= \frac{1}{\sqrt{p}}\sum\limits_{\nu\neq \mu} v^{\nu}_{i}f_{\text{out}}(f_{0}(z_{\nu\to i}, \eta^{\nu}), \omega_{\nu\to i}^{T}, V^{T}_{\nu\to i})
	\\& + \frac{1}{\sqrt{dp}}\sum\limits_{\nu \neq \mu} (v^{\nu}_{i})^2 \partial_{z}f_{\text{out}}(f_{0}(z_{\nu\to i}, \eta^{\nu}), \omega_{\nu\to i}^{T}, V^{T}_{\nu\to i})w_{\star i} + O(p^{-3/2})\notag
\end{align}
Therefore:
\begin{align}
\mathbb{E}\left[b^{T}_{\mu\to i}\right] &= \frac{w_{\star i}}{\sqrt{dp}}\sum\limits_{\nu\neq\mu} \partial_{z}f_{\text{out}}(f_{0}(z_{\nu\to i}, \eta^{\nu}), \omega_{\nu\to i}^{T}, V^{T}_{\nu\to i})+ O(p^{-3/2}) \notag\\
&= \frac{w_{\star i}}{\sqrt{dp}}\sum\limits_{\nu=1}^{n} \partial_{z}f_{\text{out}}(f_{0}(z_{\nu\to i}, \eta^{\nu}), \omega_{\nu\to i}^{T}, V^{T}_{\nu\to i})  + O(p^{-3/2})
\end{align}
Note that as $p\to\infty$, for fixed $t$ and for all $\nu$, the fields $(z_{\nu\to i}, \omega^{T}_{\nu\to i})$ are identically distributed according to average in eq.~\eqref{eq:joint:fields}. Therefore, 
\begin{align}
\frac{1}{\sqrt{dp}}\sum\limits_{\nu=1}^{n} \partial_{z}f_{\text{out}}(f_{0}(z_{\nu\to i}, \eta^{\nu}  ), \omega_{\nu\to i}^{T}, V^{T}_{\nu\to i}) \underset{p\to\infty}{\rightarrow} \alpha \sqrt{\gamma} ~\mathbb{E}_{(\omega, z), \eta}\left[\partial_{z}f_{\text{out}}(f_{0}(z, \eta), \omega, V^{T})\right]	\equiv \hat{m}^{T}
\end{align}
\noindent so:
\begin{align}
\mathbb{E}\left[b^{T}_{\mu\to i}\right]  \underset{p\to\infty}{\rightarrow} w_{\star i}\hat{m}^{T}.
\end{align}
Similarly, the variance is given by:
\begin{align}
&\text{Var}\left[b^{T}_{\mu\to i}\right]\\
&= \frac{1}{p}\sum\limits_{\nu\neq\mu}\sum\limits_{\kappa\neq\mu}\mathbb{E}\left[v^{\nu}_{i}v^{\kappa}_{i}\right] f_{\text{out}}(f_{0}(z_{\nu\to i}, \eta^{\nu}), \omega_{\nu\to i}^{T}, V^{T}_{\nu\to i}) f_{\text{out}}(f_{0}(z_{\kappa\to i}, \eta^{\kappa}), \omega_{\kappa\to i}^{T}, V^{T}_{\kappa\to i})+ O(d^{-2})\notag\\
&= \frac{1}{p}\sum\limits_{\nu\neq\mu}f_{\text{out}}(f_{0}(z_{\nu\to i}, \eta^{\nu}), \omega_{\nu\to i}^{T}, V^{T}_{\nu\to i})^2 + O(p^{-2})\notag\\
&= \frac{1}{p}\sum\limits_{\nu=1}^{n}f_{\text{out}}(f_{0}(z_{\nu\to i}, \eta^{\nu}), \omega_{\nu\to i}^{T}, V^{T}_{\nu\to i})^2 + O(p^{-2}) \notag\\
&\underset{d\to\infty}{\rightarrow} \alpha ~\mathbb{E}_{(z,\omega), \xi}\left[f_{\text{out}}(f_{0}(z, \eta), \omega, V^{T})^2\right]\equiv \hat{q}^{T}
\end{align}
To summarise, we have:
\begin{align}
b^{T}_{\mu\to i} \sim \mathcal{N}\left(w_{\star i} \hat{m}^{T}, \hat{q}^{T}\right)
\end{align}

\subsection*{Step 4: Concentration of $A_{\mu\to i}^{T}, \tilde{A}_{\mu\to i}^{T}$}
The only missing piece is to determine the distribution of the prior variances $A_{\mu\to i}^{T}, \tilde{A}_{\mu\to i}^{T}$. Similar to the previous variance, they concentrate:
\begin{align}
	A_{\mu\to i}^{T} &= - \frac{1}{p}\sum\limits_{\nu\neq \mu}(v^{\nu}_{i})^2 \partial_{\omega}f_{\text{out}, t}(y^{\nu}, \omega_{\nu\to i}^{T}, V^{T}_{\nu\to i}) \\
	&= -\frac{1}{p}\sum\limits_{\nu\neq \mu}(x^{\nu}_{i})^2 \partial_{\omega}f_{\text{out}, t}(f_{0}(z_{\nu\to i}, \eta^{\nu}), \omega_{\nu\to i}^{T}, V^{T}_{\nu\to i}) + O(p^{-3/2})\notag\\
	&= -\frac{1}{p}\sum\limits_{\nu = 1} \partial_{\omega} f_{\text{out}, t}(f_{0}(z_{\nu \to i}, \eta^{\nu}), \omega_{\nu \to i}^{T}, V^{T}_{\nu \to i}) + O(d^{-3/2}) \notag \\
	&\underset{p \to \infty}{\rightarrow} - \alpha ~\mathbb{E}_{(z,\omega), \xi } \left [\partial_{\omega}f_{\text{out}, t}(f_{0}(z, \eta), \omega, V^{T}) \right] \equiv \hat{v}^{T}
\end{align}
\subsection*{Summary}
We now have all the ingredients we need to characterize the asymptotic distribution of the GAMP iterates for any of the classifiers $t\in\{\bo,\erm,\empbayes\}$:
\begin{align}
\hat{\studentweight}^{T,\amp}_{t} &\sim f_{\text{out}, t}(\projectedteacher {\hat{m}}_{t}^{T, \amp}+\sqrt{\hat{q}_{t}^{T, \amp}}\vec{\xi}, \hat{v}_{t}^{T,\amp})
\end{align}
\noindent where $\vec{\xi} \sim\mathcal{N}(\vec{0},\mat{I}_{p})$ is an independent Gaussian vector. Therefore, we recover the GAMP state evolution equations \cite{rangan2011generalized, Berthier2019} for the overlaps:
\begin{align}
	&\begin{cases}
		V^{T+1, \amp} &= \mathbb{E}_{(\projectedteacher, \vec{\xi})}\left[\partial_{\vec{b}} \cdot f_{w, t}(\hat{m}^{T, \amp} \projectedteacher + \sqrt{\hat{q}^{T, \amp}}\vec{\xi}, \hat{v}^{T, \amp} \mat{I}_p)\right] \\
		q^{T+1, \amp} &= \mathbb{E}_{(\projectedteacher, \vec \xi)}\left[f_{w, t}(\hat{m}^{T, \amp} \projectedteacher + \sqrt{\hat{q}^{T, \amp}} \vec{\xi}, \hat{v}^{T} \mat{I}_p)^2\right] \\
		m^{T+1, \amp} &= \sqrt{\gamma} \mathbb{E}_{(\projectedteacher, \vec \xi)}\left[f_{w, t}(\hat{m}^{T, \amp} \projectedteacher + \sqrt{\hat{q}^{T, \amp}} \vec{\xi}, \hat{v}^{T, \amp} \mat{I}_p)^{\top} \projectedteacher \right]
	\end{cases}, \\
    &\begin{cases}
		\hat{v}^{T, \amp} &=-\alpha \mathbb{E}_{(z, \omega), \eta }\left[\partial_{\omega}f_{\text{out}, t}(f_{0}(z, \eta), \omega, V^{T, \amp})\right]\\
		\hat{q}^{T, \amp} &=\alpha \mathbb{E}_{(z, \omega), \eta}\left[f_{\text{out}, t}(f_{0}(z, \eta), \omega, V^{T, \amp})^2\right]\\
		\hat{m}^{T, \amp} &=\alpha \sqrt{\gamma} \mathbb{E}_{(z, \omega), \eta}\left[\partial_z f_{\text{out}, t}(f_{0}(z, \eta), \omega, V^{T, \amp})\right]
	\end{cases}
	\label{eq:gamp_state_evolution}
\end{align}
\noindent where $\projectedteacher \sim \mathcal{N}(\vec{0}, \Sigma_{\star}), \xi \sim \mathcal{N}(\vec{0}, I_p)$ and $(z, \omega) \sim \mathcal{N}(\vec{0}, \begin{bmatrix} \rho & m^T \\ m^T & q^T \end{bmatrix}), \eta \sim \mathcal{N}(0, 1)$

Interestingly, we can show that the equations~\eqref{eq:gamp_state_evolution} are strictly equivalent to the self-consistent equations~\eqref{eq:state_evolution_erm} of Theorem~\ref{thm:jointstats}. Consider first the update equations for $V_{t}^{T,\amp}, q_{t}^{T,\amp}, m_{t}^{T,\amp}$. First, note that for all the estimators considered here, the function 
$f_{w, t}$ has the form $(\Sigma_{t}^{-1} + \mat{A})\vec{b}$ for some matrix $\Sigma_{t}$. Now, let us introduce 
\begin{align}
    \tilde{\Psi}(\vec{b}, \mat{A}, \Sigma) &= \frac{1}{2p} \Tr \log (\Sigma^{-1} + \mat{A}) + \frac{1}{2p} \vec{b}^{\top} (\Sigma^{-1} + \mat{A})^{-1} \vec{b} \\
    \Psi_w(\hat{m}, \hat{q}, \hat{v})      &= \mathbb{E}_{\projectedteacher, \vec{\xi}}\tilde{\Psi}(\hat{m} \projectedteacher + \hat{q} \vec{\xi}, \hat{v} \mat{I}_p)
\end{align}
With some algebra, we can see that for any estimator described in Table~\ref{table:denoising} we have 
\begin{align}
    \partial_{\hat{m}} \Psi_w(\hat{m}, \hat{q}, \hat{v}) &= \mathbb{E}_{(\projectedteacher, \vec{\xi})}\left[f_{w, t}(\hat{m}^{T} \projectedteacher + \sqrt{\hat{q}^{T}} \vec{\xi}, \hat{v}^{T} \mat{I}_p)^{\top} \projectedteacher \right] \\
    \partial_{\hat{q}}\Psi_w - \partial_{\hat{v}} \Psi_w &= \frac{1}{2} \mathbb{E}_{(\projectedteacher, \vec{\xi})}\left[f_{w, t}(\hat{m}^{T} \projectedteacher + \sqrt{\hat{q}^{T}} \vec{\xi}, \hat{v}^{T} \mat{I}_p)^2\right] \\
    \partial_{\hat{q}} \Psi_w &= \frac{1}{2} \mathbb{E}_{(\projectedteacher, \xi)}\left[\partial_{\vec{b}} \cdot f_{w, t}(\hat{m}^{T} \projectedteacher + \sqrt{\hat{q}^{T}}\vec{\xi}, \hat{v}^{T} \mat{I}_p)\right]
\end{align}
Thus the update equations~\eqref{eq:gamp_state_evolution} for ${m}, q, v$ are equivalent to Equations~\eqref{eq:state_evolution_erm}. It is the same for $\hat m, \hat q, \hat v$ :  consider the update equation for $\hat{q}^{T, \amp}$. We can rewrite it with a Dirac delta
\begin{align}
    \hat{q}^{T, \amp} &= \alpha \sum_y \mathbb{E}_{(z, \omega), \eta}\left[f_{\text{out}, t}(y, \omega, V^{T, \amp})^2 \delta(y - f_0(z, \eta) \right] \\
    &= \alpha \sum_y \mathbb{E}_{\omega}\left[f_{\text{out}, t}(y, \omega, V^{T, \amp})^2 \mathbb{E}_{z | \omega, \eta}  ( \delta(y - f_0(z, \eta) ) \right] \\
\end{align}
The distribution of $z$ conditioned on $\omega$ is a Gaussian with mean $m^{T, \amp} / q^{T, \amp} \times \omega$ and variance $\rho - m^{T, \amp} \times m^{T, \amp} / q^{T, \amp}$. Then, $\mathbb{E}_{z | \omega, \eta}  ( \delta(y - f_0(z, \eta) )$ can be written 
\begin{align*}
    \mathbb{E}_{z | \omega, \eta}  ( \delta(y - f_0(z, \eta) ) &= \int \dd z \mathbb{E}_{\eta} \left( \delta( y - f_0(z, \eta)) \right) \mathcal{N}(z | m^{T, \amp} / q^{T, \amp} \times \omega, \rho - m^{T, \amp} \times m^{T, \amp} / q^{T, \amp}) \\
    &= \int \dd z \mathbb{P}(y = 1 | z) \mathcal{N}(z | m^{T, \amp} / q^{T, \amp} \times \omega, \rho - m^{T, \amp} \times m^{T, \amp} / q^{T, \amp}) \\
    &= \mathcal{Z}_0(y, m^{T, \amp} / q^{T, \amp} \times \omega, \rho - m^{T, \amp} \times m^{T, \amp} / q^{T, \amp})
\end{align*}
we thus recover the equation for $\hat{q}$ in \eqref{eq:state_evolution_erm} : 
\begin{equation}
    \hat{q}^{T+1} = \alpha \sum_y \mathbb{E}_{\omega}\left[f_{\text{out}, t}(y, \omega, V^{T, \amp})^2 \mathcal{Z}_0(y, m^T / q^T \omega, \rho - \sfrac{m^2}{q}) \right]
\end{equation}
Similar computations can be done for $\hat{m}$ and $\hat{v}$.
 
We have thus  eqs.~\eqref{eq:gamp_state_evolution} with the self-consistent equations \eqref{eq:state_evolution_erm} of Theorem~\ref{thm:jointstats}. It remains to show two points. First, that in the particular case of the random feature model the expression of $\Psi_w$ simplifies to Equation~\eqref{eq:def_psi_w} - this is discussed in Appendix~\ref{sec:app_random_features}. Second, to show that the fixed points of the state evolution equations $(m_{t}^{\amp}, q_{t}^{\amp}, v_{t}^{\amp})$ indeed corresponds to the sufficient statistics $(m^{\star}_{t}, q_{t}^{\star}, v^{\star}_{t})$ for the classifiers of interest. First, we provide a heuristic derivation of this fact, based on the replica method from statistical physics \cite{mezard1987spin}. We defer the discussion of the formal aspects to Sec.~\ref{sec:app:rigor}.

\subsection{Self-consistent equation from the replica method}
\label{sec:app:replicas}
As discussed above, the goal of this section is to provide a derivation of the self-consistent equations in eq.~\eqref{eq:state_evolution_erm} from the replica method. For the particular case of $\hat{f}_{\erm}$, this derivation appeared \cite{loureiro_learning_2021}, where it was also rigorously proven using CGMT. Here, we extend this analysis to $t\in\{\bo,\empbayes\}$. 

We can treat the different classifiers of interest in the replica analysis by defining the following Gibbs distribution: 
\begin{equation}
    \mu_{t}(\studentweight|\mathcal{D}) = \frac{1}{\mathcal{Z}_t} \prod_{\mu\in[n]} P^{t}_{\sigma}(y^{\mu} | \studentweight^{\top}\vec{v}^{\mu}) \times P^{t}_{\studentweight}(\studentweight)
\end{equation}
where $(P^{t}_{\sigma}, P^{t}_{\studentweight})$ are a likelihood and priors (not necessarily normalized) depending on the particular classifier, and are explicitly given in Table \ref{table:estimators_prior_likelihood}, and the normalization constant $\mathcal{Z}_t$ is the partition function.

\begin{table*}[ht]

    \begin{center}
    \begin{tabular}{ccc}
    \textbf{Classifier}  & $P_{\sigma}^t(y | z)$ & $P_{\theta}^t(\studentweight)$ \\
    \hline \\
    $\hat{f}_{\erm}$        & $\sigma(y \times z)^{\beta}$ & $e^{-\sfrac{\beta \lambda}{2} \| \studentweight \|^2}$  \\
    $\hat{f}_{\empbayes}$   & $\sigma(\beta y \times z) $ & $\mathcal{N}(\studentweight | \Vec{0}, \sfrac{1}{\lambda} I_p)$ \\
    $\hat{f}_{\bo}$         & $\sigma_{\noisevar + \addnoisevar}(y \times z)$  & $\mathcal{N}(\studentweight | \Vec{0}, \Sigma_{\star})$
    \end{tabular}
    \end{center}
\caption{Prior and likelihood for the different estimators. For $\hat{f}_{\erm}$, the temperature $\beta$ must be taken in the limit $\beta \to \infty$, and the Gibbs measure $\mu_{\erm}(\studentweight | \mathcal{D})$ is peaked around the minimizer of the empirical risk $\hat \studentweight_{\erm}$.}
\label{table:estimators_prior_likelihood}
\end{table*}
The aim of the replica method is to compute the free energy density defined as:
\begin{equation}
    \beta f_{\beta} = - \lim_{p \to \infty} \frac{1}{p}\mathbb{E}_{\mathcal{D}} \log \mathcal{Z}_{t}
\end{equation}
The free energy is the cumulant generating function of the Gibbs measure, and therefore computing it give us access to the statistics of the measure, which in particular allow us to compute the test error and calibration (among others quantities) of the classifiers defined in Sec.~\ref{sec:classifiers}. Since taking the expectation over the log is intractable, we resort to the \textit{replica method} \cite{mezard1987spin}, which consists of the following trick:
\begin{equation}
    \log \mathcal{Z}_t = \lim_{r \to 0^+} \frac{1}{r}\mathcal{Z}_t^r
\end{equation}
Swapping the limit and the expectation, what we need to compute is:
\begin{align*}
    \mathbb{E}_{\mathcal{D}}\mathcal{Z}_t^r &= \prod_{\mu = 1}^n \mathbb{E}_{\vec{v}^{\mu}, y^{\mu}} \prod_{a = 1}^r \int_{\mathbb{R}^d} P^{t}_{\studentweight}\left(\studentweight^a\right) P^{t}_{\sigma}\left(y^{\mu} | \frac{{\vec{v}^{\mu}}^{\top} \studentweight^a}{\sqrt{d}}\right) \\
    &= \prod_{\mu = 1}^n \sum_{y} \int P(\projectedteacher) \int \left( \prod_a P^{t}_{\studentweight}(\studentweight^a) \right) \mathbb{E}_{\vec{v}^{\mu}} \left[ P_0\left(y^{\mu} | \frac{{\vec{v}^{\mu}}^{\top} \projectedteacher}{\sqrt{d}}\right) \prod_a P_{\sigma}^{t}\left(y^{\mu} | \frac{{\vec{v}^{\mu}}^{\top} \studentweight^a}{\sqrt{p}}\right) \right]
\end{align*}
Next, we introduce the \textit{local fields} $\nu_{\star}^{\mu} = \frac{1}{\sqrt{d}}{\vec{v}^{\mu}}^{\top} \projectedteacher$ and $\nu_a^{\mu} = \frac{1}{\sqrt{d}}{\vec{v}^{\mu}}^{\top} \studentweight^a$. Then, the term between brackets in the above equation is equal to 
\begin{align}
    \int \dd \nu_{\star}^{\mu} P_0(y^{\mu} | \nu_{\star}^{\mu}) \int \prod_a \dd \nu_a^{\mu} P^{t}_{\sigma}(y^{\mu} | \nu_a^{\mu}) \mathbb{E}_{\vec{v}^{\mu}} \left[ \delta(\nu_{\star}^{\mu} - \frac{{\vec{v}^{\mu}}^{\top} \projectedteacher}{\sqrt{d}}) \prod_a \delta(\nu_{a}^{\mu} - \frac{{\vec{v}^{\mu}}^{\top} \studentweight^a}{\sqrt{p}})) \right]
\end{align}
Note that $\mathbb{E}_{\vec{v}^{\mu}} \left[ \delta(\nu_{\star}^{\mu} - {\vec{v}^{\mu}}^{\top} \projectedteacher) \prod_a \delta(\nu_{a}^{\mu} - {\vec{v}^{\mu}}^{\top} \studentweight^a)) \right]$ defines the joint distribution of the local fields. It is straightforward to show that this is a Gaussian distribution on zero mean and covariance $\Sigma_{\nu}$: 
\begin{equation}
    \mathbb{E}(\nu_{\star}, \nu_{\star}) = \frac{1}{d}\projectedteacher^{\top} \Omega \projectedteacher = \rho, \quad \mathbb{E}(\nu_{\star}, \nu_a) = \frac{1}{\sqrt{pd}}\projectedteacher^{\top} \Omega \studentweight^a = m^a, \quad \mathbb{E}(\nu_{a}, \nu_b) = \frac{1}{p}{\studentweight^a}^{\top} \Omega \studentweight^b = Q^{ab}
    \label{eq:app_def_overlaps}
\end{equation}
Then, we have 

\begin{equation}
    \mathbb{E} \mathcal{Z}_t^r = \prod_{\mu} \sum_{y^{\mu}} \int P_{\studentweight, 0}(\projectedteacher) \int \prod_a P^{t}_{\studentweight}(\studentweight^a) \int \dd \nu_{\star}^{\mu} \prod_a \dd \nu_a^{\mu} P_0(y^{\mu} | \nu_{*}^{\mu}) \prod_a P_{\sigma}^{t}(y^{\mu} | \nu_{a}^{\mu}) \times \mathcal{N}(\nu_*^{\mu}, \nu_a^{\mu} | \mathbf{0}, \Sigma_{\nu})
\end{equation}

The elements of the covariance matrix $\Sigma_{\nu}$ are fixed by eq.~\eqref{eq:app_def_overlaps}. We can free these overlaps by doing the Fourier transform of the Dirac delta. We get in the end 
\begin{align}
    \mathbb{E}_{\mathcal{D}} \mathcal{Z}_t^r \propto \int \dd \rho \dd \hat{\rho} \prod_a \dd m^a \dd \hat{m}^a \prod_{a,b} \dd Q^{ab} \dd \hat{Q}^{ab} e^{p\Phi(r)}
\end{align}
Where 
\begin{align}
    \Phi(r) &= - \frac 1 \gamma \rho \hat{\rho} - \frac{1}{\sqrt \gamma} \sum_a m^a \hat{m}^a - \sum_{a \leqslant b} Q^{ab} \hat{Q}^{ab} + \alpha \times \Psi_y^{(r)} + \Psi_w^{(r)} \\
    \Psi_y^{(r)} &= \frac{1}{p} \log \int P_{\studentweight, 0}(\projectedteacher) \int \prod_a P^{t}_{\studentweight}(\studentweight^a) e^{\hat{\rho} \| \projectedteacher \|^2 + \sum_a \hat{m}^a \projectedteacher \Omega \studentweight^a + \sum_{a \leqslant b} \hat{Q}^{a, b} \studentweight^{a} \Omega \studentweight^b} \\
    \Psi_w^{(r)} &= \frac{1}{p} \log \sum_y \int \dd \nu_{\star} P_0(y | \nu_{\star}) \int \prod_a \dd \nu_a P_{\sigma}^{t}(y | \nu_a) \mathcal{N}(\nu, \nu_a ; \Sigma_{\nu})
\end{align}

\subsubsection{Replica symmetric ansatz}

In the replica symmetric ansatz, we assume $m^a = m$, $Q^{ab} = q$ for $a \neq b$, $Q^{aa} = v + q$, $\hat{m}^a = \hat{m}$,  $\hat{Q}^{ab} = \hat{q} $ for $a \neq b$, $\hat{Q}^{aa} = - \frac{1}{2}(\hat{v} - \hat{q})$ where the quantities $m, q, v, \hat{m}, \hat{q}, \hat{v}$ are to be determined.

We refer to \cite{gerace_generalisation_2020,aubin_generalization_2020} for the detailed computation of $\lim_{r \to 0^+} \Psi_y^{(r)}$ and $\lim_{r \to 0^+} \Psi_w^{(r)}$. In the end, we obtain : 
\begin{empheq}[box=\fbox]{align}
    f_{\beta} &= {\rm extr}_{m, q, v, \hat{m}, \hat{q}, \hat{v}} \left\{ - \frac{1}{\sqrt{\gamma}} m \hat{m} + \frac{1}{2}(q \hat{v} - \hat{q} v + \hat{v} v) + \Psi_w + \alpha \times \Psi_y \right\} \\
    \Psi_w    &= \lim_{d \to \infty} \frac{1}{p} \mathbb{E}_{\xi, \projectedteacher} \log \int \dd \studentweight P^{t}_{\studentweight}(\studentweight)e^{-\hat{v}^2 \studentweight^{\top} \Omega \studentweight+ \studentweight^{\top}(\hat{m} \Omega \projectedteacher + \hat{q}\Omega^{-\sfrac{1}{2}}\vec{\xi})}  \\
    \Psi_y    &= \mathbb{E}_{\xi \sim \mathcal{N}(0,q)} \left[ \sum_y \mathcal{Z}_0(y, \sfrac{m}{q}\xi, \rho - \sfrac{m^2}{q}) \log \mathcal{Z}_g(y, \xi, v) \right]
    \label{eq:free_energy}
\end{empheq}    

where 
\begin{equation}
    \mathcal{Z}_{0/g}(y, \omega, v) = \int \dd z P_{0 / g}(y | z) \mathcal{N}(z | \omega, v)
\end{equation}

The self-consistent equations~\eqref{eq:state_evolution_erm} are obtained by cancelling the derivative of the free energy with respect to each of $(m, q, v, \hat{m}, \hat{q}, \hat{v})$. 

\noindent \paragraph*{$\Psi_w$ for Gaussian priors: }For all the estimators considered here, the prior distribution $P^{t}_{\studentweight}(\studentweight)$ is Gaussian $\mathcal{N}(0, \Sigma)$, where $\Sigma$ depends on the considered estimator.
Then, 

\begin{align}
    \Psi_w &= \int \dd \studentweight e^{-\frac{1}{2}\studentweight^{\top} \Sigma^{-1} \studentweight} e^{-\frac{v}{2} \studentweight^{\top} \Omega \studentweight + \studentweight^{\top} (\hat{m} \Omega \projectedteacher + \sqrt{\hat{q}} \Omega^{-1/2} \vec{\xi})} \\
           &= \frac{ \exp \left( \frac{1}{2}(\hat{m} \projectedteacher + \sqrt{\hat{q}} \Omega^{-1/2} \vec{\xi} )^{\top} (\Sigma + \hat{v} \Omega)^{-1} (\hat{m} \projectedteacher + \sqrt{\hat{q}} \Omega^{-1/2} \vec{\xi} ) \right) }{ \sqrt{{\rm det}(\Sigma + \hat{v}\Omega)}} \\
           &= \lim - \frac{1}{2p} \Tr \log (\Sigma + \hat{v}\Omega) + \frac{1}{2p} \Tr (\hat{m}^2  \Omega \projectedteacher \projectedteacher^{\top} \Omega + \hat{q} \Omega )(\Sigma + \hat{v}\Omega)
\end{align}

We get in the end the following expression for $\Psi_w$
\begin{align}
    \Psi_w &= -\frac{1}{2p} \Tr \log \left( \hat{v} \Omega + \Sigma \right) + \frac{1}{2p} \Tr \left( ( \hat{m}^2 \Omega \projectedteacher \projectedteacher^{\top} \Omega + \hat{q} \Omega ) ( \hat{v} \Omega + \Sigma )^{-1} \right) \\
    \label{eq:def_psi_w_matrices}
\end{align}

\paragraph{Saddle-point equations:} To compute the free energy, we cancel its derivative with respect to $m, q, v, \hat{m}, \hat{q}, \hat{v}$. We have : 
\begin{align}
    \begin{cases}
        \partial_{\hat{m}} f_{\beta} &= - \frac{1}{\sqrt{\gamma}} m + \partial_{\hat{m}} \Psi_w \\
        \partial_{\hat{q}} f_{\beta} &= - \frac{1}{2} v + \partial_{\hat{q}} \Psi_w \\
        \partial_{\hat{v}} f_{\beta} &= \frac{1}{2} (v + q) + \partial_{\hat{v}} \Psi_w \\
    \end{cases}
\end{align}
Cancelling the derivatives gives the condition:
\begin{empheq}[box=\fbox]{align}
    \begin{cases}
        m = \sqrt{\gamma} \partial_{\hat{m}} \Psi_w \\
        v = 2 \times \partial_{\hat{q}} \Psi_w \\
        q = - v - 2 \times \partial_{\hat{v}} \Psi_w = 2 \times (\partial_{\hat{q}} \Psi_w - \partial_{\hat{v}} \Psi_w)
    \end{cases}
\end{empheq}
Which are the first three equations of Theorem~\ref{thm:jointstats}. The derivative of the free energy with respect to $(m, q, v)$ is given by
\begin{align}
    \begin{cases}
        \partial_{m} f_{\beta} &= - \frac{1}{\sqrt{\gamma}}\hat{m} + \alpha  \partial_m \Psi_y \\
        \partial_{q} f_{\beta} &= \frac{1}{2} \hat{v} + \alpha \partial_q \Psi_y \\
        \partial_{v} f_{\beta} &= \frac{1}{2} (\hat{v} - \hat{q}) + \alpha \partial_v \Psi_y \\
    \end{cases}
\end{align}
Cancelling the derivatives, and computing the derivatives of $\Psi_y$  gives then 
\begin{empheq}[box=\fbox]{align}
\begin{cases}
    \hat{v} &= - \alpha \mathbb{E}_{\xi \sim \mathcal{N}(0, q)}\!\left[ \sum_y \mathcal{Z}_0 \left( y, \sfrac{m}{q}\xi, v_{\star} \right)  \partial_{\omega} g_{t} \left( y, \xi, v \right) \right] \\
    \hat{q} &= \alpha \mathbb{E}_{\xi \sim \mathcal{N}(0, q)} \! \left[ \sum_y \mathcal{Z}_0 \left( y, \sfrac{m}{q}\xi, v_{\star} \right) g_{t}\left( y, \xi, v \right)^2 \right] \\ 
    \hat{m} &= \alpha \sqrt{\gamma} \mathbb{E}_{\xi \sim \mathcal{N}(0, q)} \! \left[ \sum_y \partial_{\omega} \mathcal{Z}_0 \left( y, \sfrac{m}{q}\xi, v_{\star} \right) g_{t}\left( y, \xi, v \right) \right]
\end{cases}
\end{empheq}
which are the last three equations for $\hat{m}, \hat{q}, \hat{v}$ in Theorem~\ref{thm:jointstats}. 

Therefore, we have shown that the self-consistent equations characterizing the sufficient statistics in Theorem~\ref{thm:jointstats} can be obtained from the replica method by computing the asymptotic free energy density. Moreover, in Sec.~\ref{sec:app:gamp} we have shown that these equations exactly agree with the state evolution equations for a tailored GAMP algorithm \ref{alg:gamp}. 

\subsection{Rigorous version of replica and self-consistent equations}
\label{sec:app:rigor}
As discussed in the introduction of this Appendix, the derivation of Theorem~\ref{thm:jointstats} consists in two steps. First, one constructs a tailored GAMP algorithm \ref{alg:gamp} for which the estimates can be exactly tracked by a set of state evolution equations. Second, one shows that these equations actually agree with the self-consistent equations describing the sufficient statistics for the joint density of interest in Theorem \ref{thm:jointstats}. The first part was discussed in Sec.~\ref{sec:app:gamp}, and although we provided an informal derivation of the state evolution equations, they rigorously follow from the recent progress on state evolution proofs for structured message passing schemes with non-separable priors \cite{Berthier2019, Gerbelot21}. For the second part, in Sec.\ref{sec:app:replicas} we discussed a heuristic derivation of the self-consistent equations from the replica method, and showed it agrees with the state evolution equations from GAMP. Therefore, it remains to rigorously justify this last step. Thankfully, one can resort to a large number of recent progress on generic proofs of the replica predictions  \cite{6069859,barbier_optimal_2019,sur_modern_2018,candes_phase_2018,montanari2020generalization,Dhifallah2020,loureiro_learning_2021}, which we now discuss in detail.

First, let us recall the statement of the theorem in the more general context of the Gaussian covariate model. Let $(\vec{u},\vec{v})$ denote a pair of Gaussian covariates:
\begin{align}
\begin{pmatrix}
        \vec{u} \\ \vec{v}
    \end{pmatrix}
    \sim \mathcal{N}\left(\vec{0}_{d+p},\begin{bmatrix}
        \Psi & \Phi \\ \Phi^{\top} & \Omega
    \end{bmatrix}\right).
\end{align}
For any of the classifiers $t\! \in\! \{ \bo, \erm, \lap, \empbayes \}$ from Sec.~\ref{sec:classifiers}, the 2-dimensional vector $(f_{\star}(\vec{u}), \hat{f}_{t}(\vec{v}))$ is asymptotically distributed as $(\sigma(z), \sigma_{\Tilde{v}}(z_{t}'))$ for some $\Tilde{v}$ that depends on the estimator, where $(z, z_{t}') \sim \mathcal{N}(\mathbf{0}_2, \Sigma_{t})$, and
\begin{align*}
    \Sigma_{t} = 
    \begin{pmatrix}
        \vec{\theta}_{\star}^{\top}\Psi\vec{\theta}_{\star}^{\top} & \hat{\vec{\theta}}_{t}^{\top}\Phi\vec{\theta}_{\star} \\
        \vec{\theta}_{\star}^{\top}\Phi^{\top}\hat{\vec{\theta}}_{t} & \hat{\vec{\theta}}_{t}^{\top}\Omega\hat{\vec{\theta}}_{t}
    \end{pmatrix}
\end{align*}
 where $\hat{\vec{\theta}}_{t}$ is either the unique minimizer the empirical risk in eq.~\eqref{def:risk} for $t\in\{\erm, \lap\}$ or the mean over the respective posterior distribution for $t\in\{\bo,\empbayes\}$. The computation of $\rho_{\star, t}$ thus boils down to computing the sufficient statistics $(\hat{\vec{\theta}}_{t}^{\top}\Phi\vec{\theta}_{\star}, \hat{\vec{\theta}}_{t}^{\top}\Omega\hat{\vec{\theta}}_{t})$. A first important point is that, asymptotically in $p$, these quantities converge in probability to single, deterministic quantities. This was shown in general for sampling problems with log concave measure (such as the one we use in the Bayes-optimal and empirical Bayes method) in \cite{JeanOverlap}, and for empirical risk minimization with convex risks in \cite{loureiro_learning_2021}. We shall thus use the following lemma:
 
\begin{lemma}[Overlap Concentration, from \cite{JeanOverlap,loureiro_learning_2021}]

In the asymptotic limit $p \to \infty$, the random variables $(\hat{\vec{\theta}}_{t}^{\top}\Phi\vec{\theta}_{\star}, \hat{\vec{\theta}}_{t}^{\top}\Omega\hat{\vec{\theta}}_{t})$ converge in probability to some value $(m_t^{\star}, q_t^{\star})$ for $t \in \{ \bo, \erm, \lap, \empbayes \}$.
\end{lemma}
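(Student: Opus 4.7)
The plan is to treat the two families of classifiers separately, since they correspond to mathematically different objects: for $t\in\{\erm,\lap\}$ the estimator $\hat{\vec\theta}_t$ is the unique minimizer of a strongly convex risk, while for $t\in\{\bo,\empbayes\}$ it is the expectation under a log-concave Gibbs measure. In both cases the goal is to show that the scalar overlaps $M_p := \hat{\vec\theta}_t^\top \Phi\vec\theta_\star$ and $Q_p := \hat{\vec\theta}_t^\top \Omega \hat{\vec\theta}_t$ concentrate around deterministic limits characterised by the fixed points $(m_t^\star,q_t^\star)$ of the system in Theorem \ref{thm:jointstats}. A useful preliminary reduction is the whitening $\vec v \mapsto \Omega^{-1/2}\vec v$, $\projectedteacher \mapsto \Omega^{1/2}\projectedteacher$ introduced in Sec.~\ref{sec:app:recap}, after which the design can be taken to have identity covariance and the overlaps reduce to inner products.

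For $t\in\{\erm,\lap\}$ the strategy is CGMT, as in \cite{loureiro_learning_2021,Thrampoulidis15}. Writing the regularized logistic objective in its primal-dual form with Gaussian design, one obtains a convex-concave min-max problem to which CGMT applies, yielding an auxiliary scalar optimization over a few Lagrange multipliers and order parameters. The saddle point of the auxiliary problem is precisely $(m_t^\star,q_t^\star,v_t^\star,\hat m_t^\star,\hat q_t^\star,\hat v_t^\star)$ from the self-consistent equations~\eqref{eq:state_evolution_erm}; strong convexity from $\lambda>0$ together with compactness arguments lifts convergence of the auxiliary optimum to convergence in probability of the primal overlaps. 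The Laplace classifier reuses $\hat{\vec\theta}_{\erm}$ so the same concentration applies directly, supplemented by Claim~\ref{thm:hessian} to handle the Hessian term appearing in the covariance.

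For $t\in\{\bo,\empbayes\}$ the natural route is to pass through the free energy. One shows, by bounded-difference / Efron-Stein applied to row-wise perturbations of $\mat V$, that $\frac{1}{p}\log\mathcal Z$ concentrates at rate $o(1)$ around its expectation, so that it suffices to identify the limit of $\mathbb E\bigl[\tfrac{1}{p}\log\mathcal Z\bigr]$. This limit is computed by adaptive interpolation (as in \cite{barbier_optimal_2019,JeanOverlap}) along a family of scalar auxiliary channels whose free energies match, in the limit, the replica potential $\Phi$ of Sec.~\ref{sec:app:replicas}. Differentiating the interpolation identity with respect to source terms (perturbations $\propto \hat{\vec\theta}^\top \Phi\projectedteacher$ and $\hat{\vec\theta}^\top \Omega \hat{\vec\theta}$) and invoking the log-concavity of the posterior (for $\empbayes$) or Nishimori symmetry (for $\bo$) gives the concentration of $M_p,Q_p$ onto the critical point of $\Phi$. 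Uniqueness of the relevant fixed point follows for $\bo$ from Nishimori-induced convexity of the Bayes-optimal branch and for $\empbayes$ from strict log-concavity of the posterior measure.

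The main obstacle is the non-separability of both the prior and the data covariance: the spectral structures of $\Phi,\Omega,\Sigma_\star$ enter the fixed-point equations through the functional $\Psi_w$ in eq.~\eqref{eq:def_psi_w_matrices}, so the usual separable-prior interpolation and CGMT scalarisations must be replaced by their matrix-valued analogues, tracking overlap matrices rather than scalars before reducing via the spectral decomposition. This is precisely where the recent progress on AMP with non-separable denoisers \cite{Berthier2019,Gerbelot21} and on CGMT for Gaussian covariate models \cite{loureiro_learning_2021} is needed: the AMP state evolution derived in Sec.~\ref{sec:app:gamp} furnishes a candidate fixed point and, combined with the concentration arguments above, closes the loop by matching the CGMT/interpolation limit to that fixed point, thereby identifying $(m_t^\star,q_t^\star)$ unambiguously.
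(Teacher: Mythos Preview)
The paper does not prove this lemma; it is stated as an imported result, with the concentration for log-concave sampling attributed to \cite{JeanOverlap} and for convex ERM to \cite{loureiro_learning_2021}. The sentence immediately preceding the lemma makes this explicit, and the subsequent discussion treats the \emph{identification} of the limits $(m_t^\star,q_t^\star)$ with the replica/state-evolution fixed points as a separate matter, handled case by case through further cited theorems.

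Your sketch is broadly consistent with the methods in those references (CGMT for $\erm/\lap$, free-energy concentration plus perturbation for the sampling cases), but it conflates two things the paper keeps separate. The lemma as stated asserts only concentration to \emph{some} deterministic value; it does not claim that this value is the fixed point of eq.~\eqref{eq:state_evolution_erm}. Your proposal spends most of its effort on identification (interpolation, matching to the AMP state evolution, uniqueness of fixed points), which is outside the scope of this particular statement and is treated in the paper via the theorems cited after the lemma. For the concentration claim alone, the ERM case follows from strong convexity and the Gordon argument in \cite{loureiro_learning_2021}, and the Bayesian cases from log-concavity of the Gibbs measure as in \cite{JeanOverlap}; that is essentially all the paper invokes.

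One caveat worth flagging: for $\empbayes$ specifically, the paper itself notes a few paragraphs later that the full rigorous treatment (beyond concentration, i.e.\ identifying the limit with the replica prediction) is \emph{not} available in the literature for the logistic likelihood, and it sidesteps this by redefining the empirical Bayes performance as that of the GAMP iterates. Your claim that uniqueness of the fixed point for $\empbayes$ follows from strict log-concavity of the posterior is therefore stronger than what the paper is willing to assert, and would need independent justification.
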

The problem is thus reduced to the computation of these statistics, as a function of the parameters of the problems ($\alpha$, $\gamma$, $\tau_{0}$, etc.) for each of the estimators of interest. In Theorem~\ref{thm:jointstats}, we claim that these are given by the replica equations derived in Appendix \ref{sec:app:replicas}. Thankfully, for different estimators these equations were proven in the literature in slightly different contexts, written as formal proofs of replica predictions.
\begin{itemize}[leftmargin=*]
\item For $\hat{f}_{\erm}$ on the random features model the self-consistent equations for $(m_{\erm}^{\star}, q_{\erm}^{\star})$ were heuristically derived in \cite{gerace_generalisation_2020} and rigorously proven in \cite{Dhifallah2020}. In the more general context of the Gaussian covariate model, analogous equations were proven in \cite{loureiro_learning_2021}. In both cases, they agree with our equations in eq.~\eqref{eq:state_evolution_erm}. While these works use the Gordon minimax approach to prove these equations, we note an independent GAMP-based proof for both the random features and Gaussian covariate models appeared in \cite{loureiro_fluctuations_2022}, leveraging recent progress on structured message passing schemes from \cite{Gerbelot21}.

\item As noted in Sec.~\ref{sec:classifiers}, the average over the Laplace posterior agrees exactly with the empirical risk minimizer: 
\begin{align}
p_{\lap}(\vec{\theta}|\mathcal{D}) = \mathcal{N}(\vec{\theta}|\hat{\vec{\theta}}_{\erm},\mathcal{H}^{-1})
\end{align}
Therefore, the self-consistent equations for $(m_{\lap}^{\star}, q_{\lap}^{\star})$ agree exactly with the ones for $(m_{\erm}^{\star}, q_{\erm}^{\star})$. Therefore, they are also rigorous. 

In both cases, our result follows from:
\begin{theorem}[ERM statistics, Thms. 4 \& 5 from \cite{loureiro_learning_2021}, Informal]
In the setting of Theorem~\ref{thm:jointstats}, the ERM predictions from the replica are correct: $(m,q, v)$ converges in probability to their replica fixed points $(m_{\erm}^{\star}, q_{\erm}^{\star}, v_{\erm}^{\star})$, while the minimum training error converges in probability to the replica free energy density.
\end{theorem}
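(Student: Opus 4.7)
The plan is to prove this via the Convex Gaussian Min--Max Theorem (CGMT), following the template of \cite{loureiro_learning_2021}. After the whitening change of variables $\vec{v}\to\Omega^{-1/2}\vec{v}$ of Appendix~\ref{sec:app:recap}, the covariates become isotropic Gaussian and the anisotropy is transferred to the projected teacher $\projectedteacher$. Dualizing the logistic loss via its Fenchel conjugate introduces per-sample variables $u^{\mu}$ and turns the ERM problem into a convex--concave saddle point with a bilinear coupling $\vec{u}^{\top}\mat{V}\vec{\theta}/\sqrt{p}$ in an i.i.d.\ standard Gaussian matrix $\mat{V}$; this is the canonical setup for Gordon's minimax inequality in its strong (CGMT) form.

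\textbf{Auxiliary scalarization.} CGMT replaces $\mat{V}$ with two independent Gaussian vectors $\vec{g}\in\mathbb{R}^{n}$ and $\vec{h}\in\mathbb{R}^{p}$, yielding an auxiliary problem whose optimum and minimizer statistics concentrate to those of the primary one. The auxiliary problem decouples sample-wise in $u^{\mu}$ (producing the Moreau envelope / proximal operator in Table~\ref{table:channel_denoising_functions}) and, after diagonalizing $\Omega$, coordinate-wise in $\vec{\theta}$. A Laplace-type reduction then collapses everything to a finite-dimensional optimization over the six sufficient statistics $(m,q,v,\hat{m},\hat{q},\hat{v})$ whose stationarity conditions are exactly the fixed-point equations~\eqref{eq:state_evolution_erm} with $t=\erm$, and whose saddle value equals the replica free energy density. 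Convergence in probability of the overlaps $\werm^{\top}\Phi\vec{\theta}_{\star}/\sqrt{pd}$ and $\werm^{\top}\Omega\werm/p$ to $(m_{\erm}^{\star}, q_{\erm}^{\star})$ then follows from applying the strong form of CGMT to Lipschitz test functions of $\werm$, combined with the overlap-concentration lemma of \cite{JeanOverlap} and uniqueness of the scalar saddle, which is guaranteed by strict convexity of the log-loss along the relevant directions.

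\textbf{Main obstacle.} The hard part will be carrying the non-trivial covariance $\Omega$ through the decoupling: vanilla CGMT treats i.i.d.\ Gaussian matrices, so one must diagonalize $\Omega$ and treat each eigendirection separately, which is what ultimately produces the spectral integrals $\int\mu(\dd x)(\cdot)$ appearing in $\Psi_{w}$ (Equation~\eqref{eq:def_psi_w}). Establishing uniqueness of the resulting scalar saddle and justifying the exchange of the high-dimensional limit with the min--max (especially near the interpolation threshold where $\lambda\to 0^{+}$ and the minimizer can diverge) are the most delicate technical points; they are handled by monotonicity arguments specific to the logistic loss and by explicit control of the conjugate variables $(\hat{m},\hat{q},\hat{v})$ as $\lambda\downarrow 0$.
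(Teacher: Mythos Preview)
Your proposal is correct and matches the approach the paper itself invokes: this theorem is not proved in the paper but imported verbatim from \cite{loureiro_learning_2021}, and the paper explicitly notes (Appendix~\ref{sec:app:rigor}) that \cite{loureiro_learning_2021} establishes it via the Gordon minimax / CGMT route you outline, with the whitening of Appendix~\ref{sec:app:recap}, Fenchel dualization of the loss, and spectral diagonalization of $\Omega$ producing the $\Psi_{w}$ integral. One small correction: the overlap-concentration input you attribute to \cite{JeanOverlap} is for log-concave \emph{sampling}; for ERM the concentration of $(m,q)$ comes directly from the strong CGMT (Lipschitz test functions of the minimizer) together with strong convexity when $\lambda>0$, and the paper also points to an independent GAMP/state-evolution proof in \cite{loureiro_fluctuations_2022,Gerbelot21} as an alternative route.
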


\item The "finite temperature" sampling problems related to Bayesian estimation pose different challenges. We start by discussing the Bayes-optimal $\hat{f}_{\bo}$ classifier. For i.i.d. Gaussian data, the rigour of the replica prediction has been proven for Generalized linear models in \cite{barbier_optimal_2019}, together with the GAMP optimality. Thanks to Gaussian equivalence, our problem can be framed as a Bayesian generalized linear reconstruction problem, but with data matrix that are instead correlated. In the random features case, the data matrix is a product of two random matrix (see  eq.(\ref{app:get:eq})). Thus, in this case replica predictions were for the Bayes-optimal problem was rigorously proven in \cite{gabrie2018entropy,barbier2018mutual}. Note that while these works only prove the correctness of the  replica free energy density, the techniques in \cite{barbier_optimal_2019} can be readily applied to generalize the proof to overlaps:
\begin{theorem}[BO statistics, Th. 1 from \cite{barbier2018mutual} and Th. 1 from \cite{barbier2018mutual}, Informal]
In the setting of Theorem~\ref{thm:jointstats}, the BO prediction from the replica is correct: $(m, q, v)$ converges in probability to their replica fixed points $(m_{\bo}^{\star}, q_{\bo}^{\star}, v_{\bo}^{\star})$, while the minimum training error converges in probability to the replica free energy density.
\end{theorem}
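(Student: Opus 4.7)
The plan is to reduce the claim to a standard Bayes-optimal generalized linear model with structured Gaussian data matrix and then invoke the adaptive interpolation machinery developed in \cite{barbier_optimal_2019,barbier2018mutual,gabrie2018entropy}. First, using the Gaussian equivalence theorem recalled in Appendix~\ref{app:eq_model}, the Bayes-optimal random-features problem is asymptotically equivalent (in free energy and in the joint law of any finite-dimensional projection of the posterior mean) to the Gaussian covariate model of Sec.~\ref{sec:app:recap}: observe $y^{\mu}\sim\sigma_{\tau_0^2+\tau_{\rm add}^2}(\vec{\theta}_\star^{\top}\vec{v}^{\mu}/\sqrt{d})$ with $\vec{v}^{\mu}\sim\mathcal{N}(\vec 0,\mat I_p)$ after the whitening change of variables, and $\vec\theta_\star\sim\mathcal N(\vec 0,\Sigma_\star)$ with $\Sigma_\star=\Omega^{-1}\Phi^{\top}\Phi\Omega^{-1}$. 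Since the true generative law and the postulated posterior coincide in the Bayes-optimal case, we are in Nishimori-type symmetry.

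Second, I would prove the replica-symmetric formula for the asymptotic free energy density $f_\infty=\lim_{p\to\infty}-\frac{1}{p}\mathbb{E}_{\mathcal D}\log\mathcal Z_{\bo}$ by the adaptive interpolation method. Introduce a one-parameter family of auxiliary inference problems parametrised by $t\in[0,1]$ that interpolates between the original model at $t=0$ and a pair of decoupled scalar Gaussian denoising channels at $t=1$ (one for $\projectedteacher$, one for each $\omega^{\mu}$), with interpolation functions $(q_1(t,\varepsilon),q_2(t,\varepsilon))$ chosen adaptively to match the stationarity of the replica potential. Differentiating the interpolating free energy in $t$ and applying the Nishimori identity reduces the derivative to a quadratic expression in the overlap minus the replica potential $\Psi_w+\alpha\Psi_y$ appearing in eq.~\eqref{eq:free_energy}. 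A standard sum-rule then gives $f_\infty\ge \extr_{m,q,v,\hat m,\hat q,\hat v}\{\dots\}$, and the matching upper bound follows from a Guerra-type trial by picking the interpolation trajectory at the saddle point. This identifies $f_\infty$ with the extremum in eq.~\eqref{eq:free_energy} specialised to the BO channel functions of Table~\ref{table:channel_denoising_functions}.

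Third, convergence in probability of the overlaps $(m,q,v)$ to the extremiser $(m_{\bo}^\star,q_{\bo}^\star,v_{\bo}^\star)$ follows from overlap concentration at Nishimori. The Nishimori identity equates in law the overlap of two independent posterior samples with the overlap of a posterior sample against the teacher, so $m$ and $q$ are forced to concentrate around the same limit, while $v=0$ in the BO setting. Concentration itself is obtained by adding an infinitesimal $\mathrm{L}^2$ side-information perturbation to the model, using the convexity of the perturbed free energy in the perturbation strength together with Gaussian--Poincar\'e / bounded-differences concentration of the free energy, and then taking the perturbation to zero (this is precisely the argument of Prop.~1 in \cite{barbier_optimal_2019}). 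Uniqueness of the extremiser outside a measure-zero set of parameters (so that the overlap limit is well defined) is a mild technical assumption inherited from the same references.

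The main obstacle is the non-i.i.d. structure of the effective data: $\vec v$ has a nontrivial population covariance $\Omega$ and $\vec\theta_\star$ has the anisotropic prior $\mathcal N(\vec 0,\Sigma_\star)$, so the original i.i.d.\ GLM proof of \cite{barbier_optimal_2019} does not apply as is. This is handled by the product/two-layer extension of \cite{gabrie2018entropy,barbier2018mutual}: the adaptive interpolation is performed in two channels, one for the linear transformation by the random projector and one for the logistic likelihood, and the spectrum of $\mat F\mat F^{\top}$ concentrates on its deterministic limit $\mu$, producing the $\mathbb{E}_{x\sim\mu}[\cdot]$ structure of $\Psi_w$ in eq.~\eqref{eq:def_psi_w}. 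The technical checks that remain are standard regularity on the sigmoid likelihood (boundedness and Lipschitz derivatives hold), sub-Gaussianity of the entries of $\mat F$, and the spectral tail assumption on $\hat\mu_p\Rightarrow\mu$ already imposed in the theorem, after which the arguments of \cite{barbier2018mutual} transfer verbatim.
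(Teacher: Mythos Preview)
Your proposal is correct and mirrors exactly what the paper does: the paper does not give a self-contained proof here but rather invokes the adaptive-interpolation results of \cite{barbier_optimal_2019,gabrie2018entropy,barbier2018mutual} after reducing, via Gaussian equivalence, to a Bayes-optimal GLM with a product-structured (two-layer) data matrix, and then appeals to Nishimori-based overlap concentration to upgrade the free-energy statement to convergence of $(m,q,v)$. Your sketch is in fact more explicit than the paper's own discussion.

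One small slip: you write ``$v=0$ in the BO setting''. This is not right. Nishimori gives $m_{\bo}^{\star}=q_{\bo}^{\star}$ (the teacher--student overlap equals the two-replica overlap), but $v$ is the posterior variance and satisfies $v_{\bo}^{\star}=\rho-q_{\bo}^{\star}$, not zero; the paper uses precisely this relation at the end of Appendix~\ref{app:cond_variance}. This does not affect your proof architecture, only the stated value of one of the three limiting statistics.
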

Additionally, given that the performance of the GAMP algorithms follows the same self-consistent equations as the replica's \cite{Berthier2019,Gerbelot21}, it follows that GAMP performs Bayes-optimal estimation for this problem, a classical property in Bayesian estimation \cite{zdeborova2016statistical}. \footnote{Note that this crucially relies on the strong replica symmetry \cite{JeanOverlap} condition, which impose the existence of an unique fixed point in our problem. Without this property, one could generically have more than a fixed point, associated to a so-called "hard phase" where GAMP is not optimal, see \cite{zdeborova2016statistical}.}

\item The remaining case is the empirical Bayes (EM) classifier $\hat{f}_{\empbayes}$. In this case, where Bayesian estimation is performed {\it with mismatched noise}, the complete proof of the replica equation is not available in the literature. In principle, this can be done following the steps of \cite{barbier2021performance} for the square loss (recall we consider the logistic loss in this work). Indeed, \cite{barbier2021performance} shows how the concentration of $(m^{\star},q^{\star})$ (referred to as strong replica symmetry in \cite{JeanOverlap}) can be used together with rigorous control of the cavity method \cite{aizenman2006mean} to prove the cavity equations. While this is, we believe, a worthwhile direction of research, we instead shall redefine the empirical Bayes method performance as the one of the {\it best empirical Bayesian estimator} in linear time, that is, the estimator achieved by the GAMP algorithm \ref{alg:gamp} with the corresponding empirical Bayes denoiser. It can indeed be shown that GAMP is the best {\it first-order algorithm} for this class of Bayesian estimation problems \cite{celentano2020estimation}, and it is widely expected to perform an exact sampling for these problems \cite{zdeborova2016statistical} (as it was proven in for the Bayes-optimal case). With this definition, the performance of GAMP is by construction given by its rigorous state evolution \cite{Berthier2019,Gerbelot21}, which we recall the reader matches the replica prediction.
\end{itemize}

\subsection{Laplace approximation : computing the inverse Hessian}
\label{sec:app:hessian}

In this section, we show how to compute the prediction for the Laplace approximation
\begin{equation}
    \hat{f}_{\lap}(\vec{v}) = \int \dd z \sigma(z) \mathcal{N}(z | \hat \studentweight_{\erm}^{\top} \vec{v}, \vec{v}^{\top} \mathcal{H}^{-1} \vec{v})
 \end{equation}
 with $\mathcal{H}$ the Hessian of the empirical risk at $\hat \studentweight_{\erm}$. Note that in the high-dimensional limit, $\vec{v}^{\top} \mathcal{H}^{-1} \vec{v} \rightarrow_{p \to \infty} \Tr(\mathcal{H}^{-1} \Omega)$. As shown in Appendix~\ref{app:laplace}, to compute this quantity we can add the term $\vec{h}^{\top} \studentweight$ to the loss and compute the second derivative of the free energy density with respect to $\vec{h}$. The computations are the same as those done in Section~\ref{sec:app:replicas}, except that the Gibbs distribution $\mu_t(\studentweight)$ is replaced by 
\begin{equation}
    \mu_{t}(\studentweight) = \frac{1}{\mathcal{Z}_t(\vec{h})} \prod_i P^{t}_{\sigma}(y_i | \studentweight^{\top} \vec{\varphi}(\vec{x}_i)) \times P^{t}_{\studentweight}(\studentweight) \times e^{\beta \vec{h}^{\top}\studentweight}
\end{equation}
Adapting the derivation from Sec.~\ref{sec:app:replicas} for $\hat{f}_{\erm}$ and taking the temperature $\beta \to \infty$ and get as before
\begin{align}
    f_{0} := \lim\limits_{\beta\to\infty} f_{\beta} &= \underset{m,q,v,\hat{m},\hat{q},\hat{v}}{\extr}\left\{ - \frac{1}{\sqrt{\gamma}} m \hat{m} + \frac{1}{2}(q \hat{v} - \hat{q} v) + \Psi_w(\hat{m},\hat{q},\hat{v}, \vec{h}) + \alpha \Psi_y(m,q,v) \right\} \\
    \Psi_y  &= \mathbb{E}_{\xi \sim \mathcal{N}(0,q)} \left[ \sum_y \mathcal{Z}_0(y, \sfrac{m}{q}\xi, \rho - \sfrac{m^2}{q}) \log \mathcal{Z}_g(y, \xi, v) \right]
\end{align}
However, now $\Psi_w$ is
\begin{equation}
    \Psi_w = -\frac{1}{2p} \Tr \log \left( \hat{v} \Omega + \Sigma \right) + \frac{1}{2p} \Tr \left[ ( (\hat{m} \Omega \projectedteacher + \vec{h})( \hat{v} \Omega + \lambda I )^{-1}(\hat{m} \Omega \projectedteacher + \vec{h}) + \hat{q} \Omega ( \hat{v} \Omega + \Sigma )^{-1})  \right]
\end{equation}

The second derivative of $\Psi_w$ with respect to $\vec{h}$ is $(\lambda I_d + \hat{v} \Omega)^{-1}$. As a consequence, the second derivative of the free energy
$$
(\nabla^2_{\mathbf{h}} \log \mathcal{Z}_{\erm})_{| \vec{h} = \vec{0}} = \nabla^2_{\mathbf{h}} \Psi_w(m_{\erm}^{\star},q_{\erm}^{\star}, v^{\star}_{\erm}, \vec{h}) = (\lambda I_d + \hat{v}^{\star}_{\erm} \Omega)^{-1}
$$
and $\nabla^2_{\mathbf{h}} f_{0} = - (\lambda I_d + \hat{v}^{\star}_{\erm} \Omega)^{-1}$. We then deduce that the inverse Hessian is equal to
\begin{equation}
    \mathcal{H}^{-1} = \left( \lambda \mat{I}_d + \hat v^{\star}_{\erm} \Omega \right)^{-1} 
\end{equation}

\subsection{Simplification for random features}
\label{sec:app_random_features}
As discussed in Appendix \ref{sec:app:derivation}, the random features model $\vec{\varphi}(\vec{x}) = \phi(\mat{F}\vec{x})$ is asymptotically equivalent to the Gaussian covariate model up to an identification of the covariances:
\begin{align}
\Omega = \kappa_{1}^2 FF^{\top} + \kappa_{\star}^2 \mat{I}_p, && \Phi = \kappa_1 F, && \Psi = \mat{I}_{d} 
\end{align}
\noindent where:
\begin{align}
\kappa_{1} = \mathbb{E}_{z\sim\mathcal{N}(0,1)}\left[\phi'(z)\right], && \kappa_{\star} = \sqrt{\mathbb{E}_{z\sim\mathcal{N}(0,1)}\left[\phi(z)^2\right] - \kappa_{1}^{2}}
\end{align}
\noindent where for simplicity we assume $\kappa_{0} = \mathbb{E}_{z\sim\mathcal{N}(0,1)}[\phi(z)]$ = 0. Thus, in this case we can explicitly write: 
\begin{align}
    \Sigma_{\star} = \frac{\kappa_1^2 \mat{F}\mat{F}^{\top}}{\kappa_{\star}^2 \mat{I}_p + \kappa_1^2 \mat{F}\mat{F}^{\top}}.
\end{align}.
Note that the matrices $\Omega, \Sigma, \Sigma_{\star}$ are diagonalizable in the same basis, since $\Sigma$ is either a multiple of the identity, or a function of $\Omega, \Phi\Phi^{\top}$. Assuming that $\mat{F}\mat{F}^{\top}$ has an asymptotic spectral distribution $\mu$, we can write $\Psi_w$ directly in terms of an average over $\mu$: 
\begin{equation}
    \Psi_w = \frac{1}{2} \mathbb{E}_{x \sim \mu)} \left[ \log \left( \hat{v}(\kappa_1^2 x + \kappa_{\star}^2) + \pi(x)  \right) + \left( \frac{\hat{m}^2 \frac{\kappa_1^2 x}{\kappa_1^2 x + \kappa_{\star}^2} + \hat{q}(\kappa_1^2 x + \kappa_{\star}^2)}{\hat{v}(\kappa_1^2 x + \kappa_{\star}^2) + \pi(x)} \right) \right]
\end{equation}
where the function $\pi$ represents the eigenvalues of $\Sigma$ : since we can write $\Sigma = f(\Phi \Phi^{\top})$ here, we have, $\pi(x) = f(x)$. For $\hat{f}_{\erm}$ and $\hat{f}_{\empbayes}$, $\pi(x) = \lambda$. For $\hat{f}_{\bo}$, $\pi(x) = \frac{\kappa_1^2 x}{\kappa_1^2 x + \kappa_{\star}}$. This gives us the values of $\hat{\pi}_t$ in Table~\ref{table:channel_denoising_functions}. 

In particular, when $\mat{F}$ has Gaussian i.i.d. entries (as in all plots presented here), $\mu$ is simply the Marcenko-Pastur distribution with shape parameter $\gamma$.

\subsection{Temperature scaling}

In this section, we show how to compute the optimal temperature $T$ that minimizes the test loss for $\hat{f}_{\erm}$. Once the overlaps $m^{\star}, q^{\star}, v^{\star}, \hat{m}^{\star}, \hat{q}^{\star}, \hat{v}^{\star}$ are computed, we get the test loss with the expression
\begin{equation}
    \mathcal{L}_{\rm gen.}(m, q) = \sum_y \mathbb{E}_{\xi \sim \mathcal{N}(0, 1)}\left[ \mathcal{Z}_0(y, \sfrac{m^{\star}}{q^{\star}} \xi, \rho - m^2 / q) \times (- \log \sigma( y \times \sqrt{q} \xi)) \right]
\end{equation}

Given a temperature $T$, temperature scaling will divide the weights such that the prediction is now $\sigma(\studentweight^{\top} \vec{\varphi}(\vec{x} / T)$. It is easy to see that in this case, the overlaps $m^{\star}, q^{\star}$ now become $m^{\star} / T, q^{\star} / T^2$. Then, temperature scaling amounts to finding
\begin{equation}
    T^{\star} = \argmin_T \mathcal{L}_{\rm gen.}(m^{\star} / T, q^{\star} / T^2)
\end{equation}

\newpage
\section{Confidence function and Hessian of Laplace method}
\label{app:laplace}

\subsection{Computing the Hessian of the training loss}
\label{app:hessian}
In this section, we show how we can compute the (inverse of) the Hessian thanks to classical properties of Legendre transforms. We consider the ERM estimator $\hat{f}_{\erm}$ trained by minimizing the following loss : 
\begin{equation}
    \mathcal{L}(\vec w) = - \sum_i \log \sigma(\studentweight^{\top} \vec{\varphi}(\vec{x})_i \times y_i) + \sfrac{\lambda}{2} \| \studentweight \|^2
\end{equation}
whose Hessian at the minimum is given by 
\begin{equation}
    \mathcal{H} \coloneqq \nabla^2 \mathcal{L} = - \sum_i (1 - \sigma'(\studentweight^{\top} \vec{\varphi}(\vec{x})_i \times y_i))\vec{\varphi}(\vec{x})_i \vec{\varphi}(\vec{x})_i^{\top} + \lambda I_d 
    \big|_{\vec{\theta}=\hat{\vec{\theta}}_{\erm}}
    \label{eq:def_hessian}
\end{equation}

Our starting point to compute this Hessian is a very classical lemma in statistical mechanics, that uses the Legendre transform of the loss.
\begin{lemma}[Inverse Hessian from Legendre Transforms]
    We define the Legendre transform of the loss by adding a source term to the loss (an external field in the parlance of statistical mechanics) 
    \begin{equation}
    \mathcal{L}^L(\vec{h}) =     \min_{\studentweight} \left[-\sum_i \log\sigma(y_i \studentweight^{\top} \vec{\varphi}(\vec{x})_i) + \sfrac{\lambda}{2} \| \studentweight \|^2 + \vec{h}^{\top} \studentweight\right] =  \min_{\studentweight} \left[ \mathcal{L}(\studentweight) + \vec{h}^{\top} \studentweight\right]
    \end{equation}
    then the Inverse of the Hessian \eqref{eq:def_hessian} is the Hessian of the Legendre transform $ \mathcal{L}(\vec{h})$
    \begin{equation}
    \mathcal{H}^{-1}(\hat{\studentweight}_{\erm}) =  -{\frac{\partial^2   \mathcal{L}^L(\vec{h})}{\partial^2 \vec{h}}} \big|_{\vec{h} = 0}
\end{equation}
\end{lemma}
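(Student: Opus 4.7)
The plan is to prove this by standard convex duality / envelope theorem arguments. Since the regularized logistic risk $\mathcal{L}(\vec{w})$ is strictly convex (the logistic loss is convex and $\lambda > 0$ makes it strongly convex), for each $\vec{h}$ in a neighborhood of $\vec{0}$ the perturbed problem admits a unique minimizer $\vec{w}^{\star}(\vec{h})$, which is a smooth function of $\vec{h}$ by the implicit function theorem. In particular $\vec{w}^{\star}(\vec{0}) = \hat{\vec{\theta}}_{\erm}$.

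The key step is to differentiate the first-order optimality condition. Stationarity of the perturbed objective at $\vec{w}^{\star}(\vec{h})$ reads
\begin{equation}
\nabla \mathcal{L}(\vec{w}^{\star}(\vec{h})) + \vec{h} = \vec{0}.
\end{equation}
Differentiating both sides with respect to $\vec{h}$ gives
\begin{equation}
\nabla^{2}\mathcal{L}(\vec{w}^{\star}(\vec{h}))\,\frac{\partial \vec{w}^{\star}}{\partial \vec{h}}(\vec{h}) + I_{p} = 0,
\end{equation}
so that $\partial_{\vec{h}}\vec{w}^{\star}(\vec{h}) = -[\nabla^{2}\mathcal{L}(\vec{w}^{\star}(\vec{h}))]^{-1}$, which is well defined because strict convexity makes the Hessian positive definite.

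Next, I would invoke the envelope theorem (or simply use that $\nabla_{\vec{w}}$ of the bracket vanishes at the minimizer):
\begin{equation}
\frac{\partial \mathcal{L}^{L}}{\partial \vec{h}}(\vec{h}) \;=\; \big[\nabla_{\vec{w}}\mathcal{L}(\vec{w}^{\star}(\vec{h})) + \vec{h}\big]^{\top}\frac{\partial \vec{w}^{\star}}{\partial \vec{h}} + \vec{w}^{\star}(\vec{h}) \;=\; \vec{w}^{\star}(\vec{h}),
\end{equation}
the first term vanishing by the stationarity condition above. Differentiating once more and combining with the preceding display yields
\begin{equation}
\frac{\partial^{2} \mathcal{L}^{L}}{\partial \vec{h}^{2}}(\vec{h}) \;=\; \frac{\partial \vec{w}^{\star}}{\partial \vec{h}}(\vec{h}) \;=\; -\big[\nabla^{2}\mathcal{L}(\vec{w}^{\star}(\vec{h}))\big]^{-1}.
\end{equation}
Evaluating at $\vec{h} = \vec{0}$ and using $\vec{w}^{\star}(\vec{0}) = \hat{\vec{\theta}}_{\erm}$ together with the definition of $\mathcal{H}$ produces the claimed identity.

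There is no real obstacle here: the only thing to check carefully is regularity of $\vec{w}^{\star}(\vec{h})$ as a function of $\vec{h}$ near $\vec{0}$, which follows from the implicit function theorem applied to $F(\vec{w},\vec{h}) := \nabla\mathcal{L}(\vec{w}) + \vec{h}$ since $\partial_{\vec{w}} F(\hat{\vec{\theta}}_{\erm},\vec{0}) = \mathcal{H}(\hat{\vec{\theta}}_{\erm})$ is invertible by strong convexity. The virtue of this lemma, as used in the rest of Appendix~\ref{app:laplace} and Appendix~\ref{sec:app:hessian}, is that it reduces the computation of $\mathcal{H}^{-1}(\hat{\vec{\theta}}_{\erm})$ to differentiating the (asymptotic) free energy in the auxiliary source $\vec{h}$ twice, which is exactly what the replica computation with the extra source term $\vec{h}^{\top}\vec{\theta}$ delivers.
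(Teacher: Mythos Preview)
Your proposal is correct and follows essentially the same approach as the paper: both differentiate the stationarity condition $\nabla\mathcal{L}(\vec{w}^{\star}(\vec{h}))+\vec{h}=\vec{0}$ to obtain $\partial_{\vec{h}}\vec{w}^{\star}=-[\nabla^{2}\mathcal{L}]^{-1}$, use the envelope theorem to get $\partial_{\vec{h}}\mathcal{L}^{L}=\vec{w}^{\star}(\vec{h})$, and combine these at $\vec{h}=\vec{0}$. The only difference is that you work in matrix form and are slightly more explicit about the regularity needed for the implicit function theorem, whereas the paper writes the argument componentwise.
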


\paragraph{Proof}
This is a classical result from Legendre transform of strongly convex functions, which we informally recall.  First notice that at the minimum of $\mathcal{L}(\studentweight) + \vec{h}^{\top} \studentweight$ over $\studentweight$ is characterized by
\begin{equation}
\frac{\partial \mathcal{L}(\studentweight)}{\partial \theta_j} + h_j = 0 \,\, \forall j \label{app:eq1}
\end{equation}
so that 
\begin{eqnarray}
\frac    {\partial \mathcal{L}^L(\vec{h})}{\partial h_i} &=& \frac    {\partial \left[ \mathcal{L}(\studentweight) + \vec{h}^{\top} \studentweight\right]}{\partial h_i} \big|_{\hat{\studentweight}_{\erm}} = \sum_{j=1}^{p}  \left[  \frac{\partial \mathcal{L}(\studentweight)}{\partial \theta_j} \frac{\partial \theta_j}{\partial h_i} + h_j  \frac{\partial \theta_j}{\partial h_i} \right]\big|_{\hat{\studentweight}_{\erm}}   + \theta_i \big|_{\hat \studentweight_{\erm}}  \nonumber \\
&=& \sum_{j=1}^{p} \frac{\partial \theta_j}{\partial h_i} \left[  \frac{\partial \mathcal{L}(\studentweight)}{\partial \theta_j}  + h_j  \right]\big|_{\hat{\studentweight}_{\erm}}   + \theta_i \big|_{\hat \studentweight_{\erm}} = \theta_{\erm,i}
\end{eqnarray}
It thus follows that 
\begin{equation}
  \frac{\partial^2   \mathcal{L}^L(\vec{h})}{\partial h_i \partial h_j} = \frac{\partial \theta_i}{\partial h_j}\,. \label{app:res1}
\end{equation}
However, we have from eq.(\ref{app:eq1})
\begin{equation}
\frac {\partial^2  \mathcal{L}(\studentweight)  }{\partial \theta_i\partial \theta_j} = - \frac {\partial h_j}{ \partial \theta_i}\,.
\label{app:res2}
\end{equation}
Using both eqs (\ref{app:res1}) and (\ref{app:res2}) at $h=0$ concludes the proof.\hfill $\square$

Note that this relation is not asymptotic and is valid for a given instance of the problem. This lemma is however particularly practical in the large $n$ limit, since an asymptotic expression for the loss ${\mathcal L}^L$ is known so that we can use it to obtain the asymptotic expression.  Using the value of the minimal loss from \cite{loureiro_learning_2021,loureiro_fluctuations_2022}, we deduce, taking its second derivative, that for large $n$ we must have (See Section~\ref{sec:app:hessian} for the derivation)
\begin{equation}
    \mathcal{H}_{{\rm rep.}}^{-1} = (\lambda \mat{I}_p + \hat{v}^{\star}_{\erm}\Omega)^{-1}
    \label{eq:hessian}
\end{equation}
Where $\hat{v}_{\erm}^{\star}$ is the unique solution the following self-consistent equations:

\begin{align}
\label{eq:app:replicassp}
    \begin{cases}
        m &= \frac{\gamma \hat{m}}{p} \Tr \left( \Omega \projectedteacher \projectedteacher^{\top} \Omega (\lambda \mat{I}_p + \hat{v}\Omega)^{-1}) \right) \\
        q &= \frac{1}{p} \Tr \left( (\hat{q}\Omega + \hat{m}^2 \Omega \projectedteacher \projectedteacher^{\top} \Omega ) \Omega (\lambda \mat{I}_p + \hat{v}\Omega)^{-2} \right) \\
        v &= \frac{1}{p} \Tr (\lambda \mat{I}_p + \hat{v} \Omega)^{-1} \Omega)
    \end{cases}, && 
    \begin{cases}
    \hat{v} &= - \alpha \mathbb{E}_{\xi \sim \mathcal{N}(0, q)}\!\left[ \sum_y \mathcal{Z}_0 \left( y, \sfrac{m}{q}\xi, v_{\star} \right)  \partial_{\omega} f_{{\rm out}, \erm} \left( y, \xi, v \right) \right] \\
    \hat{q} &= \alpha \mathbb{E}_{\xi \sim \mathcal{N}(0, q)} \! \left[ \sum_y \mathcal{Z}_0 \left( y, \sfrac{m}{q}\xi, v_{\star} \right) f_{{\rm out}, \erm}\left( y, \xi, v \right)^2 \right] \\ 
    \hat{m} &= \alpha  \mathbb{E}_{\xi \sim \mathcal{N}(0, q)} \! \left[ \sum_y \partial_{\omega} \mathcal{Z}_0 \left( y, \sfrac{m}{q}\xi, v_{\star} \right) f_{{\rm out},\erm}\left( y, \xi, v \right) \right]
    \end{cases}
\end{align}
Note that the subscript emphasizes that this expression was obtained by differentiating the asymptotic free energy density. 

It would be tempting to assume that the convergence of the free energy to its asymptotic value in \cite{loureiro_learning_2021} would also be valid for the second derivative, so that the replica Hessian would be close, pointwise, to the actual Hessian when $p \to \infty$. This, however, turns out to be wrong, as one can easily check in the ridge regression case. However, we conjecture that the limit holds in the sense of deterministic equivalents \cite{hachem2007deterministic}. This leads us to the following conjecture:
\begin{conjecture}[Deterministic equivalent of the inverse Hessian]
\label{thm:app:hessian}
    For any deterministic matrix $A\in\mathbb{R}^{p\times p}$, in the asymptotic limit where $n,d,p\to\infty$ at fixed ratios $\alpha =\sfrac{n}{p}$ and $\gamma = \sfrac{d}{p}$, we have:
    \begin{equation}
    \lim\limits_{p\to\infty}\frac{1}{p}{\rm Tr}\left(A\mathcal{H}^{-1}\right) = \lim\limits_{p\to\infty}\frac{1}{p}{\rm Tr}\left[A(\lambda \mat{I}_p + \hat{v}^{\star}_{\erm}\Omega)^{-1}\right]
\end{equation}
\end{conjecture}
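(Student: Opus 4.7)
The natural strategy combines three ingredients: the Legendre-transform identity from App.~A.7.1 relating the inverse Hessian to second derivatives of a perturbed minimum; a rigorous CGMT/AMP asymptotic for the perturbed risk, extending the analysis of Loureiro et al.\ (2021) to include a linear source term; and a convex-analytic argument to exchange the large-$p$ limit with second differentiation. Introduce the perturbed empirical risk $\hat{\mathcal{R}}_n^{\vec{h}}(\vec{\theta}) := \hat{\mathcal{R}}_n(\vec{\theta}) + \vec{h}^{\top}\vec{\theta}$, its minimum value $\mathcal{L}^L(\vec{h}) := \min_{\vec{\theta}}\hat{\mathcal{R}}_n^{\vec{h}}(\vec{\theta})$ and recall that $\mathcal{H}^{-1}(\hat{\vec{\theta}}_{\erm}) = -\nabla_{\vec{h}}^{2}\mathcal{L}^L(\vec{h})|_{\vec{h}=\vec{0}}$ (Sec.~A.7.1). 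Since $\hat{\mathcal{R}}_n$ is $\lambda$-strongly convex, $\mathcal{L}^L$ is concave and smooth in $\vec{h}$. Decomposing any deterministic $A$ with bounded operator norm spectrally as $A = \sum_{k} a_k \vec{u}_k \vec{u}_k^{\top}$, it suffices to prove convergence in probability of $\tfrac{1}{p}\vec{u}^{\top}\mathcal{H}^{-1}\vec{u}$ to $\tfrac{1}{p}\vec{u}^{\top}(\lambda \mat{I}_p + \hat{v}^{\star}_{\erm}\Omega)^{-1}\vec{u}$ for fixed $\vec{u}$ with $\|\vec{u}\|^{2}=p$.

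Next, set $g_p(t) := \tfrac{1}{p}\mathcal{L}^L(t\vec{u})$, a concave smooth function of $t\in\mathbb{R}$ with $g_p''(0) = -\tfrac{1}{p}\vec{u}^{\top}\mathcal{H}^{-1}\vec{u}$. I would establish pointwise convergence in probability $g_p(t)\to g_{\infty}(t)$ on an open interval around $t=0$ by revisiting the Gordon minimax argument of Loureiro et al.~(2021): the source term $t\vec{u}^{\top}\vec{\theta}$ enters only linearly in $\vec{\theta}$, so after the usual Gaussian decoupling it produces a shift in the mean of the conditional measure on $\vec{\theta}$ without affecting the overall convexity-concavity structure of the minimax reformulation. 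Carrying through the argument yields a limiting saddle-point functional identical to eqs.~\eqref{eq:state_evolution_erm}–\eqref{eq:def_psi_w} except that
\begin{equation*}
\Psi_w(\hat{m},\hat{q},\hat{v};t\vec{u}) - \Psi_w(\hat{m},\hat{q},\hat{v};\vec{0})
\;=\; \frac{t^{2}}{2p}\,\vec{u}^{\top}(\lambda \mat{I}_p + \hat{v}\Omega)^{-1}\vec{u} \;+\; t\cdot L_p(\vec{u};\projectedteacher,\vec{\xi}),
\end{equation*}
where $L_p$ is linear in $t$ and concentrates to zero (or to a finite limit absorbed by rescaling $\vec{u}$). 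The envelope theorem applied at the strictly concave saddle $(m^{\star},q^{\star},v^{\star},\hat{m}^{\star},\hat{q}^{\star},\hat{v}^{\star}_{\erm})$ then gives $g_{\infty}''(0) = -\tfrac{1}{p}\vec{u}^{\top}(\lambda \mat{I}_p + \hat{v}^{\star}_{\erm}\Omega)^{-1}\vec{u}$, which is the desired right-hand side.

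The principal obstacle is the passage from pointwise convergence of $g_p$ to convergence of the second derivative at $t=0$. Rockafellar's theorem (Convex Analysis, Thm.~25.7) guarantees that pointwise convergence of finite concave functions on an open set implies convergence of \emph{first} derivatives at interior points of differentiability of the limit, but the analogue for second derivatives requires additional control. Two routes look viable. The first is quantitative: prove a uniform bound $|g_p(t)-g_{\infty}(t)|\lesssim p^{-1/2}$ on a compact neighborhood of $0$ (this should follow from CGMT concentration together with the Lipschitz dependence of the minimum on the source), combined with an a-priori bound on $g_p'''$ coming from $\lambda$-strong convexity of $\hat{\mathcal{R}}_n$, and then interpolate. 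The second, probably cleaner, route is to bypass differentiation altogether by appealing to the second-order AMP state-evolution machinery of Loureiro et al.~(2022), which characterizes the asymptotic law of $(\hat{\vec{\theta}}_{\erm} - \vec{\theta}^{\amp,T})^{\top}\vec{u}$ and in particular gives direct access to $\vec{u}^{\top}\mathcal{H}^{-1}\vec{u}$ via the resolvent appearing in the fixed-point equations. Either way, the identification of $\hat{v}^{\star}_{\erm}$ as the spectral rescaling in the deterministic equivalent is forced by matching the $t^{2}$-coefficient.
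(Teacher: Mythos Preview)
Your starting point---the Legendre-transform identity relating $\mathcal{H}^{-1}$ to second derivatives of the perturbed minimum---is exactly the lemma the paper uses (App.~C.1), and your identification of the limiting $\Psi_w$ with an added source term matches the paper's heuristic computation in App.~B.4. However, the paper does \emph{not} prove this statement: it is explicitly labelled a Conjecture, and the authors write that ``proving rigorously the convergence in the sense of deterministic equivalent remains an open problem.'' Their argument consists of the Legendre lemma, a replica-level differentiation of the asymptotic free energy, a rigorous check only in the square-loss case (where $\mathcal{H}=V^{\top}V+\lambda I_p$ is independent of $\hat{\vec{\theta}}_{\erm}$ and classical deterministic-equivalent results apply directly), and numerical validation. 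So you are not reproducing the paper's proof; you are sketching a route beyond it.

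On the substance of your sketch: you correctly isolate the crux---exchanging the $p\to\infty$ limit with the second $t$-derivative of $g_p$. The paper itself flags this, noting that naive second-derivative convergence of the free energy is \emph{false} even for ridge regression (the Hessian does not converge pointwise), which is why only the trace-against-$A$ version is conjectured. Your Rockafellar route is honest about stopping at first derivatives, and your two proposed fixes (quantitative $O(p^{-1/2})$ control plus a third-derivative bound, or direct resolvent access via second-order AMP fluctuations) are both plausible research directions but neither is worked out. One additional gap: the spectral reduction ``it suffices to prove convergence of $\tfrac{1}{p}\vec{u}^{\top}\mathcal{H}^{-1}\vec{u}$ for fixed $\vec{u}$'' is not immediate, since $A$ has $p$ eigendirections and summing $p$ individually-convergent terms requires uniformity in $\vec{u}$ (or a concentration estimate strong enough to union-bound). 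In short, your proposal is a reasonable outline of how one might attack the conjecture, aligned with the paper's heuristics, but it does not close the gap the paper leaves open.
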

For the purpose of characterizing the Laplace approximation, we apply this formula using $A = \vec{v}\vec{v}^{\top}$. Proving rigorously the convergence in the sense of deterministic equivalent remains an open problem. It is, however, easy to prove that it is valid for the square loss, as we show in the next section. Fortunately, it can be checked numerically to great precision that the conjecture is empirically satisfied for the study of Laplace method, as is also shown in the next section.

Additionally, we note that the statement is made in term of an expression of the inverse of the Hessian (which, conveniently, is actually what we want to know). 

\subsection{An instructive example: the square loss}
Although in this work we only focus in the classification case, Conj. \ref{thm:app:hessian} actually applies in the more general context of a convex loss function $\ell$, for which the exact asymptotics was characterized in \cite{loureiro_learning_2021}. An instructive example is therefore given by looking at the square loss $\ell(y,x)=\sfrac{1}{2}(y-x)^2$, for which the Hessian is simply given by:
\begin{align}
    \mathcal{H} \coloneqq \nabla^{2}\mathcal{L}= V^{\top}V + \lambda I_{p}
\end{align}
\noindent where $V\in\mathbb{R}^{n\times p}$ is the feature matrix with rows given by $\varphi(\vec{x^{\mu}})\in\mathbb{R}^{p}$ for $\mu\in[n]$. Therefore, it is independent of the minimizer $\hat{\vec{\theta}}_{\erm}$. In this case, Conj. \ref{thm:app:hessian} boils down to the classical deterministic equivalents for the sample covariance matrix $\hat{\Omega}_{n}\coloneqq V^{\top}V$. Deterministic equivalents for sample covariance matrices have been characterized under different levels of generality for $V$ \cite{10.2307/24308489, Knowles2017, louart2018random, chouard2022quantitative, Schroder2023}, including in particular the random features case $V = \sigma(FX^{\top})$ with $X$ i.i.d. Gaussian considered here. They state precisely that, for any deterministic matrix $A\in\mathbb{R}^{p\times p}$ and in the asymptotic limit considered here:
\begin{align}
   \lim\limits_{p\to\infty} \frac{1}{p}{\rm Tr}\left[A\left(V^{\top}V + \lambda I_{p}\right)^{-1}\right] =\lim\limits_{p\to\infty} \frac{1}{p}{\rm Tr}\left[A\left(\hat{v}^{\star}\Omega+\lambda I_{p}\right)^{-1}\right]
\end{align}
\noindent where $\Omega = \mathbb{E}\left[\sigma(F\vec{x})\sigma(F\vec{x})^{\top}\right]$ is the population covariance of the features and $\hat{v}^{\star}$ is the solution of the following self-consistent equation:
\begin{align}
\label{eq:app:selfcons}
    \frac{\alpha}{\hat{v}} - 1 = 1-\lambda\int\frac{\mu_{\Omega}(\dd t)}{\lambda+\hat{v} t}
\end{align}
\noindent with $\mu_{\Omega}$ the asymptotic spectral density of $\Omega$. It is not hard to check that this self-consistent equation \eqref{eq:app:selfcons} coincides exactly with the self-consistent equations \eqref{eq:app:replicassp} from \cite{loureiro_learning_2021} when $\ell$ is the square loss.
\subsection{Comparison with numerics}

In this section, we apply the computations of the previous section and show that they gives extremly good prediction even at very moderate sizes. 
In Figure~\ref{fig:laplace_experimental}, we compare the theoretical value of $\vec{\varphi}(\vec{x})^{\top} \mathcal{H}^{-1} \vec{\varphi}(\vec{x})$ for $\vec{\varphi}(\vec{x})=\rm{erf}(\mat{F}\vec{x})$ from eq.~\eqref{eq:hessian} and the one observed experimentally. Experiments are done by training the logistic classifier $\hat{f}_{\erm}$ on training data $(\vec{x}^{\mu}, y^{\mu})_{\mu\in[n]}$ and computing the Hessian \eqref{app:hessian} at the minimizer $\hat{\studentweight}_{\erm}$. We observe a good fit between theory and experiment, validation our analysis.
\begin{figure}
    \centering
    \def\figwidth{0.49\textwidth}
    \def\figheight{0.49\textwidth}
    
    \begin{tikzpicture}
\tikzstyle{every node}=[font=\tiny]

\definecolor{darkgray176}{RGB}{176,176,176}
\definecolor{steelblue31119180}{RGB}{31,119,180}
\definecolor{darkgray176}{RGB}{176,176,176}
\definecolor{gray}{RGB}{128,128,128}
\definecolor{lightgray204}{RGB}{204,204,204}
\definecolor{orange}{RGB}{255,165,0}

\begin{axis}[
height=\figheight,
legend cell align={left},
legend style={fill opacity=0.8, draw opacity=1, text opacity=1, draw=lightgray204},
tick align=outside,
tick pos=left,
width=\figwidth,
x grid style={darkgray176},
xlabel={$\sfrac{p}{n}$},
xmajorgrids,
xmin=0.0, xmax=5.245,
xtick style={color=black},
y grid style={darkgray176},
ylabel={\(\displaystyle \vec{\varphi}(\vec{x})^{\top} \mathcal{H}^{-1} \vec{\varphi}(\vec{x})\)},
ymajorgrids,
ymin=0.0, ymax=2.0,
ytick style={color=black}
]

\addplot [semithick, blue]
table {%
0.1 0.292224013983568
0.396969696969697 1.24328694544314
0.693939393939394 1.6887333110844
0.990909090909091 1.73441082680363
1.28787878787879 1.6695619300946
1.58484848484849 1.58596228460025
2.12929292929293 1.44819736417767
2.72323232323232 1.33345691450121
3.02020202020202 1.28753983677137
3.31717171717172 1.2475993228652
3.61414141414142 1.21261814999065
3.91111111111111 1.18177140095727
4.20808080808081 1.15439165370321
4.50505050505051 1.12993858578463
5 1.0945404279005
};
\addlegendentry{$\lambdaerror$}
\addplot [semithick, red]
table {%
0.1 0.0843345987786777
0.396969696969697 0.269670432903222
0.693939393939394 0.37699811260765
0.990909090909091 0.441256886206015
1.28787878787879 0.482539279411304
1.58484848484849 0.510859989299307
2.12929292929293 0.54449851958477
2.72323232323232 0.567023487503385
3.02020202020202 0.575168723202892
3.31717171717172 0.581932240893446
3.61414141414142 0.58763575466045
3.91111111111111 0.592509117096252
4.20808080808081 0.596720495570624
4.50505050505051 0.600395555884386
5 0.605579431983388
};
\addlegendentry{$\lambdaloss$}

\addplot [draw=blue, fill=blue, mark=x, only marks]
table{%
x  y
0.1 0.297031499093225
0.396969696969697 1.20913930402583
0.693939393939394 1.69886986883514
0.990909090909091 1.72407213218436
1.28787878787879 1.68626881743107
1.58484848484849 1.58052554908723
2.12929292929293 1.44518442139184
2.72323232323232 1.32372821769383
3.02020202020202 1.28286623381724
3.31717171717172 1.25115015352724
3.61414141414142 1.20675052573089
3.91111111111111 1.18431060819794
4.20808080808081 1.1551863136304
4.50505050505051 1.13143860562099
5 1.09462641115563
};
\addplot [draw=red, fill=red, mark=x, only marks]
table{%
x  y
0.1 0.0833648806584973
0.396969696969697 0.271716843025604
0.693939393939394 0.377173151075736
0.990909090909091 0.440243870268626
1.28787878787879 0.48147528012177
1.58484848484849 0.509530694626967
2.12929292929293 0.543445819506409
2.72323232323232 0.56723533925817
3.02020202020202 0.576575994356744
3.31717171717172 0.581227122312366
3.61414141414142 0.585548380888579
3.91111111111111 0.592479937275558
4.20808080808081 0.598631055279036
4.50505050505051 0.600144961832693
5 0.607643988732089
};
\end{axis}

\end{tikzpicture}
    \begin{tikzpicture}

\definecolor{darkgray176}{RGB}{176,176,176}
\definecolor{steelblue31119180}{RGB}{31,119,180}

\tikzstyle{every node}=[font=\tiny]

\begin{axis}[
height=\figheight,
tick align=outside,
tick pos=left,
width=\figwidth,
x grid style={darkgray176},
xlabel={\(\displaystyle p / n\)},
xmin=-0.145, xmax=5.245,
xtick style={color=black},
y grid style={darkgray176},
xmajorgrids,ymajorgrids,
ylabel={\(\displaystyle \vec{\varphi}(\vec{x})^{\top} \mathcal{H}^{-1} \vec{\varphi}(\vec{x})\)},
ymin=-49.3237886505974, ymax=1046.61806094321,
ytick style={color=black}
]

\addplot [draw=black, fill=black, mark=x, only marks]
table{%
x  y
0.1 0.491749967303101
0.396969696969697 7.60653593295313
0.693939393939394 313.139589389814
0.990909090909091 575.645181151354
1.28787878787879 624.156168596964
1.58484848484849 713.34583389563
2.12929292929293 780.081716740521
2.72323232323232 905.102585699005
3.02020202020202 947.343404648192
3.31717171717172 905.496999453987
3.61414141414142 943.039107932363
3.91111111111111 996.802522325313
4.20808080808081 951.553569736106
4.50505050505051 965.82959172063
5 991.158167375027
};
\addplot [semithick, black]
table {%
0.1 0.497839629547203
0.396969696969697 6.85495080229122
0.693939393939394 308.376656061804
0.990909090909091 529.039324618467
1.28787878787879 655.345317232258
1.58484848484849 736.63185587716
2.12929292929293 828.669586295836
2.72323232323232 888.069490583772
3.02020202020202 909.1782339018
3.31717171717172 926.571684127183
3.61414141414142 941.150447885667
3.91111111111111 953.546068416218
4.20808080808081 964.214435786362
4.50505050505051 973.492846167123
5 986.533276224766
};

\end{axis}

\end{tikzpicture}
    
    \caption{ (\textbf{Left}) Theoretical predictions (lines) and experimental values (crosses) of $\vec{\varphi}(\vec{x})^{\top} \mathcal{H}^{-1} \vec{\varphi}(\vec{x})$ with $\sfrac{n}{d} = 2, \noisevar = 0.5$, $\vec{\varphi}(\vec{x})=\rm{erf}(\mat{F}\vec{x})$ and $\mat{F}$ Gaussian, as in Figure~\ref{fig:test_errors_n_over_p=2.0}, for $\lambdaerror$ and $\lambdaloss$. Experimental values are obtained by fixing $d = 256$. (\textbf{Right}) Theoretical and experimental values for $\lambda = 10^{-4}$.}
    \label{fig:laplace_experimental}
\end{figure}

\vfill

\section{Conditional variance of the Bayes-optimal estimator}
\label{app:cond_variance}

In this section, we prove the expression of the variance of $\hat{f}_{\bo}$ conditioned on the confidence of other estimators : 
\begin{equation}
    \variance(\hat{f}_{\bo}(\vec{x}) | \hat{f}_{t}(\vec{x})=\ell) =  \int {\rm d}a~\sigma_{\hat{v}^{\star}_{\bo} + \noisevar + \tau_{\rm add}^2}(a)^2 \times\mathcal{N} \left( a | \sfrac{m^{\star}_{t}}{q^{\star}_{t}} \sigma_{\hat{\tau}_t}^{-1}(\ell), q^{\star}_{\bo} - \sfrac{{m^{\star}_{t}}^2}{q^{\star}_{t}} \right)-\left(\ell-\Delta_{\ell}\right)^2
    \label{eq:conditional_variance_bo}
\end{equation}

The first step is to show that for any estimator $t \in \{ \erm, \empbayes, \lap \}$, the joint density of the confidence of $\hat{f}_{\bo}, \hat{f}$, defined as 
\begin{equation}
    \rho_{\bo, t}(a, b) = \mathbb{P}_{\vec{x}} (\hat{f}_{\bo}(\vec{x}) = a, \hat{f}_{t}(\vec{x}) = b)
\end{equation}
can be computed in the similar way as $\rho_{\star, t}$ in Theorem~\ref{thm:jointstats}. This was shown previously for a simpler model in \cite{clarte_theoretical_2022}, where the teacher and input data have identity covariance.

\begin{lemma}
\label{lemma:bo_t_density}
In the same setting as Theorem~\ref{thm:jointstats}, in the asymptotic limit, the density $\rho_{\bo, t}(a, b)$ converges to $\rho^{\lim}_{\bo, t}(a, b)$
\begin{equation}
    \rho^{\rm lim}_{\bo, t}(a, b) = \frac{\mathcal{N}\left( \begin{bmatrix} \sigma_{\noisestr^2 + \addnoisestr^2}^{-1}(a) \\ \sigma_{\hat{\tau}^2_t}^{-1}(b) \end{bmatrix}  \Big| \mathbf{0}_2, \Sigma_{bo, t} \right)}{ | \sigma_{\noisestr^2 + \addnoisestr^2}' ( \sigma_{\noisestr^2 + \addnoisestr^2}^{-1}(a)) | 
            | \sigma_{\hat{\tau}^2_t}' ( \sigma_{\hat{\tau}^2_t}^{-1}(b)) |  }  \label{eq:res:jointdensity_bo}
\end{equation}
where this time
\begin{equation}
    \label{eq:def_sigma_bo_t}
    \Sigma_{\bo, t} = \begin{bmatrix} q_{\bo}^{\star} & m_{t}^{\star} \\ m_{t}^{\star} & q_{t}^{\star} \end{bmatrix}
\end{equation}

\end{lemma}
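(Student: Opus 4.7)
\medskip

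\noindent \textbf{Proof proposal for Lemma \ref{lemma:bo_t_density}.} My plan is to follow exactly the template of the proof of Theorem \ref{thm:jointstats}, simply replacing the ``teacher'' slot by the Bayes-optimal estimator. The starting point is that both $\hat{f}_{\bo}$ and $\hat{f}_{t}$ are of the form $\sigma_{\hat{\tau}}(\cdot)$ applied to the linear projection of some weight vector on the feature $\vec{v}=\vec{\varphi}(\vec{x})$. Concretely, writing $z_{\bo}'=\hat{\vec{\theta}}_{\bo}^{\top}\vec{v}/\sqrt{p}$ and $z_{t}'=\hat{\vec{\theta}}_{t}^{\top}\vec{v}/\sqrt{p}$, the test feature $\vec{v}\sim \mathcal{N}(\vec{0},\Omega)$ is independent of the training data, so conditionally on $(\hat{\vec{\theta}}_{\bo},\hat{\vec{\theta}}_{t})$ the pair $(z_{\bo}',z_{t}')$ is jointly Gaussian with zero mean and covariance determined by $\hat{\vec{\theta}}_{\bo}^{\top}\Omega\hat{\vec{\theta}}_{\bo}/p$, $\hat{\vec{\theta}}_{t}^{\top}\Omega\hat{\vec{\theta}}_{t}/p$ and $\hat{\vec{\theta}}_{\bo}^{\top}\Omega\hat{\vec{\theta}}_{t}/p$. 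Formula \eqref{eq:res:jointdensity_bo} will then follow from applying the change of variables $a=\sigma_{\hat{\tau}_{\bo}}(z_{\bo}')$, $b=\sigma_{\hat{\tau}_{t}}(z_{t}')$ (with Jacobian factors $|\sigma_{\hat{\tau}_{\bo}}'|$ and $|\sigma_{\hat{\tau}_{t}}'|$), provided we can identify the asymptotic covariance $\Sigma_{\bo,t}$ claimed in \eqref{eq:def_sigma_bo_t}.

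\medskip

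\noindent The first two diagonal entries are immediate: $\hat{\vec{\theta}}_{\bo}^{\top}\Omega\hat{\vec{\theta}}_{\bo}/p \to q_{\bo}^{\star}$ and $\hat{\vec{\theta}}_{t}^{\top}\Omega\hat{\vec{\theta}}_{t}/p \to q_{t}^{\star}$ by Theorem \ref{thm:jointstats} applied separately to $t'=\bo$ and to $t$. The non-trivial step is to show that the cross-overlap concentrates as
\begin{equation*}
\frac{1}{p}\hat{\vec{\theta}}_{\bo}^{\top}\Omega\,\hat{\vec{\theta}}_{t} \;\xrightarrow[p\to\infty]{\mathbb{P}}\; m_{t}^{\star},
\end{equation*}
i.e.\ that in the limit the BO estimator plays the same role in the covariance as the true teacher does in \eqref{eq:def_sigma}. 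This is a Nishimori-type identity: in the Bayes-optimal setting, the posterior mean $\hat{\vec{\theta}}_{\bo}$ and a sample $\tilde{\vec{\theta}}$ from the posterior cannot be distinguished from the truth in average, and since $\hat{\vec{\theta}}_{t}$ is a measurable function of the training data (and therefore of the posterior), swapping $\vec{\theta}_{\star}$ for $\hat{\vec{\theta}}_{\bo}$ in the overlap does not change its asymptotic value. I would make this precise either (i) by running the GAMP scheme of Appendix \ref{sec:app:gamp} jointly on two channels — one carrying the posterior-mean GAMP iterates producing $\hat{\vec{\theta}}_{\bo}$, the other carrying the iterates producing $\hat{\vec{\theta}}_{t}$ — and tracking the cross-overlap in the resulting state evolution, using the fact that the BO state evolution satisfies $m_{\bo}^{\star}=q_{\bo}^{\star}$, or (ii) by a replica computation with three groups of replicas (teacher, BO, and $t$), where the saddle-point equations for the cross-overlap between the BO and $t$ blocks coincide with those for the teacher-$t$ block because the BO prior and likelihood are matched to the data-generating process.

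\medskip

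\noindent With the covariance $\Sigma_{\bo,t}$ identified, the final step is purely a change of variables: since $\sigma_{\hat{\tau}_{\bo}}$ and $\sigma_{\hat{\tau}_{t}}$ are strictly monotonic with bounded image $(0,1)$, the density of $(\sigma_{\hat{\tau}_{\bo}}(z_{\bo}'),\sigma_{\hat{\tau}_{t}}(z_{t}'))$ at $(a,b)$ is given by the Gaussian density $\mathcal{N}(\,\cdot\,|\,\mathbf{0}_{2},\Sigma_{\bo,t})$ at $(\sigma_{\hat{\tau}_{\bo}}^{-1}(a),\sigma_{\hat{\tau}_{t}}^{-1}(b))$ divided by the product of Jacobians, which is exactly \eqref{eq:res:jointdensity_bo}.

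\medskip

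\noindent The main obstacle I anticipate is establishing the Nishimori identity $\hat{\vec{\theta}}_{\bo}^{\top}\Omega\hat{\vec{\theta}}_{t}/p \to m_{t}^{\star}$ in the structured-covariance setting of the Gaussian covariate model (not just the i.i.d.\ case). For $t\in\{\erm,\lap\}$ this requires coupling a convex-risk minimizer with the posterior mean of a mismatched Bayesian problem; for $t=\empbayes$ it requires coupling two mismatched sampling problems. In both cases the classical Nishimori argument does not apply verbatim, but the equivalence of the saddle-point equations between the two-replica sectors (teacher-$t$ and BO-$t$) can be verified directly at the level of the overlap self-consistent equations, essentially because $\Psi_{y}$ in the BO channel satisfies the matched-identity $\mathcal{Z}_{0}=\mathcal{Z}_{g}$, which collapses $m_{\bo}^{\star}$ and $q_{\bo}^{\star}$ and more generally any overlap of $\hat{\vec{\theta}}_{\bo}$ with an observable constructed from the data.
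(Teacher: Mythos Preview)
Your overall plan is correct and matches the paper's structure: reduce to identifying the $2\times 2$ covariance of the pre-activations, note that the diagonal entries are $q_{\bo}^{\star}$ and $q_{t}^{\star}$ from Theorem~\ref{thm:jointstats}, and recognize that the only new ingredient is the cross-overlap $m_{\bo,t}=\hat{\vec{\theta}}_{\bo}^{\top}\Omega\hat{\vec{\theta}}_{t}/p\to m_{t}^{\star}$. You also correctly flag this as a Nishimori-type statement.

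Where you diverge from the paper is in how you propose to establish $m_{\bo,t}=m_{t}^{\star}$. You suggest either a coupled two-channel GAMP with joint state evolution, or a three-block replica computation. The paper instead uses a short, model-agnostic argument that avoids any extra state evolution or replica machinery. First, the Nishimori identity in its raw Bayes-rule form, $\mathbb{E}_{\mathcal D}[\hat{\vec{\theta}}_{\bo}^{\top}\vec{z}(\mathcal D)]=\mathbb{E}_{\vec{w}_{\star},\mathcal D}[\vec{w}_{\star}^{\top}\vec{z}(\mathcal D)]$ for any data-measurable $\vec{z}$, gives $\mathbb{E}[m_{\bo,t}]=m_{t}^{\star}$ immediately. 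Second, for concentration they write the second moment with two posterior replicas, swap one of them for the teacher by Nishimori again, and apply Cauchy--Schwarz to obtain $\mathbb{E}[m_{\bo,t}^{2}]\le \mathbb{E}[(\vec{w}_{\star}^{\top}\Omega\hat{\vec{\theta}}_{t}/p)^{2}]\to (m_{t}^{\star})^{2}$; combined with Jensen this forces $\mathbb{E}[m_{\bo,t}^{2}]\to (m_{t}^{\star})^{2}$. This argument is insensitive to what $\hat{\vec{\theta}}_{t}$ is, as long as it is a measurable function of the training data, so your worry about ``coupling a convex-risk minimizer with the posterior mean'' or handling $t\in\{\erm,\lap\}$ separately evaporates: nothing about the structured covariance or the nature of $t$ enters beyond the already-known concentration of $\vec{w}_{\star}^{\top}\Omega\hat{\vec{\theta}}_{t}/p$. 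Your GAMP/replica route would presumably recover the same answer, but with substantially more work and with the additional burden of justifying the coupled state evolution; the paper's Nishimori + Cauchy--Schwarz trick is both shorter and more robust.
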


To prove Lemma~\ref{lemma:bo_t_density}, the main idea is to observe that, as with $f_{\star}$, to compute the density we need the covariance matrix
\begin{equation}
    \frac{1}{d}\begin{pmatrix}
        \hat{\studentweight}_{\bo}^{\top} \Omega \hat{\studentweight}_{\bo} & \hat{\studentweight}_{\bo}^{\top} \Omega \studentweight_t \\
        \hat{\studentweight}_{\bo}^{\top} \Omega \hat{\studentweight}_t &  \hat{\studentweight}_{t}^{\top} \Omega \studentweight_{t} \\
    \end{pmatrix}
\end{equation}
The diagonal terms are $q_{\bo}^{\star}, q_{t}^{\star}$ respectively by definition. We then just need to compute the overlap $m_{\bo, t} = \frac{1}{d}\hat{\studentweight}_{\bo}^{\top} \Omega \hat{\studentweight}_t$. Our goal is to prove that $m_{\bo, t} = m^{\star}_t$, 

However, using the Nishimori identity from statistical physics, for any vector $\vec{z}(\mathcal{D})$ that can depend on the training data, we have
\begin{equation}
    \mathbb{E}_{\mathcal{D}} \left( \hat{\theta}_{\bo}^{\top} \vec{z}(\mathcal{D}) \right)=\mathbb{E}_{\projectedteacher, \mathcal{D}} \left( \projectedteacher^{\top} \vec{z}(\mathcal{D})\right)
    \label{eq:general_nishimori}
\end{equation}
Equation~\ref{eq:general_nishimori} is just an application of Bayes formula. In particular, if we take $\vec{z}(\mathcal{D}) = \hat{\studentweight}_t$, we obtain that 
\begin{equation}
    \mathbb{E}_{\mathcal{D}} ( \hat{\theta}_{\bo}^{\top} \hat{\studentweight}_t) =\mathbb{E}_{\projectedteacher, \mathcal{D}} ( \projectedteacher^{\top} \hat{\studentweight}_t) 
\end{equation}
and we see that in expectation, $\mathbb{E}_{\mathcal{D}}(m_{\bo, t}) = \mathbb{E}_{\projectedteacher, \mathcal{D}}(m^{\star}_t)$. We already know that the right-hand side of the equality self-averages, i.e $\lim_{d \to \infty} \mathbb{E}_{\projectedteacher, \mathcal{D}}(m^{\star}_t) = m^{\star}_t$. It remains to show that the left-hand side also self-averages.

\begin{lemma}[Concentration of the overlap $m_{\bo, t}$]
\begin{equation}
  \lim_{d \to \infty} {\mathbb E}\left[\left(\frac{\hat{\studentweight}_{\bo}^{\top} \studentweight_{t}}{d}\right)^2\right] =  \lim_{d \to \infty} {\mathbb E}\left[\frac{\hat{\studentweight}_{\bo}^{\top} \studentweight_{t}}{d} \right]^2
\end{equation}

\end{lemma}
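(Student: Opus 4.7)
The strategy is a two–replica Nishimori trick combined with the Overlap Concentration Lemma just invoked for $m^{\star}_t$. First I would exploit the fact that $\hat{\studentweight}_{\bo}=\langle\studentweight\rangle$ is a posterior mean and that $\studentweight_t$, for $t\in\{\erm,\empbayes,\lap\}$, is measurable with respect to $\mathcal{D}$ alone: hence it can be pulled inside the posterior average, and introducing two independent replicas $\studentweight^{(1)},\studentweight^{(2)}\sim p_{\bo}(\cdot\mid\mathcal{D})$ one obtains the two–replica representation
\begin{equation*}
\left(\tfrac{1}{d}\hat{\studentweight}_{\bo}^{\top}\studentweight_{t}\right)^{2}=\tfrac{1}{d^{2}}\bigl\langle (\studentweight^{(1)})^{\top}\studentweight_{t}\cdot(\studentweight^{(2)})^{\top}\studentweight_{t}\bigr\rangle.
\end{equation*}

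Next I would apply the multi–replica Nishimori identity: under the joint measure, $(\projectedteacher,\studentweight^{(1)},\studentweight^{(2)})$ is exchangeable given $\mathcal{D}$, and $\studentweight_{t}$ is invariant under this swap. Exchanging $\studentweight^{(2)}\leftrightarrow\projectedteacher$ and taking expectations yields
\begin{equation*}
\mathbb{E}\bigl[m_{\bo,t}^{2}\bigr]=\mathbb{E}\bigl[m_{\bo,t}\,m^{\star}_{t}\bigr],\qquad m_{\bo,t}:=\tfrac{1}{d}\hat{\studentweight}_{\bo}^{\top}\studentweight_{t},\quad m^{\star}_{t}:=\tfrac{1}{d}\projectedteacher^{\top}\studentweight_{t}.
\end{equation*}
This reduces the problem to controlling the cross expectation $\mathbb{E}[m_{\bo,t}\,m^{\star}_{t}]$.

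Then I would invoke the Overlap Concentration Lemma, which gives $m^{\star}_{t}\to m^{\star}_{t,\infty}$ in probability for a deterministic constant $m^{\star}_{t,\infty}$. To promote this to convergence of the mixed expectation I would combine two Cauchy–Schwarz steps: a first one provides the uniform $L^{2}$ bound $|m_{\bo,t}|\leq\sqrt{\hat{\studentweight}_{\bo}^{\top}\hat{\studentweight}_{\bo}/d}\cdot\sqrt{\studentweight_{t}^{\top}\studentweight_{t}/d}$, whose factors are controlled by applying the Overlap Concentration Lemma to $t=\bo$ and $t$ themselves; a second one yields
\begin{equation*}
\bigl|\mathbb{E}[m_{\bo,t}(m^{\star}_{t}-m^{\star}_{t,\infty})]\bigr|\leq\sqrt{\mathbb{E}[m_{\bo,t}^{2}]}\,\sqrt{\mathbb{E}[(m^{\star}_{t}-m^{\star}_{t,\infty})^{2}]}\longrightarrow 0,
\end{equation*}
the second factor vanishing because convergence in probability plus the uniform $L^{2}$ bound on $m^{\star}_{t}$ implies $L^{2}$ convergence. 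Hence $\mathbb{E}[m_{\bo,t}m^{\star}_{t}]=m^{\star}_{t,\infty}\mathbb{E}[m_{\bo,t}]+o(1)$. Finally the first–moment Nishimori identity already used in the main text, $\mathbb{E}[m_{\bo,t}]=\mathbb{E}[m^{\star}_{t}]\to m^{\star}_{t,\infty}$, gives $\lim\mathbb{E}[m_{\bo,t}^{2}]=(m^{\star}_{t,\infty})^{2}=(\lim\mathbb{E}[m_{\bo,t}])^{2}$, which is the desired self–averaging.

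The delicate point is the uniform integrability needed in the third paragraph to pass from convergence in probability of $m^{\star}_{t}$ to convergence of the product expectation; this is where the companion concentration of $\|\hat{\studentweight}_{\bo}\|^{2}/d$ (itself a consequence of the Overlap Concentration Lemma applied to $t=\bo$) is essential. An alternative would be to apply the strong replica–symmetry argument of \cite{JeanOverlap} directly to the enlarged Gibbs measure involving $(\projectedteacher,\studentweight^{(1)},\studentweight^{(2)},\studentweight_{t})$ and conclude concentration of $m_{\bo,t}$ from a single large–deviation bound, but the two–line Nishimori route above is considerably more transparent.
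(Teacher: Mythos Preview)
Your proposal is correct and follows the same overall strategy as the paper: two--replica representation, Nishimori swap $\studentweight^{(2)}\leftrightarrow\projectedteacher$ yielding $\mathbb{E}[m_{\bo,t}^{2}]=\mathbb{E}[m_{\bo,t}\,m^{\star}_{t}]$, and then Cauchy--Schwarz combined with the known concentration of $m^{\star}_{t}$.

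The only difference is in how Cauchy--Schwarz is deployed. You split $\mathbb{E}[m_{\bo,t}\,m^{\star}_{t}]=m^{\star}_{t,\infty}\mathbb{E}[m_{\bo,t}]+\mathbb{E}[m_{\bo,t}(m^{\star}_{t}-m^{\star}_{t,\infty})]$ and bound the remainder, which forces you to control $\mathbb{E}[m_{\bo,t}^{2}]$ separately via a preliminary Cauchy--Schwarz on the norms $\|\hat{\studentweight}_{\bo}\|^{2}/d$ and $\|\studentweight_{t}\|^{2}/d$. The paper instead applies Cauchy--Schwarz directly to the identity $\mathbb{E}[m_{\bo,t}^{2}]=\mathbb{E}[m_{\bo,t}\,m^{\star}_{t}]$ to get the self--bounding inequality
\[
\mathbb{E}\bigl[m_{\bo,t}^{2}\bigr]^{2}\leq\mathbb{E}\bigl[m_{\bo,t}^{2}\bigr]\,\mathbb{E}\bigl[(m^{\star}_{t})^{2}\bigr]
\quad\Longrightarrow\quad
\mathbb{E}\bigl[m_{\bo,t}^{2}\bigr]\leq\mathbb{E}\bigl[(m^{\star}_{t})^{2}\bigr]\to (m^{\star}_{t,\infty})^{2},
\]
and then closes with the trivial lower bound $\mathbb{E}[m_{\bo,t}^{2}]\geq(\mathbb{E}[m_{\bo,t}])^{2}\to(m^{\star}_{t,\infty})^{2}$. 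This is slicker because the boundedness of $\mathbb{E}[m_{\bo,t}^{2}]$ comes for free, eliminating the preliminary norm--bound step you flagged as ``the delicate point''. Both routes still rely on $m^{\star}_{t}$ concentrating in $L^{2}$ (not merely in probability), which neither proof fully justifies from the Overlap Concentration Lemma as stated.
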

\begin{proof}
The proof again uses Nishimori identity. 
\begin{eqnarray}
{\mathbb E}\left[\left(\frac{\hat{\studentweight}_{\bo}^{\top} \studentweight_{t}}{d}\right)^2\right] &=& {\mathbb E}\left[\left(\frac{\hat{\studentweight}_{\bo}^{\top} \studentweight_{t}}{d}\right)\left(\frac{\hat{\studentweight}_{\bo}^{\top} \studentweight_{t}}{d}\right)\right] \\ &=& 
{\mathbb E}_{\mathcal D}\left[\left(\frac{ {\mathbb E}_{\hat{\studentweight}|\mathcal D} \hat{\studentweight}^{\top} \studentweight_{t}}{d}\right)\left(\frac{ {\mathbb E}_{\hat{\studentweight}|\mathcal D} \hat{\studentweight}\cdot \studentweight_{t}}{d}\right)\right] \\
&=& 
{\mathbb E}_{\mathcal D}
{\mathbb E}_{\hat{\studentweight}_1,\hat{\studentweight}_2|\mathcal D}
\left[\left(\frac{\hat{\studentweight}_1^{\top} \studentweight_{t}}{d}\right)\left(\frac{ \hat{\studentweight}_2\cdot \studentweight_{t}}{d}\right)\right] \\
&=& 
{\mathbb E}_{\mathcal D,\projectedteacher}
\left[\left(\frac{\projectedteacher^{\top} \studentweight_{t}}{d}\right)\left(\frac{ {\mathbb E}_{\hat{\studentweight}|\mathcal D} \hat{\studentweight}\cdot \studentweight_{t}}{d}\right)\right] \\
&=& 
{\mathbb E}_{\mathcal D,\projectedteacher}
\left[\left(\frac{\projectedteacher^{\top} \studentweight_{t}}{d}\right)\left(\frac{ {\vec w}_{\bo}^{\top} \studentweight_{t}}{d}\right)\right] 
\end{eqnarray}
Then, from Cauchy-Schwartz we have
\begin{eqnarray}
{\mathbb E}\left[\left(\frac{\hat{\studentweight}_{\bo}^{\top} \studentweight_{t}}{d}\right)^2\right]^2 &\le& {\mathbb E}\left[\left(\frac{\hat{\studentweight}_{\bo}^{\top} \studentweight_{t}}{d}\right)^2\right]
{\mathbb E}\left[\left(\frac{\projectedteacher^{\top} \studentweight_{t}}{d}\right)^2\right] \\
{\mathbb E}\left[\left(\frac{\hat{\studentweight}_{\bo}^{\top} \studentweight_{t}}{d}\right)^2\right] &\le&
{\mathbb E}\left[\left(\frac{\projectedteacher^{\top} \studentweight_{t}}{d}\right)^2\right]
\end{eqnarray}
and as $d \to \infty$, we can use the concentration of the right hand side to $m^{\star}_t$ to obtain
\begin{eqnarray}
\lim_{d \to \infty} {\mathbb E}\left[\left(\frac{\hat{\studentweight}_{\bo}^{\top} \studentweight_{t}}{d}\right)^2\right] &\le&
(m^{\star}_t)^2
\end{eqnarray}
so that, given the second moment has to be larger or equal to its (squared) mean: 
\begin{eqnarray}
\lim_{d \to \infty} {\mathbb E}\left[\left(\frac{\hat{\studentweight}_{\bo}^{\top} \studentweight_{t}}{d}\right)^2\right] &=& (m^{\star}_t)^2
\end{eqnarray}
\end{proof}
We have thus shown that $m^{\star}_t = m_{\bo, t}$, proving Lemma~\ref{lemma:bo_t_density}.

\noindent \paragraph{Computing the conditional variance} Fix now the confidence $\hat{f}_t = \ell$, the local field of the estimator $t$ is $\nu_t := \sigma_{\hat{\tau}_t}^{-1}(\ell)$.
The conditional distribution of the Bayes-optimal local field $\lambda_{\bo}$ is a Gaussian $\mathcal{N}(\sfrac{m^{\star}_t}{q^{\star}_t} \nu_t, q_{\bo}^{\star} - \sfrac{{m^{\star}_t}^2}{q^{\star}_t})$. Thus,
\begin{equation}
    \mathbb{E} \left( \hat{f}_{\bo}^2 | \hat{f} = \ell \right) = \int \dd z \sigma_{v^{\star}_{\bo} + \noisevar + \addnoisevar}(z)^2 \mathcal{N}(z | \sfrac{m^{\star}_t}{q^{\star}_t} \nu_t, q_{\bo}^{\star} - \sfrac{{m^{\star}_t}^2}{q^{\star}_t})
\end{equation}
The last step to prove eq.~\eqref{eq:conditional_variance_bo} is to show that
$\mathbb{E} \left( \hat{f}_{\bo} | \hat{f}_{t} = \ell \right) = \ell - \Delta_{\ell}$ : 
\begin{align*}
    \mathbb{E} \left( \hat{f}_{\bo} | \hat{f}_{t} = \ell \right) &= \int \sigma_{\noisevar + \addnoisevar + \hat{v}_{\bo}^*}(z) \mathcal{N}( z | \sfrac{m^{\star}_t}{q^{\star}_t} \nu_t , q_{\bo}^{\star} - \sfrac{{m^{\star}_t}^2}{q^{\star}_t}) \dd z  \\
    &= \sigma_{\noisevar + \addnoisevar + \hat{v}_{\bo}^* + q_{\bo}^{\star} - \sfrac{{m^{\star}_t}^2}{q^{\star}_t}}( \sfrac{m^{\star}_t}{q^{\star}_t} \nu_t ) \\
    &= \sigma_{\noisevar + \addnoisevar + \rho - \sfrac{{m^{\star}_t}^2}{q^{\star}_t}}( \sfrac{m^{\star}_t}{q^{\star}_t} \nu_t ) = \mathbb{E} \left( f_{\star} | \hat{f}_t = \ell \right) = \ell - \Delta_{\ell}
\end{align*}
since, due to Bayes optimality, $\hat{v}_{\bo}^* = \rho - q_{\bo}^{\star}$.

\newpage
\section{Additional numerical evaluations}
\label{app:other_settings}

\subsection{Calibration at different levels}

\begin{figure}[h!]
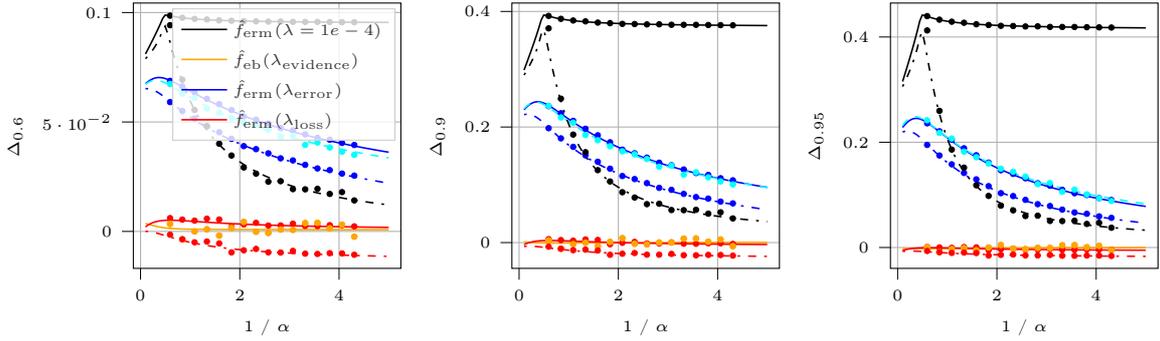

    \centering
    \def\figwidth{0.3\columnwidth}
    \def\figheight{0.3\columnwidth}
    
    \input{Figures/exp_calibration_p=0.6.tex}
    \input{Figures/exp_calibration_p=0.9.tex}
    \input{Figures/exp_calibration_p=0.95.tex}
    \caption{Calibration of several estimators in the same setting as Figure~\ref{fig:test_errors_n_over_p=2.0} at level $\ell = 0.6$ (Left), $\ell = 0.9$ (Middle), $\ell = 0.95$ (Right). Dashed lines correspond to $\hat{f}_{\lap}$. We observe the same phenomenology as in Figure~\ref{fig:test_errors_n_over_p=2.0}, as $\hat{f}_{\lap}$ tends to be underconfident, and $\hat{f}_{\empbayes}(\lambdaevidence)$ is the best calibrated estimator across all levels. Dots correspond to experiments at $d = 200$.}
    \label{fig:my_label}
\end{figure}

\subsection{Additional setting : $\noisevar = 0, \sfrac{n}{d} = 10.0$}
\label{app:add_setting_1}
\begin{figure*}[h!]
    \centering
    \def\figwidth{0.3\columnwidth}
    \def\figheight{0.3\columnwidth}

    \input{Figures/supplementary/n_over_p=10/test_errors_n_over_p=10.0.tex}
    \input{Figures/supplementary/n_over_p=10/calibration_p=0.75_n_over_p=10.0}
    \input{Figures/supplementary/n_over_p=10/conditional_variance_bo_p=0.75_n_over_p=10.0.tex}

    \caption{(\textbf{Left}) Test error of the estimators as a function of $\sfrac{1}{\alpha}$ in the setting of Section~\ref{app:add_setting_1} : $\| \vec{\theta}_{\star} \|^2 = 1, \noisevar = 0, \sfrac{n}{d} = 10$. (\textbf{Middle}) Calibration of the estimators. (\textbf{Right}) Variance of $\hat{f}_{\bo}$ conditioned on $\hat{f} = 0.75$ for the different estimators. }
    \label{fig:add_figures_n_over_p=10.0}
\end{figure*}

\begin{figure*}[h!]
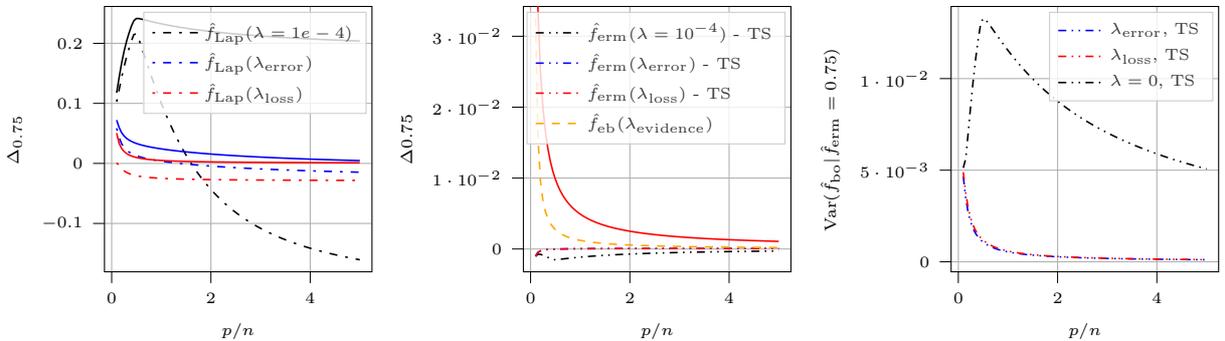

    \centering
    \def\figwidth{0.3\columnwidth}
    \def\figheight{0.3\columnwidth}

    \input{Figures/supplementary/n_over_p=10/calibration_p=0.75_laplace_n_over_p=10.0.tex}
    \input{Figures/supplementary/n_over_p=10/calibration_temp_scaling_p=0.75_n_over_p=10.0.tex}
    \input{Figures/supplementary/n_over_p=10/conditional_variance_bo_temp_scaling_n_over_p=10.0.tex}

    \caption{(\textbf{Left}) Calibration of $\hat{f}_{\lap}$ and $\hat{f}_{\erm}$ in the setting of Section~\ref{app:add_setting_1}. (\textbf{Middle}) Calibration of $\hat{f}_{\erm}$ after temperature scaling. Curves for $\lambdaerror$ and $\lambdaloss$ are indistinguishable on the plot. (\textbf{Right}) Variance of $\hat{f}_{\bo}$ conditioned on the confidence of temperature scaling.}
    \label{fig:add_figures_n_over_p=10.0_laplace_ts}    
\end{figure*}

In this section, we consider a setting where $\noisevar = 0.0$. This allows us to consider a setting where the test error of our estimators will be lower, as we reduce the noise in the teacher and increase the amount of training data. This is confirmed by the first panel of Figure~\ref{fig:add_figures_n_over_p=10.0}, where the test error of the estimators is smaller than in Figure~\ref{fig:test_errors_n_over_p=2.0}. Moreover, compared to the setting of Figure~\ref{fig:test_errors_n_over_p=2.0}, the curves for $\hat{f}_{\erm}(\lambdaerror), \hat{f}_{\erm}(\lambdaloss)$ and $\hat{f}_{\empbayes}(\lambdaevidence)$ are much closer. 
Looking at the second panel, we note that as before, doing ERM with $\lambdaloss$ or empirical-Bayes with $\lambdaevidence$ yields the best calibration. However, the calibration curves $\Delta_{0.75}$ do not exhibit the \textit{double descent}-like behaviour shown in Figure~\ref{fig:test_errors_n_over_p=2.0}. 
On Figure~\ref{fig:add_figures_n_over_p=10.0_laplace_ts}, we see the calibration of $\hat{f}_{\lap}$ (left plot) and temperature scaling (center). We see that in this setting, $\hat{f}_{\lap}$ yields underconfident estimators for $\sfrac{p}{n}$ large enough. On the other hand, temperature scaling yields a well-calibrated estimator, whether we apply it on $\hat{f}_{\erm}(\lambda = 0)$ or $\hat{f}_{\erm}(\lambdaerror)$

\subsection{Additional setting 2 : $\noisevar = 0, \sfrac{n}{d} = 20, \| \mathbf{\theta}_* \|^2 = 50$}
\label{app:add_setting_2}

In the previous plots, we defined $\mathbf{\theta}_* = 1$. This is of course not a limitation of our model and we can assume any norm for the teacher. In this section, we will assume $\| \mathbf{\theta}_* \|^2 = 50$. This allows us to significantly reduce the noise in the data. Indeed, as $\| \mathbf{\theta}_* \|^2 \to \infty$, the label becomes deterministic in the input. As before, figures~\ref{fig:add_figures_n_over_p=20.0} and \ref{fig:add_figures_n_over_p=10.0_laplace_ts} show the test error, calibration and variance for the different estimators. In the left panel of Figure~\ref{fig:add_figures_n_over_p=20.0_laplace_ts}, we observe that $\hat{f}_{\lap}$ with $\lambda \in \{ \lambdaerror, \lambdaloss, 10 ^{-4} \}$ systematically under-confident for $p/n$ large enough. As with the previous settings, we also note that $\hat{f}_{\erm}(\lambdaerror)$ used in combination with temperature scaling is the most competitive estimator as it yields very good test error and calibration.  

\begin{figure*}[ht]
    \centering
    \def\figwidth{0.3\columnwidth}
    \def\figheight{0.3\columnwidth}

    \begin{tikzpicture}
\tikzstyle{every node}=[font=\tiny]

\definecolor{darkgray176}{RGB}{176,176,176}
\definecolor{gray}{RGB}{128,128,128}
\definecolor{lightgray204}{RGB}{204,204,204}
\definecolor{orange}{RGB}{255,165,0}

\begin{axis}[
height=\figheight,
legend cell align={left},
legend style={fill opacity=0.8, draw opacity=1, text opacity=1, draw=lightgray204},
tick align=outside,
tick pos=left,
width=\figwidth,
x grid style={darkgray176},
xlabel={$p / n$},
xmajorgrids,
xmin=-0.145, xmax=5.245,
xtick style={color=black},
y grid style={darkgray176},
ylabel={\(\displaystyle \varepsilon_g\)},
ymajorgrids,
ymin=0.0873477382663537, ymax=0.149895613528467,
ytick style={color=black}
]
\addplot [semithick, black]
table {%
0.1 0.14705252828928
0.45 0.123447910499816
0.8 0.110092049985006
1.15 0.104351508644367
1.5 0.101138088722832
1.85 0.099079988527156
2.2 0.0976478702579267
2.55 0.096593384739216
2.9 0.0957843639186934
3.25 0.0951439503941508
3.6 0.0946243582065108
3.95 0.0941943129038997
4.3 0.093832486895555
4.65 0.0935238305059264
5 0.093257419861089
};
\addlegendentry{$\hat{f}_{\erm}(\lambda = 10^{-4})$}
\addplot [semithick, gray]
table {%
0.1 0.132858814013325
0.45 0.0971091793012206
0.8 0.093693722765965
1.15 0.092403452241728
1.5 0.0917252999097901
1.85 0.0913071495639692
2.2 0.091023517916674
2.55 0.0908184839362717
2.9 0.0906633515271793
3.25 0.0905418787471711
3.6 0.0904441825441139
3.95 0.0903639041452583
4.3 0.0902967666940624
4.65 0.0902397876267469
5 0.0901908235055407
};
\addlegendentry{$\hat{f}_{\bo}$}
\addplot [semithick, red]
table {%
0.1 0.137589968852639
0.45 0.100923769444448
0.8 0.0966856637913891
1.15 0.0948882191465593
1.5 0.0938616289164673
1.85 0.0931875076978088
2.2 0.0927071716724154
2.55 0.0923459634599993
2.9 0.0920637034395676
3.25 0.091836676694035
3.6 0.0916499084007281
3.95 0.0914934439573224
4.3 0.0913603911288212
4.65 0.0912458171217036
5 0.0911460938942609
};
\addlegendentry{$\hat{f}_{\erm}(\lambda_{\rm loss})$}
\addplot [semithick, blue]
table {%
0.1 0.137799479499874
0.45 0.099787013044243
0.8 0.0956450544223995
1.15 0.0940056260016959
1.5 0.0931132072589547
1.85 0.0925466447914887
2.2 0.0921525195327662
2.55 0.0918611793473844
2.9 0.0916362855385032
3.25 0.0914569551702855
3.6 0.0913103104070202
3.95 0.0911879461739863
4.3 0.0910841462217404
4.65 0.0909948764899563
5 0.0909172060513411
};
\addlegendentry{$\hat{f}_{\erm}(\lambda_{\rm error})$}
\addplot [semithick, orange, dashed]
table {%
0.1 0.147036692718623
0.45 0.11488882000905
0.8 0.103644555490741
1.15 0.099348086557094
1.5 0.097081252410513
1.85 0.0956764736209821
2.2 0.0947217774727732
2.55 0.0940291899780826
2.9 0.0935058008370034
3.25 0.0930943078067685
3.6 0.0927641892324531
3.95 0.0924918033200538
4.3 0.0922638390759112
4.65 0.0920703027617972
5 0.0919039687594985
};
\addlegendentry{$\hat{f}_{\empbayes}(\lambda_{\rm evidence})$}
\addplot [semithick, blue, dashed]
table {%
0.1 0.137821996978902
0.45 0.0998534165045269
0.8 0.095713950550578
1.15 0.0940734711671764
1.5 0.093178714132605
1.85 0.092609527149394
2.2 0.0922126798824099
2.55 0.0919187006369474
2.9 0.0916922185998854
3.25 0.0915104428875782
3.6 0.0913623822690686
3.95 0.0912378124145296
4.3 0.0911327601605914
4.65 0.0910423336753896
5 0.0909627493625637
};
\addlegendentry{$\hat{f}_{\empbayes}(\lambda_{\rm error})$}
\end{axis}

\end{tikzpicture}
    \begin{tikzpicture}
\tikzstyle{every node}=[font=\tiny]

\definecolor{darkgray176}{RGB}{176,176,176}
\definecolor{lightgray204}{RGB}{204,204,204}
\definecolor{orange}{RGB}{255,165,0}

\begin{axis}[
height=\figheight,
legend cell align={left},
legend style={fill opacity=0.8, draw opacity=1, text opacity=1, draw=lightgray204},
tick align=outside,
tick pos=left,
width=\figwidth,
x grid style={darkgray176},
xlabel={$p / n$},
xmajorgrids,
xmin=-0.145, xmax=5.245,
xtick style={color=black},
y grid style={darkgray176},
ylabel={\(\displaystyle \Delta_{0.75}\)},
ymajorgrids,
ymin=-0.113101820590194, ymax=0.201168325504033,
ytick style={color=black}
]
\addplot [semithick, black]
table {%
0.1 0.107927852076012
0.45 0.186883318863386
0.8 0.168728067561377
1.15 0.159826505683534
1.5 0.154479219084355
1.85 0.15089794122368
2.2 0.148327444500133
2.55 0.146391055912861
2.9 0.144879146886293
3.25 0.143665604871376
3.6 0.142669797788127
3.95 0.141837872888306
4.3 0.141132376590407
4.65 0.140526483747188
5 0.140000466025213
};
\addplot [semithick, blue]
table {%
0.1 -0.0398842699444951
0.45 -0.0860753851322875
0.8 -0.0805423109445055
1.15 -0.0741669530971241
1.5 -0.0686715181442888
1.85 -0.0640252535066516
2.2 -0.0600567714267223
2.55 -0.0566188304326924
2.9 -0.0536004041398175
3.25 -0.0509131897722155
3.6 -0.0485122904573859
3.95 -0.0463339211719047
4.3 -0.0443470407913288
4.65 -0.04252295809461
5 -0.0408387777697945
};
\addplot [semithick, red]
table {%
0.1 -0.0173635249663473
0.45 -0.0183207992085063
0.8 -0.012103497697697
1.15 -0.00804115635145564
1.5 -0.00522574397118292
1.85 -0.00316710218796878
2.2 -0.00159943600948087
2.55 -0.00036750401292418
2.9 0.000625134981122266
3.25 0.00144146760217401
3.6 0.00212423927619965
3.95 0.00270351336328178
4.3 0.00320103893451473
4.65 0.00363286995544565
5 0.00401111870020232
};
\addplot [semithick, orange, dashed]
table {%
0.1 -0.0156839243888788
0.45 -0.0132357919196072
0.8 -0.0133569305137629
1.15 -0.0135805244403967
1.5 -0.0137193862758735
1.85 -0.0137258633094517
2.2 -0.0136310589363685
2.55 -0.013530232940737
2.9 -0.0133412373475514
3.25 -0.0146148242638778
3.6 -0.0159279303991774
3.95 -0.0170605658712411
4.3 -0.0179675742550474
4.65 -0.018740669267151
5 -0.0194057251778565
};
\addplot [semithick, blue, dashed]
table {%
0.1 -0.0475607183319438
0.45 -0.0988168139495473
0.8 -0.0956378113882264
1.15 -0.0910514847361321
1.5 -0.0869514157889513
1.85 -0.0835653946489198
2.2 -0.0806518595585091
2.55 -0.0781479336172007
2.9 -0.0760132040672916
3.25 -0.0740924997554584
3.6 -0.0724368070385896
3.95 -0.070906847385801
4.3 -0.0695646508641224
4.65 -0.0683523191474319
5 -0.0672020477558956
};
\end{axis}

\end{tikzpicture}
    \begin{tikzpicture}
\tikzstyle{every node}=[font=\tiny]

\definecolor{darkgray176}{RGB}{176,176,176}
\definecolor{lightgray204}{RGB}{204,204,204}
\definecolor{orange}{RGB}{255,165,0}

\begin{axis}[
height=\figheight,
legend cell align={left},
legend style={fill opacity=0.8, draw opacity=1, text opacity=1, draw=lightgray204},
tick align=outside,
tick pos=left,
width=\figwidth,
x grid style={darkgray176},
xlabel={$p / n$},
xmajorgrids,
xmin=-0.145, xmax=5.245,
xtick style={color=black},
ylabel={\(\displaystyle {\rm Var}(\hat{f}_{\rm bo} | \hat{f} = 0.75)\)},
ymajorgrids,
ymin=-0.00181409454883606, ymax=0.0641862654701949,
ytick style={color=black}
]
\addplot [semithick, black]
table {%
0.1 0.025950516466403
0.45 0.0611862491056935
0.8 0.0437280412561672
1.15 0.034048189160786
1.5 0.0279027742800436
1.85 0.023653333918339
2.2 0.0205388601771112
2.55 0.0181578159961969
2.9 0.0162782122258597
3.25 0.0147566733778146
3.6 0.0134997233019415
3.95 0.0124438317157602
4.3 0.0115443170774449
4.65 0.0107688266271428
5 0.010093358872353
};
\addplot [semithick, blue]
table {%
0.1 0.00527747812401902
0.45 0.00280059457627568
0.8 0.00222715225943426
1.15 0.00196264841651006
1.5 0.00179552390377336
1.85 0.00167436150778943
2.2 0.00157983114825455
2.55 0.00150270170062705
2.9 0.00143785158068344
3.25 0.00138219520024663
3.6 0.00133349361772461
3.95 0.00129046917597087
4.3 0.00125202369045818
4.65 0.00121737496370566
5 0.00118592181566535
};
\addplot [semithick, blue, dashed]
table {%
0.1 0.00610336984650794
0.45 0.00386902954240187
0.8 0.00321405095168625
1.15 0.00289873402304697
1.5 0.00269381447017214
1.85 0.00253861926515242
2.2 0.00241528524076884
2.55 0.0023123986560718
2.9 0.00222566227479404
3.25 0.00214855851211659
3.6 0.00208157391613073
3.95 0.00202018551329652
4.3 0.00196622020196746
4.65 0.00191709834289289
5 0.00187047734115942
};
\addplot [semithick, red]
table {%
0.1 0.00582434840590695
0.45 0.00657565125453186
0.8 0.00557410205108311
1.15 0.00481992782579199
1.5 0.00425111208907736
1.85 0.00380878760790115
2.2 0.00345539335350475
2.55 0.00316672085599201
2.9 0.00292657303930421
3.25 0.00272372420092337
3.6 0.00255015007172921
3.95 0.00239996829845057
4.3 0.00226876970803336
4.65 0.00215318301354051
5 0.00205058772040712
};
\addplot [semithick, orange, dashed]
table {%
0.1 0.024824282873214
0.45 0.0367017049228104
0.8 0.0203827513931052
1.15 0.0139298487743965
1.5 0.010550464319913
1.85 0.00848885472071215
2.2 0.00710701686467363
2.55 0.00611766900028177
2.9 0.00539668417414074
3.25 0.00486811046514457
3.6 0.00440156871406949
3.95 0.00401994301118447
4.3 0.00370271673504341
4.65 0.00343462844357378
5 0.00320552086966375
};
\end{axis}

\end{tikzpicture}

    \caption{(\textbf{Left}) Test error of the estimators as a function of $\sfrac{1}{\alpha}$ in the setting described in section~\ref{app:add_setting_2}. (\textbf{Middle}) Calibration of the estimators. (\textbf{Right}) Variance of $\hat{f}_{\bo}$ conditioned on $\hat{f} = 0.75$ for the different estimators.}
    \label{fig:add_figures_n_over_p=20.0}
\end{figure*}
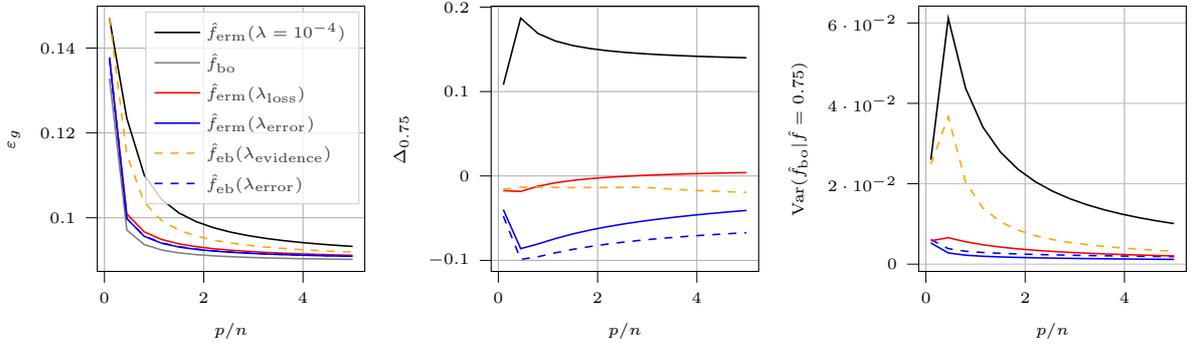

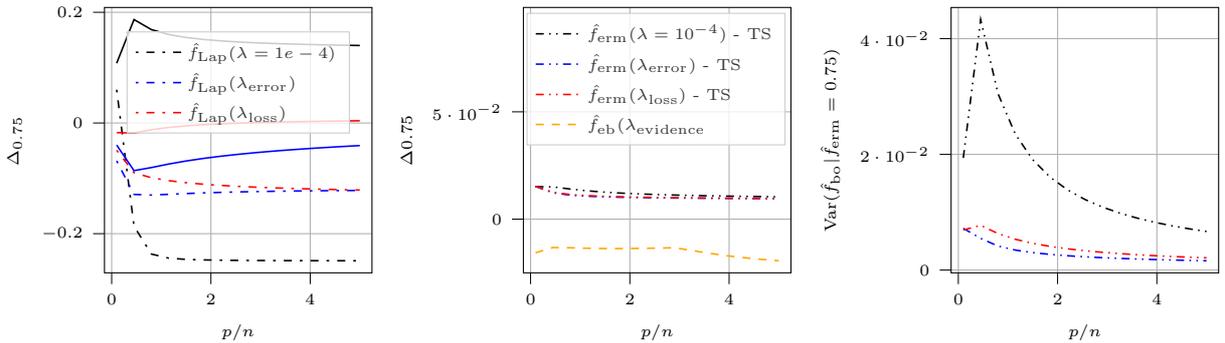
\begin{figure*}[ht]
    \centering
    \def\figwidth{0.3\columnwidth}
    \def\figheight{0.3\columnwidth}

    \begin{tikzpicture}
\tikzstyle{every node}=[font=\tiny]

\definecolor{darkgray176}{RGB}{176,176,176}
\definecolor{lightgray204}{RGB}{204,204,204}

\begin{axis}[
height=\figheight,
legend cell align={left},
legend style={
  fill opacity=0.8,
  draw opacity=1,
  text opacity=1,
  at={(0.5,0.91)},
  anchor=north,
  draw=lightgray204
},
tick align=outside,
tick pos=left,
width=\figwidth,
x grid style={darkgray176},
xlabel={$p / n$},
xmajorgrids,
xmin=-0.145, xmax=5.245,
xtick style={color=black},
y grid style={darkgray176},
ylabel={\(\displaystyle \Delta_{0.75}\)},
ymajorgrids,
ymin=-0.270719185830655, ymax=0.208673914325007,
ytick style={color=black}
]
\addplot [semithick, black, dash pattern=on 1pt off 3pt on 3pt off 3pt]
table {%
0.1 0.0600679819012684
0.45 -0.186402449992992
0.8 -0.236804220313708
1.15 -0.244241038556245
1.5 -0.246453168303106
1.85 -0.247407594538201
2.2 -0.247916135795306
2.55 -0.248225239241184
2.9 -0.248430531926633
3.25 -0.248575762394409
3.6 -0.248683452228364
3.95 -0.248766247131381
4.3 -0.248831754675017
4.65 -0.248884802077458
5 -0.248928590369034
};
\addlegendentry{$\hat{f}_{\lap}(\lambda = 1e-4)$}
\addplot [semithick, blue, dash pattern=on 1pt off 3pt on 3pt off 3pt]
table {%
0.1 -0.0684807774780954
0.45 -0.129084352899877
0.8 -0.130198303109129
1.15 -0.128883046977121
1.5 -0.127499488950033
1.85 -0.126324743981109
2.2 -0.125365011006587
2.55 -0.124585258936707
2.9 -0.12394991893608
3.25 -0.123427850756378
3.6 -0.123001880593275
3.95 -0.122648855525052
4.3 -0.122356922178655
4.65 -0.122115407737043
5 -0.121915886524648
};
\addlegendentry{$\hat{f}_{\lap}(\lambda_{\rm error})$}
\addplot [semithick, red, dash pattern=on 1pt off 3pt on 3pt off 3pt]
table {%
0.1 -0.0493310443692409
0.45 -0.0898704361617135
0.8 -0.0989360213883236
1.15 -0.104225663087576
1.5 -0.107903678626277
1.85 -0.110658297097667
2.2 -0.112812464004737
2.55 -0.114547732250084
2.9 -0.115977029843495
3.25 -0.117175283190375
3.6 -0.11819450632901
3.95 -0.119072078633275
4.3 -0.119835612364849
4.65 -0.120505959354408
5 -0.121099169119331
};
\addlegendentry{$\hat{f}_{\lap}(\lambda_{\rm loss})$}
\addplot [semithick, black]
table {%
0.1 0.107927852076012
0.45 0.186883318863386
0.8 0.168728067561377
1.15 0.159826505683534
1.5 0.154479219084355
1.85 0.15089794122368
2.2 0.148327444500133
2.55 0.146391055912861
2.9 0.144879146886293
3.25 0.143665604871376
3.6 0.142669797788127
3.95 0.141837872888306
4.3 0.141132376590407
4.65 0.140526483747188
5 0.140000466025213
};
\addplot [semithick, blue]
table {%
0.1 -0.0398842699444951
0.45 -0.0860753851322875
0.8 -0.0805423109445055
1.15 -0.0741669530971241
1.5 -0.0686715181442888
1.85 -0.0640252535066516
2.2 -0.0600567714267223
2.55 -0.0566188304326924
2.9 -0.0536004041398175
3.25 -0.0509131897722155
3.6 -0.0485122904573859
3.95 -0.0463339211719047
4.3 -0.0443470407913288
4.65 -0.04252295809461
5 -0.0408387777697945
};
\addplot [semithick, red]
table {%
0.1 -0.0173635249663473
0.45 -0.0183207992085063
0.8 -0.012103497697697
1.15 -0.00804115635145564
1.5 -0.00522574397118292
1.85 -0.00316710218796878
2.2 -0.00159943600948087
2.55 -0.00036750401292418
2.9 0.000625134981122266
3.25 0.00144146760217401
3.6 0.00212423927619965
3.95 0.00270351336328178
4.3 0.00320103893451473
4.65 0.00363286995544565
5 0.00401111870020232
};
\end{axis}

\end{tikzpicture}
    \begin{tikzpicture}
\tikzstyle{every node}=[font=\tiny]

\definecolor{darkgray176}{RGB}{176,176,176}
\definecolor{lightgray204}{RGB}{204,204,204}
\definecolor{orange}{RGB}{255,165,0}

\begin{axis}[
height=\figheight,
legend cell align={left},
legend style={fill opacity=0.8, draw opacity=1, text opacity=1, draw=lightgray204},
tick align=outside,
tick pos=left,
width=\figwidth,
x grid style={darkgray176},
xlabel={\(\displaystyle p/n\)},
xmajorgrids,
xmin=-0.145, xmax=5.245,
xtick style={color=black},
y grid style={darkgray176},
ylabel={\(\displaystyle \Delta\)0.75},
ymajorgrids,
ymin=-0.0250279069121927, ymax=0.0986317711376744,
ytick style={color=black}
]
\addplot [semithick, black, dashdotdotted]
table {%
0.1 0.0151265515980198
0.45 0.0150257803147456
0.8 0.0140028587054122
1.15 0.0131685959693377
1.5 0.0125518915244869
1.85 0.01208802132066
2.2 0.0117291923683863
2.55 0.0114443014856004
2.9 0.0112130063976452
3.25 0.0110216675302565
3.6 0.0108608824849439
3.95 0.0107238750493203
4.3 0.010605868828454
4.65 0.0105030774016839
5 0.0104126119249154
};
\addlegendentry{$\hat{f}_{\erm}(\lambda = 10^{-4})$ - TS}
\addplot [semithick, blue, dashdotdotted]
table {%
0.1 0.0152589797456389
0.45 0.0122544385774268
0.8 0.0111720929288718
1.15 0.0106629579085867
1.5 0.010362976810903
1.85 0.0101644488021516
2.2 0.0100220758166865
2.55 0.00991460130671029
2.9 0.00983036729182918
3.25 0.00976231806685612
3.6 0.00970611102773733
3.95 0.00965890624272181
4.3 0.00961867569895647
4.65 0.00958375356294461
5 0.00955310986823521
};
\addlegendentry{$\hat{f}_{\erm}(\lambda_{\rm error})$ - TS}
\addplot [semithick, red, dashdotdotted]
table {%
0.1 0.0152599404315407
0.45 0.0125059998489114
0.8 0.011470046026855
1.15 0.0109432479814194
1.5 0.0106151682717757
1.85 0.0103885332058733
2.2 0.0102212601250304
2.55 0.010092428617406
2.9 0.00998956142036811
3.25 0.00990552149822288
3.6 0.00983540099859548
3.95 0.00977614089397361
4.3 0.00972529435897007
4.65 0.00968117293533899
5 0.0096425150960352
};
\addlegendentry{$\hat{f}_{\erm}(\lambda_{\rm loss})$ - TS}
\addplot [semithick, orange, dashed]
table {%
0.1 -0.0156839243888788
0.45 -0.0132357919196072
0.8 -0.0133569305137629
1.15 -0.0135805244403967
1.5 -0.0137193862758735
1.85 -0.0137258633094517
2.2 -0.0136310589363685
2.55 -0.013530232940737
2.9 -0.0133412373475514
3.25 -0.0146148242638778
3.6 -0.0159279303991774
3.95 -0.0170605658712411
4.3 -0.0179675742550474
4.65 -0.018740669267151
5 -0.0194057251778565
};
\addlegendentry{$\hat{f}_{\empbayes}(\lambda_{\rm evidence}$}
\end{axis}

\end{tikzpicture}
    \begin{tikzpicture}
\tikzstyle{every node}=[font=\tiny]

\definecolor{darkgray176}{RGB}{176,176,176}
\definecolor{lightgray204}{RGB}{204,204,204}

\begin{axis}[
height=\figheight,
legend cell align={left},
legend style={fill opacity=0.8, draw opacity=1, text opacity=1, draw=lightgray204},
tick align=outside,
tick pos=left,
width=\figwidth,
x grid style={darkgray176},
xlabel={$p / n$},
xmajorgrids,
xmin=-0.145, xmax=5.245,
xtick style={color=black},
y grid style={darkgray176},
ylabel={Var(\(\displaystyle \hat{f}_{\rm bo} | \hat{f}_{\rm erm} = 0.75\))},
ymajorgrids,
ymin=-0.000476965169818861, ymax=0.0453895689225406,
ytick style={color=black}
]
\addplot [semithick, blue, dashdotdotted]
table {%
0.1 0.00721752034284318
0.45 0.00547838043047499
0.8 0.0041450978739902
1.15 0.00345356394230589
1.5 0.00301527343851704
1.85 0.00270614434148575
2.2 0.00247328574271088
2.55 0.00228990388395012
2.9 0.0021407738262591
3.25 0.00201650919467167
3.6 0.00191098644184384
3.95 0.00181998270571004
4.3 0.00174050608270493
4.65 0.00167035581735941
5 0.00160787728892475
};
\addplot [semithick, red, dashdotdotted]
table {%
0.1 0.00692130699169802
0.45 0.00773360843980953
0.8 0.00630508692354093
1.15 0.0053210288417288
1.5 0.0046161453914374
1.85 0.00408667299672294
2.2 0.00367408841748473
2.55 0.00334337359292558
2.9 0.00307227971585999
3.25 0.00284598984633122
3.6 0.00265423047641811
3.95 0.00248965719255057
4.3 0.00234687082935137
4.65 0.00222181460797877
5 0.00211138021491908
};
\addplot [semithick, black, dashdotdotted]
table {%
0.1 0.0193811041789516
0.45 0.043304726463797
0.8 0.0304807232692247
1.15 0.0234645742006905
1.5 0.019064749028184
1.85 0.016053770467099
2.2 0.0138660300930583
2.55 0.0122056158212503
2.9 0.0109029475497564
3.25 0.00985400104038159
3.6 0.00899141998715725
3.95 0.00826970205619215
4.3 0.0076570300518668
4.65 0.00713047324987282
5 0.00667309856153986
};
\end{axis}

\end{tikzpicture}

    \caption{(\textbf{Left}) Calibration of $\hat{f}_{\lap}$ and $\hat{f}_{\erm}$ with the setting described in section~\ref{app:add_setting_2}. (\textbf{Middle}) Calibration of $\hat{f}_{\erm}$ after temperature scaling. Solid red line is $\hat{f}_{\erm}(\lambdaloss)$ before temperature scaling. (\textbf{Right}) Variance of $\hat{f}_{\bo}$ conditioned on the confidence of temperature scaling.}
    \label{fig:add_figures_n_over_p=20.0_laplace_ts}    
\end{figure*}

\end{document}